\newtheorem{assumption}{Assumption}[section]
\newcommand{\vecc}{\boldsymbol{\operatorname{Vec}}}
\newcommand{\argmin}{\boldsymbol{\operatorname{argmin}}}
\newcommand{\cco}{\boldsymbol{\operatorname{\overline{co}}}}
\newcommand{\diag}{\textup{diag}}
\def\va{{\bf a}}
   \def\vx{{\bf x}}
\def\vy{{\bf y}} \def\vz{{\bf z}}
  \def\calC{\mathcal{C}}
\def\calD{\mathcal{D}}  \def\calF{\mathcal{F}}
 \def\calH{\mathcal{H}} \def\calI{\mathcal{I}}
\def\calJ{\mathcal{J}} \def\calK{\mathcal{K}} \def\calL{\mathcal{L}}
\def\calM{\mathcal{M}} \def\calN{\mathcal{N}} 
\def\calP{\mathcal{P}}  
  \def\calU{\mathcal{U}}
 \def\calZ{\mathcal{Z}}
\def \Vap{\varepsilon}
\def \C{\mathbb{C}}
\def \wC{\widehat{\C}}
\def \bmu{\boldsymbol{\mu}}
\def \bz{\breve{\mathbf{z}}}
\def \bx{\breve{\mathbf{x}}}
\def \oM{\overline{\mathcal{M}}}
\def \wy{\widehat{\mathbf{y}}}
\def \wx{\widehat{\mathbf{x}}}
\def \R{\mathbb{R}}
\def \N{\mathbb{N}}
\def \bcalC{\breve \calC}
\def \OZ{\overline{\mathcal{Z}}}
\def \y{\vy}
\def \myQ{Q} % Handle for all instances of the multi-agent objective. Change back to \calQ here if desired.
\begin{document}

\title{Clustering with Distributed Data}

\author{\name Soummya Kar \email soummyak@andrew.cmu.edu\\
\addr Department of Electrical and Computer Engineering,\\
Carnegie Mellon University, Pittsburgh, PA 15213
\AND
\name Brian Swenson \email bswenson@princeton.edu\\
\addr Department of Electrical Engineering,\\
Princeton University, Princeton, NJ 08540}

\editor{}

\maketitle

\begin{abstract}%   <- trailing '%' for backward compatibility of .sty file
We consider $K$-means clustering in networked environments (e.g., internet of things (IoT) and sensor networks) where data is inherently distributed across nodes and processing power at each node may be limited. We consider a clustering algorithm referred to as \emph{networked $K$-means}, or \emph{$NK$-means}, which relies only on local neighborhood information exchange. Information exchange is limited to low-dimensional statistics and not raw data at the agents.
The proposed approach develops a parametric family of multi-agent clustering objectives (parameterized by $\rho$) and associated distributed $NK$-means algorithms (also parameterized by $\rho$). The $NK$-means algorithm with parameter $\rho$ converges to a set of fixed points relative to the associated multi-agent objective (designated as \emph{generalized minima}). By appropriate choice of $\rho$, the set of generalized minima may be brought arbitrarily close to the set of Lloyd's minima. Thus, the $NK$-means algorithm may be used to compute Lloyd's minima of the collective dataset up to arbitrary accuracy.
\end{abstract}

\begin{keywords}
$K$-means clustering, Lloyd's minima, distributed algorithms, distributed machine learning, network information processing
\end{keywords}

\section{Introduction}
\label{introduction}

$K$-means clustering is a tool of fundamental importance in computer science and engineering with a wide range of applications \citep{jain2010data,wu2008top}.
In this paper we are interested in studying algorithms for $K$-means clustering in modern network-based computing environments where data is naturally distributed across nodes and computational power at each node may be limited. Settings of interest include the internet of things (IoT) \citep{xia2012internet}, vehicular networks \citep{hartenstein2008tutorial}, sensor networks \citep{yick2008wireless}, and device-to-device 5G cellular networks \citep{tehrani2014device}.\footnote{We note that there is some disparity between terminology across fields and the meaning of these terms is sometimes conflated. }
A bevy of recent research has focused on developing decentralized algorithms for machine learning in such network-based settings \citep{jakovetic2018convergence,jiang2017collaborative,lian2017can,sahu2018distributed,tang2018d}.

The past decade has seen tremendous growth in research and infrastructure development for the (inherently centralized) cloud computing framework. While beneficial in many applications, cloud computing has limitations and there is a strong trend towards handling more computation and data storage at the periphery of the network on user devices or small data centers close to user devices \citep{shi2016edge,hu2015mobile,satyanarayanan2017emergence}.

The motivation for this trend is driven by several factors. First, given the proliferation in data generated by user devices, it can be impractical to communicate all data to centralized locations for evaluation. For example, current self-driving automobiles generate as much as 1GB of data per second \citep{shi2016edge}. Beyond self-driving vehicles, it is estimated that there will be 50 billion connected IoT devices by the year 2020 \citep{evans2011internet};
handling all data and computations generated by such devices in a centralized fashion results in high latency and an impractical burden on the network infrastructure.
A second motivation comes from the perspective of user privacy and data security: many users are opposed to sharing (possibly sensitive) data with companies or storing such data in centralized locations \citep{house2012consumer}; moreover,
 storing and processing data in a distributed fashion can mitigate security risks \citep{mcmahan2016communication}.

In settings involving wireless device-to-device networks (e.g., sensor networks, vehicular networks, or device-to-device 5G cellular networks), data is often naturally distributed between nodes of the network and distributed computation schemes are critical in ensuring robustness and extending network lifetime \citep{shnayder2004simulating,yu2004energy}.

Formally, in this paper we focus on developing efficient $K$-means clustering algorithms for multi-agent systems in which (i) each agent possesses a set of data points $\calD_i\subset\R^p$ and (ii) agents may exchange information over some preassigned communication graph $G=(V,E)$. At a high level, the objective is for agents to cooperatively cluster the joint dataset $\calD = \calD_1\cup\cdots\cup \calD_N$. (The details of this setting will be made precise in Section \ref{sec:dist-framework}.)

Before proceeding further, we comment on the distributed computation architecture used in our multi-agent setup. Our distributed setup consists of a collection of $M$ agents with (limited) storage, computation, and communication capabilities. Agents may have access to local data (generated by a local data source or acquired through sensing) and agents interact with other agents, by means of message exchanges, over a preassigned connected communication network (possibly sparse) to achieve a common computation or inference goal. (For instance, in a distributed function computation problem, agents may be interested in computing a function of their collective data.) We direct the readers to~\citep{tsitsiklisphd84,tsitsiklisbertsekasathans86,Bertsekas-survey,Kushner-dist,cybenko89,degroot1974reaching} for early work that focuses on a range of collaborative distributed computing and decision-making in such setups. Over the last decade and more recently, there has been renewed interest in such setups and variants, motivated by applications in computing, learning, and optimization in IoT-type setups, ad-hoc networks, peer-to-peer processing, to name a few. Often a prominent feature in these setups is that, to achieve a common decision-making objective on the collective data (could be static or streaming), agents do not directly exchange raw data (possibly very high dimensional) with each other but iterate (compute and communicate) over appropriate \emph{local statistics}, preferably of lower dimension and complexity than their raw data, with the aim of converging to the common quantity of interest. For instance, in a distributed learning or optimization setup, agents may maintain a local copy of the parameter or optimizer that minimizes a global risk function based on their collective data, and these local copies are iteratively updated by means of local computation and inter-agent message exchanges with the goal of converging to the desired optimizer or a reasonable approximation. For a sample of recent advances in the field, we direct the readers to~\citep{BoydGossip,dimakis2010gossip,KarMouraRamanan-Est-2008,Kar-QD-learning-TSP-2012,Giannakis-LMS,Sayed-LMS,
RabbatDistributedStronglyCVX,SayedStochasticOpt,DistributedMirrorDescent,Kozat,NedicStochasticPush,Ram-Nedich-Siam,sahu2015distributed,arxivVersion,ma2015adding,ma2015partitioning,heinze2016dual,zhang2013divide} that study a broad range of distributed decision-making problems in multi-agent setups ranging from distributed parameter estimation to online stochastic optimization.

In general, the problem of computing a globally optimal $K$-means clustering is NP-hard. A popular relaxed solution concept is that of a Lloyd's minimum \citep{lloyd1982least}; in this paper, we will focus on developing distributed algorithms for computing Lloyd's minima. More specifically, in the current distributed $K$-means context, we adopt a multi-agent viewpoint as described above and present a distributed algorithm, referred to as the \emph{networked $K$-means} (or $NK$-means) of the consensus+innovations type~\citep{KarMouraRamanan-Est-2008,kar2013consensus+}. The consensus+innovations type approach is well suited to such distributed setups (see also the relevant family of diffusion algorithms~\citep{Sayed-LMS}) in which each agent maintains a local copy (estimate) of the desired $K$ cluster heads for the collective network data and iteratively updates their local copies by simultaneously assimilating the estimates of the neighboring agents\footnote{The neighborhood of an agent here refers to its communication neighborhood and consists of those agents that can directly communicate with the agent.} (the consensus or agreement potential) and taking a \emph{refinement} step using their local data (the innovation potential).  \\

\noindent \textbf{Main Contributions}. The main contributions of the paper are the following:

1. We propose an algorithm (or, more precisely, a parametric class of algorithms) for $K$-means clustering in networked multi-agent settings with distributed data.
We refer to this algorithm as \emph{networked $K$-means}, or $NK$-means in short.

The proposed class of algorithms is parameterized by $\rho\in \N_+$. Solutions obtained by the algorithm may be brought arbitrarily close to the set of Lloyd's minima by appropriate choice of $\rho$. Our next contribution makes this relationship precise.
%\footnote{\textcolor{red}{Note, BS: This issue may be confusing. When people hear Lloyd's minimum, do they think of the cluster centers or the partition of the dataset? If the partition, then saying we  approximate it arbitrarily well doesn't really make sense. Clarify this.}}

2. We introduce the notion of a generalized Lloyd's minimum---a generalization of the classical Lloyd's minimum adapted to the multi-agent setting. As with the $NK$-means algorithm, the set of generalized Lloyd's minima are parameterized by a parameter $\rho\in\N_{+}$. We show that an instantiation of the $NK$-means algorithm with parameter $\rho$ converges to the set of generalized Lloyd's minima with parameter $\rho$ (Theorem \ref{th:conv}). Moreover, we show that as $\rho\to\infty$, the set of generalized Lloyd's minima approaches the set of classical Lloyd's minima (Theorem \ref{lm:conv_to_Z}).
%Thus, the limit points of the $NK$-means algorithm (with parameter $\rho$) may be made arbitrarily close to the set of Lloyd's minima by appropriate choice of $\rho$. The tradeoff is that the rate of convergence of the algorithm is slower for larger values of $\rho$.

Theorems \ref{lm:conv_to_Z} and \ref{th:conv} together show that the $NK$-means algorithm can be used to compute Lloyd's minima up to arbitrary accuracy.

3. Generalized Lloyd's minima are obtained as (generalized) minima of a $\rho$-relaxed multi-agent $K$-means objective denoted by $\myQ^\rho$. In addition to characterizing the behavior of the set of generalized Lloyd's minima as $\rho\to\infty$, it is shown that the set of global minima of $\myQ^\rho$ converges to the set of global minima of the classical $K$-means objective as $\rho\to\infty$, and a characterization of the rate of convergence is given (Theorem \ref{lm:gloptconv1}).

A formal presentation of these results and additional discussion, including the tradeoffs inherent in the choice of $\rho$, will be given in Section \ref{sec:main_res}.\\

\noindent \textbf{Related Work}.
The problem of clustering in a network-based setting with distributed data was considered in \citep{bandyopadhyay2006clustering} using an approach in which data is replicated at all nodes. Similar approaches were taken in \citep{datta2009approximate,datta2006distributed,datta2006k}. In contrast to these works, the present paper does not rely on replicating data across nodes, which can be impractical in large-scale settings and jeopardize or violate user privacy. The work \citep{di2013fault} considers algorithms for $K$-means clustering in this setting with promising experimental results but does not provide any theoretical analysis. The work  \citep{oliva2013distributed} considers $K$-means clustering in a sensor network setting in which each node holds a single data point (but not a data set). The algorithm relies on finite-time consensus techniques to mimic the centralized $K$-means algorithm. This technique is extended in \citep{qin2017distributed} to consider improved initialization schemes per \citep{arthur2007k}. The work \citep{jagannathan2005privacy} considers a privacy-preserving protocol for $K$-means clustering when data is distributed between two parties.

When the underlying dataset is large, the problem of finding even an approximately optimal solution to the $K$-means clustering problem can be computationally demanding. From this perspective, several papers have considered methods for parallelizing the computation by distributing data among several machines.
The work \citep{balcan2013distributed} considers an approach in which each node computes a coreset (i.e., a subset of the data that serves a good approximation for the purpose of clustering; see \citep{har2004coresets}) of its local data. The individual coresets are then transmitted via an overlaid communication graph to a central node, which determines an approximately optimal solution for the full $K$-means problem. (Alternatively, the network may be flooded with the individual coresets and the solution computed at each node.) The suboptimality of the solution is bounded, and an estimate on the required number of communications is established.
The work \citep{bateni2014distributed} follows a similar approach to \citep{balcan2013distributed} but incorporates balancing constraints.
The works \citep{malkomes2015fast} and \citep{awasthi2017general} propose distributed clustering algorithms that are robust to outliers.
Our work differs from these in several aspects. First, our approach does not require any centralized node or flooding of the network. Moreover, it is based on consensus+innovations techniques,
which have been shown to be robust to errors and disturbances common in network-based settings (e.g., link failures, changes in communication topology, and agents entering or exiting the network) \citep{KarMouraRamanan-Est-2008}. Such approaches are also robust in that there is no central point of failure. Furthermore, our approach does not require explicit sharing of any data points, which can compromise privacy. From a broader perspective, the fundamental motivation for our work differs from these in that we are motivated by applications in IoT-type networks where data is naturally distributed and computational power at each node may be limited.

%A vast body of recent research has focused on algorithms for distributed optimization.
A closely related line of research considers algorithms for distributed optimization. The majority of this research has focused on convex problems \citep{nedic2009distributed,boyd2011distributed,rabbat2004distributed,mota2013d,sahu2018distributed}, though recent research has begun to investigate non-convex problems \citep{scutari2017parallel,sun2016distributed,tatarenko2017non}. The present work may be seen as a contribution in this area in that it develops an algorithm for distributed optimization with a  non-convex non-smooth objective.

%The majority of research in this direction considers objective functions which are separable (across agents) and convex, e.g., \citep{nedic2009distributed,boyd2011distributed,rabbat2004distributed,mota2013d,sahu2018distributed-1}. A more recent research thrust has begun to investigate distributed optimization with non-convex objective functions \citep{scutari2017parallel,sun2016distributed,tatarenko2017non}. The present work contributes to this literature by  developing an algorithm for distributed optimization with a  non-convex non-smooth objective.

We remark that, from a technical perspective, our work differs from many works on $K$-means clustering (centralized or otherwise) in that we make no assumptions on the data beyond the minimum assumption that the dataset has at least $K$-distinct datapoints. In particular, we do not assume that datapoints are distinct. This is necessary to handle degenerate cases that may arise in the multi-agent distributed-data framework. For example, if two agents have access to the same data source there can be redundancy in the collective dataset. This introduces technical challenges into the analysis of algorithms for $K$-means clustering, which are non-trivial to address.

%We remark that the solution obtained by the $K$-means algorithm can be sensitive to initialization. In order to mitigate this issue, \citep{arthur2007k} and \citep{ostrovsky2006effectiveness} proposed initialization schemes which have been shown to perform well in practice. In \citep{bahmani2012scalable} this was extended to large-scale parallel computing frameworks. In this paper we will not consider initialization schemes for the network-based distributed data framework, but we note that adapting similar techniques to the network-based framework may be a productive direction for future research.

%The approach for distributed clustering developed in this paper is based on \emph{consensus + innovations} techniques \citep{kar2013consensus+,hug2015consensus+,kar2012distributed}. In the present work our focus is on studying convergence of the $NK$-means algorithm to the set of Lloyd's minima. While we do not formally address robustness of the algorithm in the present work, we note that consensus + innovations techniques tend to be inherently robustness to errors and disturbances common in a network-based setting (e.g., link failures, changes in communication topology, and agents entering or exiting the network) \citep{kar2012distributed}.

\vspace{1em}
\noindent\textbf{Organization.}
The remainder of the paper is organized as follows. Section \ref{sec:notation} sets up notation. Section \ref{sec:prob-form} reviews the classical $K$-means problem and the notion of a Lloyd's minimum. Section \ref{sec:dist_K_means} introduces our generalized multi-agent $K$-means clustering objective and introduces the notion of a generalized Lloyd's minimum. Section \ref{sec:dist-algorithm} presents the $NK$-means algorithm.  Section \ref{sec:main_res} summarizes the main results of the paper. Section \ref{sec:sims} presents a simple illustrative example. The remaining sections are devoted to the analysis of the $NK$-means algorithm and properties of generalized Lloyd's minima. Section \ref{sec:NKmeansconv} proves that the $NK$-means algorithm with parameter $\rho$ converges to the set of generalized Lloyd's minima with parameter $\rho$. Section \ref{sec:genminimaconv}
%considers properties of generalized Lloyd's minima, and, in particular,
shows that the set of generalized Lloyd's minima with parameter $\rho$ converges to the set classical Lloyd's minima as $\rho\to\infty$. Section \ref{sec:global-min-convergence} shows that global minima of the generalized multi-agent $K$-means objective  converge to the set of global minima of the classical $K$-means objective as $\rho\to\infty$.

\section{Notation} \label{sec:notation}
%The following notation will be used throughout the paper.
We denote by $|\mathcal{X}|$ the cardinality of a finite set $\mathcal{X}$. Let $\R_{+} = [0,\infty)$ and $\N_{+} = \{1,2,\ldots,\}$. Let $\|\cdot\|$ denote the $\ell_{2}$ norm on $\mathbb{R}^{p}$. Given a set $S\subset \R^p$ and a point $x\in \R^p$, let $d(x,S) = \inf_{y\in S} \|x-y\|$.
For $n\in \N_{+}$, let $I_n$ denote the identity matrix of size $n$.
Given a set $S\subset\mathbb{R}^{p}$, let $\cco(S)$ denote the closed convex hull of $S$.
Given a finite set of vectors $\vx_k\in \R^p$, let $\vecc_k(\vx_k)$ denote the vector stacking all vectors $\vx_k$ across the index set.

Given a set $S\subset \R^p$, we say that a collection of sets $\calP=\{\calP^1,\ldots,\calP^K\}$, $\calP^k\subset \R^p$ is a \emph{partition of size $K$} of $S$ if $\bigcup_{k=1}^K \calP^k = S$ and $\calP^k\cap \calP^{k'} = \emptyset$ for all $k,k'\in \{1,\ldots,K\}$, $k\not=k'$. Note that some sets in a partition may be empty.

Given a (undirected) graph $G=(V,E)$ on $M$ agents or nodes, with node set $V$ indexed as $V=\{1,\cdots,M\}$, the associated graph Laplacian is given by $L = D - A$, where $D$ is the degree matrix of the graph and $A$ is the adjacency matrix. The set of neighbors of agent $m$ is given by $\Omega_m = \{i\in V:(i,m)\in E\}$. The Laplacian $L$ is a positive semidefinite matrix. Denoting by $0=\lambda_{1}(L)\leq\lambda_{2}(L)\leq\cdots\leq\lambda_{M}(L)$ the eigenvalues of $L$, we note that $\lambda_{2}(L)>0$ if and only $G$ is connected. A review of spectral graph theory can be found in \citep{chung1997spectral}.

Several symbols will be introduced through the course of presenting and proving the results in the paper. For convenience, a reference list of frequently used symbols is included in Appendix A.

%\vspace{2em}
%Other sets:
%\begin{itemize}
%\item $\calU_m(t)$ is the set of ``viable'' partitions of $\calD_m$ at stage $t$ under the distributed algorithm.\footnote{I think this notation should be changed so it depends explicitly on $\vx(t)$. I.e., $\calU_{\vx(t)}$ as defined later.}
%\item $\C_m$ is the set of all partitions of size $K$ of data $\calD_m$
%\item $\K$ is set of all feasible pairs $(\vx,\calC) = \{(\vx_1,\ldots,\calC_1),\ldots,(\vx_M,\calC_M)\}$, where each $\vx_m$ gives set of cluster centers at agent $m$ and $\calC_m \in \calC_m$.
%\item Given tuple of cluster centers at each agent $\vx \in \R^{MKp}$, $\calU_{\vx}$ is set of ``viable'' partitions of each agents data $\calD_m$. An element of $\calU_{\vx}$ lives in $\C_1\times\cdots\times C_M$
%\item $\overline{\calU}_{\vx}$ is subset of $\calU_{\vx}$ which also satisfies generalized minima condition at $\vx$ (could be empty)
%    \begin{itemize}
%    \item Note: $\vx\in \R^{MKp}$ is a generalized minimum iff $\overline{\calU}_{\vx}$ is nonempty
%    \end{itemize}
%\end{itemize}

\label{notgraph}

\section{Problem Formulation and Preliminaries}
\label{sec:prob-form}
%In this section we introduce the multi-agent setup and review some preliminaries in (centralized) $K$-means clustering.\\

We will now review the classical $K$-means clustering problem, Lloyd's algorithm, and the notion of a Lloyd's minimum. After reviewing these concepts in Sections \ref{sec:k-means-classical}--\ref{sec:lloyds-review}, Section \ref{sec:dist-framework} formally states the $K$-means problem in the networked multi-agent distributed-data framework.

%Section \ref{sec:k-means-classical} reviews the classical $K$-means problem. Section \ref{sec:lloyds-review} reviews Lloyd's algorithm and the notion of Lloyd's minima. Section \ref{sec:dist-framework} formally states the $K$-means problem in the multi-agent distributed-data framework.

\subsection{The Classical $K$-Means Clustering Problem} \label{sec:k-means-classical}
Let $\calD$ denote a finite collection of data points taking values in an Euclidean space $\R^p$, $p\geq 1$ and let $N= |\calD|$.
%Let there be $M$ agents (possibly data centers) each having private access to a of a set of data points taking values in an Euclidean space $\mathbb{R}^{p}$, $p\geq 1$. Denote by $\mathcal{D}_{m}$ the collection of data points at agent $m$, where $m=1,\cdots,M$, and by $\mathcal{D}$ the totality of data points, i.e., $\mathcal{D}=\mathcal{D}_{1}\cup\mathcal{D}_{2}\cup\cdots\cup\mathcal{D}_{M}$. Let $N_m = |\calD_m|$ denote the number of data points at agent $m$.
Given some $K\in \{2,\ldots,N\}$, the $K$-means clustering cost for a tuple of so-called cluster heads\footnote{Unless stated otherwise, we use the terms cluster head and cluster center interchangeably.} $\vx = \{\vx^1,\ldots,\vx^K\}$, $\vx^k \in \R^p$, $k=1,\ldots, K$ is given by
\begin{equation}
\label{Kmeanscost}
\mathcal{F}(\vx)=\sum_{\mathbf{y}\in\mathcal{D}}\min_{k=1,\cdots,K}\|\y-\vx^{k}\|^{2}.
\end{equation}
%where $\mathbf{y}$ denotes the $n$-th data point, $n=1,\cdots,N_{m}$, at the $m$-th agent and $\vx=\vecc(\{\vx^{k}\}_{k=1}^K)$, i.e., $\vx$ stacks the quantities $\{\vx^{k}\}_{k=1}^{K}$ into a vector.
The $K$-means clustering problem consists of solving the non-convex optimization problem (see~\citep{selim1984k})
\begin{equation}
\label{Kmeans}
\inf_{\{\vx^1,\ldots,\vx^K\}}\mathcal{F}(\vx),
\end{equation}
where the optimization is taken over the set of feasible cluster heads $\vx^k\in \R^p$, $k=1,\ldots,K$.
We remark that a global minimizer exists (i.e., the infimum in \eqref{Kmeans} is attainable) but need not be unique.

A tuple of cluster heads $\vx = \{\vx^1,\ldots,\vx^K\}$ induces a natural partitioning (not necessarily unique) of the dataset under the cost function \eqref{Kmeans}. In particular, for any $K$-tuple of cluster heads $\vx$ we may associate with $\vx$ any partition $\calP = \{\calP^1,\ldots,\calP^K\}$ of $\calD$ such that for each $y\in\calD$ there holds
\begin{equation} \label{eq:partition-condition}
y\in \calP^k \implies \|\vy - \vx^k\| \leq \|\vy - \vx^{k'}\| \mbox{ for all } k'=1,\ldots,K.
\end{equation}
Note that for any such partition we have $\calF(\vx) = \sum_{k=1}^K \sum_{\vy\in\calP^k} \|\vy - \vx^k\|^2$.

%A (global) minimizer of~\eqref{Kmeans} constitutes a set of so-called $K$-means of the population $\mathcal{D}$ and naturally induces a partitioning of the data set $\mathcal{D}$ into $K$ clusters such that the sum of squared $\ell_{2}$ distances of the data points from their respective cluster heads is minimized.
%Formally, denoting by $\mathcal{P}=\{\mathcal{P}^{1},\mathcal{P}^{2},\cdots,\mathcal{P}^{K}\}$ a partition of $\mathcal{D}$ into $K$ subsets, i.e., $\mathcal{P}^{k}\subset\mathcal{D}$ for all $k$, $\mathcal{P}^{k}\cap\mathcal{P}^{\acute{k}}=\emptyset$ for all $k\neq \acute{k}$ and $\mathcal{P}^{1}\cup\mathcal{P}^{2}\cup\cdots\cup\mathcal{P}^{K}=\mathcal{D}$, and a set $\mathbf{z}=\{\mathbf{z}^{k}\}_{k=1}^{K}$ of $K$ points in $\mathbb{R}^{p}$ (the potential cluster heads),

%In order to state the optimization problem \eqref{Kmeans} in a more tractable form it is advantageous to define an alternative cost function.
Given an arbitrary partition $\calP=\{\calP^1,\ldots,\calP^K\}$ of the dataset $\calD$ and a set of cluster heads $\vx = \{\vx^1,\ldots,\vx^K\}$,
let $\mathcal{H}:(\vx,\mathcal{P})\mapsto\mathbb{R}_{+}$ be the cost function
\begin{equation}
\label{costH}
\mathcal{H}(\vx,\mathcal{P})=\sum_{k=1}^{K}\sum_{\y\in\mathcal{P}^{k}}\|\y-\vx^{k}\|^{2}.
\end{equation}
Intuitively, for a given tuple $(\vx,\mathcal{P})$,
%of $K$ points $\mathbf{z}=\{\mathbf{z}^{1},\cdots,\mathbf{z}^{K}\}$ in $\mathbb{R}^{p}$ and a partition $\mathcal{P}=\{\mathcal{P}^{1},\mathcal{P}^{2},\cdots,\mathcal{P}^{K}\}$ of $\mathcal{P}$,
the quantity $\mathcal{H}(\vx,\mathcal{P})$ reflects the $\ell_{2}$ distortion cost of partitioning the data $\mathcal{D}$ into clusters $\calP^1,\ldots,\calP^K$ and, for each $k=1,\cdots,K$, taking $\vx^{k}$ to be the representative of all the data points in the $k$-th cluster $\mathcal{P}^{k}$. It is readily shown that
the $K$-means clustering problem as posed in~\eqref{Kmeans} is equivalent to the problem
\begin{equation}
\label{Hmin}
\inf_{\{\vx^{1},\cdots,\vx^{K}\},\{\mathcal{P}^{1},\cdots,\mathcal{P}^{K}\}}\mathcal{H}(\vx,\mathcal{P})
\end{equation}
of jointly minimizing the cost~\eqref{costH} over all tuples $(\vx,\mathcal{P})$ of cluster heads and partitions, in that, if $(\vx^{\ast},\mathcal{P}^{\ast})$ is a global minimizer of~\eqref{Hmin}, then $\vx^{\ast}$ is a global minimizer of~\eqref{Kmeans}; and, conversely, if $\vx^{\ast}$ is a global minimizer of~\eqref{Kmeans}, then the tuple $(\vx^{\ast},\mathcal{P}^{\ast})$ is a global minimizer of~\eqref{Hmin}, where $\mathcal{P}^{\ast}$ may be any partition satisfying \eqref{eq:partition-condition} with $\vx = \vx^\ast$.
Given the equivalence between the optimization problems~\eqref{Kmeans} and~\eqref{Hmin}, the latter will also be referred to as the $K$-means clustering problem.

The following mild assumption will be enforced throughout.
\begin{assumption}
\label{ass:data_size} The collective data set $\mathcal{D}$ consists of at least $K$ distinct data points.
\end{assumption}

Under this assumption the following properties hold for any global minimizer of \eqref{Kmeans} or \eqref{Hmin}.
\begin{proposition}
\label{prop:globminprop} Let Assumption~\ref{ass:data_size} hold. Then any global minimizer $\vx^{\ast}=\{\vx^{\ast 1},\cdots,\vx^{\ast K}\}$ of~\eqref{Kmeans} satisfies $\vx^{\ast k}\in\cco(\mathcal{D})$ for all $k$.
%and $\vx^{\ast k}\neq \vx^{\ast \acute{k}}$ for all $k,\acute{k}\in\{1,\cdots,K\}$ and $k\neq \acute{k}$.
Furthermore, a global minimizer consists of $K$ distinct cluster centers, i.e., $x^{\ast,k} \not= x^{\ast,k'}$ for all $k,k'\in\{1,\ldots,K\}$, $k\not=k'$.

Similarly, if a tuple $\{\vx^{\ast},\mathcal{P}^{\ast}\}$ is a global minimizer of~\eqref{Hmin}, we have that the $K$ cluster centers $\{\vx^{\ast 1},\cdots,\vx^{\ast K}\}$ are distinct, $\vx^{\ast k}\in\cco(\mathcal{D})$ for all $k$, and additionally $\mathcal{P}^{\ast k}\neq\emptyset$ for all $k\in\{1,\cdots,K\}$.
\end{proposition}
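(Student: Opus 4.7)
The plan is to first establish the three properties for the partition formulation~\eqref{Hmin} and then transfer the conclusion on cluster centers back to~\eqref{Kmeans} via the equivalence recorded after~\eqref{Hmin}. Accordingly, let $(\vx^{\ast},\calP^{\ast})$ be a global minimizer of~\eqref{Hmin}. Each of the three claims is handled by a one-shot perturbation argument: I modify $(\vx^{\ast},\calP^{\ast})$ locally and show the new tuple achieves strictly smaller $\calH$-cost unless the desired property already holds.

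First I would prove non-emptiness of every $\calP^{\ast k}$. Suppose to the contrary that $\calP^{\ast k}=\emptyset$. Then the $K$ distinct values guaranteed by Assumption~\ref{ass:data_size} all lie in $\bigcup_{m\neq k}\calP^{\ast m}$, a union of at most $K-1$ parts, so pigeonhole yields an index $j\neq k$ such that $\calP^{\ast j}$ contains at least two distinct data values; in particular, there is some $\vz\in\calP^{\ast j}$ with $\vz\neq\vx^{\ast j}$ (at most one of those distinct values can coincide with $\vx^{\ast j}$). Moving a single copy of $\vz$ into cluster $k$ and setting $\tilde{\vx}^{k}=\vz$ drops that point's contribution from $\|\vz-\vx^{\ast j}\|^{2}>0$ to $0$ while leaving all other contributions unchanged, contradicting optimality.

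Next I would prove $\vx^{\ast k}\in\cco(\calD)$ together with distinctness of the heads. Since every $\calP^{\ast k}$ is non-empty, a standard first-order calculation shows that $\vx\mapsto\sum_{\y\in\calP^{\ast k}}\|\y-\vx\|^{2}$ is uniquely minimized by the centroid of $\calP^{\ast k}$, forcing $\vx^{\ast k}$ to equal that centroid, which lies in $\cco(\calP^{\ast k})\subseteq\cco(\calD)$. For distinctness, suppose $\vx^{\ast j}=\vx^{\ast k}$ with $j\neq k$. Then $\{\vx^{\ast m}\}_{m=1}^{K}$ takes at most $K-1$ distinct values, so Assumption~\ref{ass:data_size} furnishes some $\vz\in\calD$ with $\vz\neq\vx^{\ast m}$ for every $m$. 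Setting $\tilde{\vx}^{k}=\vz$ and $\tilde{\vx}^{m}=\vx^{\ast m}$ for $m\neq k$, I obtain $\min_{m}\|\y-\tilde{\vx}^{m}\|^{2}\leq\min_{m}\|\y-\vx^{\ast m}\|^{2}$ for every $\y\in\calD$ (deleting the duplicate head $\vx^{\ast k}$ does not change the original minimum, and $\tilde{\vx}$ only adds a new candidate), with strict inequality at $\y=\vz$, contradicting global optimality of $\vx^{\ast}$ in~\eqref{Kmeans} and hence of $(\vx^{\ast},\calP^{\ast})$ in~\eqref{Hmin} via the equivalence.

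Finally, for the statements directly about~\eqref{Kmeans}, given a global minimizer $\vx^{\ast}$ I would pick any partition $\calP^{\ast}$ satisfying~\eqref{eq:partition-condition}, promote $(\vx^{\ast},\calP^{\ast})$ to a global minimizer of~\eqref{Hmin} by the stated equivalence, and apply the conclusions above. The main subtlety I anticipate is bookkeeping around repeated data points: Assumption~\ref{ass:data_size} controls only the number of \emph{distinct} values in $\calD$, so both the pigeonhole argument for non-emptiness and the ``find a data point different from every head'' step in the distinctness argument must be phrased carefully in terms of distinct values rather than multiplicities. Once that accounting is in place, the three perturbation arguments are essentially routine.
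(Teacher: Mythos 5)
Your proof is correct. The paper itself omits the proof of Proposition~\ref{prop:globminprop} as ``straightforward,'' and your three perturbation arguments (empty cluster $\Rightarrow$ reassign a point $\vz\neq\vx^{\ast j}$ found via pigeonhole and Assumption~\ref{ass:data_size}; non-empty clusters $\Rightarrow$ heads are centroids, hence in $\cco(\mathcal{D})$; duplicated heads $\Rightarrow$ a free head can be moved onto an uncovered distinct data value to strictly reduce $\mathcal{F}$) are exactly the standard argument the authors allude to, with the transfer between~\eqref{Kmeans} and~\eqref{Hmin} via the stated equivalence and the multiset bookkeeping handled correctly.
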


The proof of this proposition is straightforward and omitted for brevity.

\subsection{ Lloyd's algorithm and local minima of $K$-means} \label{sec:lloyds-review}

As noted earlier, the $K$-means clustering objective~\eqref{Kmeans} is non-convex. Indeed the problem of finding a global optimal $K$-means clustering with respect to \eqref{Hmin} is NP hard \citep{aloise2009np}.
Hence, in practice, it may only be possible to obtain local minima or approximate solutions of~\eqref{Kmeans}.

A commonly adopted notion of \emph{approximate solutions} of the $K$-means clustering problem is that of Lloyd's minima \citep{friedman2001elements}. The set of Lloyd's minima consist of a set of approximate solutions to~\eqref{Kmeans} which are obtained as limit points of an iterative procedure, referred to as Lloyd's algorithm \citep{lloyd1982least} (or sometimes simply as the \emph{$k$-means} algorithm).
Specifically, consider the optimization formulation~\eqref{Hmin} and let $(\vx_{0},\mathcal{P}_{0})$ be an initial solution estimate. Lloyd's algorithm is an iterative procedure that generates a sequence $\{(\vx_{t},\mathcal{P}_{t})\}$ of tuples of cluster heads and clusters starting with the initial estimate $(\vx_{0},\mathcal{P}_{0})$, where, at each iteration $t\in \N_+$, the tuple $(\vx_{t},\mathcal{P}_{t})$ is updated according to the following two-step procedure:
\begin{enumerate}[(i)]
\item \textbf{Reassignment of clusters}. For each data point $\y\in\mathcal{D}$, let $\omega_{\vy}(t)\in\{1,\cdots,K\}$ be the index of the cluster it belongs to at iteration $t$. If there exists $\acute{k}\in\{1,\cdots,K\}$ such that $\|\y-\vx_{t}^{\acute{k}}\|< \|\y-\vx_{t}^{\omega_{\vy}(t)}\|$, then $\y$ is moved (reassigned) to cluster $\mathcal{P}^{\acute{k}}$. (Note, if there exist multiple such $\acute{k}$'s, the point $\y$ may be arbitrarily assigned to any one of them.) The above reassignment is performed for each point $\y\in\mathcal{D}$ which generates the new set $\mathcal{P}_{t+1}$ of clusters.\\

\item \textbf{Center update}. Subsequently, the new cluster centers $\vx_{t+1}=\{\vx_{t+1}^{1},\cdots,\vx_{t+1}^{K}\}$ are obtained as the centroids of the respective clusters, i.e.,
\begin{equation}
\label{lloyd_center_up}
\vx_{t+1}^{k}=\frac{1}{|\mathcal{P}_{t+1}^{k}|}\sum_{\y\in\mathcal{D}}\y~~~\forall k\in\{1,\cdots,K\}.
\end{equation}
More precisely, the above center update~\eqref{lloyd_center_up} is performed only if $|\mathcal{P}_{t+1}^{k}|\neq 0$; otherwise, for definiteness, we set $\vx_{t+1}^{k}=\vx_{t}^{k}$.
\end{enumerate}

It is important to note that the reassignment and center update steps are locally optimal, in the sense that, for all $t$,
\begin{equation}
\label{lloyd_up_1}
\mathcal{P}_{t+1}\in\argmin_{\mathcal{P}}\mathcal{H}(\vx_{t},\mathcal{P})~~\mbox{and}~~\vx_{t+1}\in\argmin_{\vx}\mathcal{H}(\vx,\mathcal{P}_{t+1}).
\end{equation}
%and
%\begin{equation}
%\label{lloyd_up_2}
%\vx_{t+1}\in\argmin_{\vx}\mathcal{H}(\vx,\mathcal{P}_{t+1}),
%\end{equation}
%i.e., for a fixed set $\vx_{t}$ of cluster centers, $\mathcal{P}_{t+1}$ defines the optimal clustering minimizing the cost~\eqref{costH}, whereas, given a set of clusters $\mathcal{P}_{t+1}$, the corresponding set of centroids $\vx_{t+1}$ minimize the distortion~\eqref{costH}.
As a consequence, the clustering cost improves at every step, i.e., for all $t$,
\begin{equation}
\label{lloyd_up3}
\mathcal{H}(\vx_{t+1},\mathcal{P}_{t+1})\leq \mathcal{H}(\vx_{t},\mathcal{P}_{t+1})\leq\mathcal{H}(\vx_{t},\mathcal{P}_{t}).
\end{equation}
Let $\mathcal{L}$ denote the set of fixed points of Lloyd's algorithm, i.e., a pair $(\breve{\vx},\breve{\mathcal{P}})\in\mathcal{L}$ if and only if
\begin{equation}
\label{lloyd_up4}
\breve{\mathcal{P}}\in\argmin_{\mathcal{P}}\mathcal{H}(\breve{\vx},\breve{\mathcal{P}})~~\mbox{and}~~\breve{\vx}\in\argmin_{\vx}\mathcal{H}(\vx,\breve{\mathcal{P}}).
\end{equation}
It follows, by the step-wise cost-improvement property~\eqref{lloyd_up3} and the fact that the number of possible partitions of the data set $\mathcal{D}$ is finite, that the sequence $\{\vx_{t},\mathcal{P}_{t}\}$ generated by Lloyd's algorithm converges to a fixed point in $\mathcal{L}$ in finite time. The particular fixed point to which the algorithm converges is heavily dependent on the initial choice $(\vx_{0},\mathcal{P}_{0})$ of cluster center and cluster \citep{milligan1980examination}.

The set $\mathcal{L}$ is also referred to as the set of Lloyd's minima for the optimization formulation~\eqref{Hmin}. (For a detailed discussion on aspects of local optimality and stability of Lloyd's minima, we refer the reader to~\citep{selim1984k}.) Further, denote by $\mathcal{Z}$ the subset of cluster centers given by
\begin{equation}
\label{lloyd_up5}
\mathcal{Z}=\left\{\breve{\vx}~|~\exists~\breve{\mathcal{P}}~\mbox{such that}~(\breve{\vx},\breve{\mathcal{P}})\in\mathcal{L}\right\}.
\end{equation}
By convention, we will also refer to the set $\mathcal{Z}$ as the set of Lloyd's minima for the $K$-means clustering formulation~\eqref{Kmeans}.
\begin{remark}
\label{rem:Lloyd_min} It may be noted that, if $\breve{\vx}\in\mathcal{Z}$, then $\mathcal{F}(\breve{\vx})=\mathcal{H}(\breve{\vx},\breve{\mathcal{P}})$ for any $\breve{\mathcal{P}}$ such that $(\breve{\vx},\breve{\mathcal{P}})\in\mathcal{L}$; conversely, it follows that, if a pair $(\breve{\vx},\breve{\mathcal{P}})\in\mathcal{L}$, then $\mathcal{F}(\breve{\vx})=\mathcal{H}(\breve{\vx},\breve{\mathcal{P}})$. We also note that the set $\mathcal{L}$ may consist of tuples $(\breve{\vx},\breve{\mathcal{P}})$ for which one or more of the sets $\{\breve{\mathcal{P}}^{k}\}_{k=1}^{K}$ constituting the partition $\breve{\mathcal{P}}$ could be empty; similarly, the set $\mathcal{Z}$ of Lloyd's minima may consist of elements $\breve{\vx}$ for which the $K$ cluster centers $\{\breve{\vx}^{k}\}_{k=1}^{K}$ need not be all distinct. In fact, in general, it is not hard to find initializations that would lead the Lloyd's algorithm to converge to partitions with one or more empty sets or cluster centers that are not all distinct, see~\citep{telgarsky2010hartigan} for more detailed discussions and analyses.
\end{remark}

Finally, denote by $\mathcal{L}_{g}$ the set of global minima of~\eqref{Hmin}. Note that a pair $(\breve{\vx},\breve{\mathcal{P}})\in\mathcal{L}_{g}$ is necessarily a fixed point of Lloyd's algorithm (in the sense of~\eqref{lloyd_up4}).
% otherwise, by instantiating Lloyd's algorithm with  $(\breve{\vx},\breve{\mathcal{P}})$ as the initial cluster center and %cluster estimate, we would obtain a pair $(\acute{\vx},\acute{\mathcal{P}})\not = (\breve{\vx},\breve{\mathcal{P}})$ such that $\mathcal{H}(\acute{\vx},\acute{\mathcal{P}})<\mathcal{H}(\breve{\vx},\breve{\mathcal{P}})$, thus contradicting the hypothesis that $(\breve{\vx},\breve{\mathcal{P}})$ is a global minimizer of~\eqref{Hmin}. This implies that $\mathcal{L}_{g}\subset\mathcal{L}$. Moreover,

Denote by $\mathcal{Z}_{g}$ the set of global minima of the $K$-means formulation~\eqref{Kmeans}, i.e.,
\begin{equation}
\label{def_Z_g}
\mathcal{Z}_{g}=\left\{\breve{\vx}~|~\exists~\breve{\mathcal{P}}~\mbox{such that}~(\breve{\vx},\breve{\mathcal{P}})\in\mathcal{L}_{g}\right\},
\end{equation}
and, note that by~\eqref{lloyd_up5}, $\mathcal{Z}_{g}\subset\mathcal{Z}$.

\subsection{$K$-Means with Distributed Data} \label{sec:dist-framework}
%In many applications of interest data is naturally distributed among a group of agents.
Through the remainder of the paper we will consider the following distributed data framework. Assume there are $M$ agents, each with access to some local dataset $\calD_m$ consisting of points in $\R^p$. In this paper we are interested in studying decentralized methods for computing $K$-means clusterings of the collective dataset
$$
\calD = \calD_1\cup\cdots\cup \calD_M.
$$
We denote by $N_{m}=|\mathcal{D}_{m}|$ the size of the dataset at agent $m$, $m=1,\cdots,M$, with $\sum_{m=1}^{M}N_{m}=N$. For convenience, we will index the $n$-th datapoint, $n=1,\cdots, N_{m}$ in the $m$-th agent by $\mathbf{y}_{m,n}$. We will assume that agents' ability to communicate with one another is restricted as follows.
\begin{assumption} \label{ass:comm-graph}
Agents may only communicate with neighboring agents as defined by some communication graph $G=(V,E)$, where agents correspond to vertices in the graph and an edge (bidirectional) between vertices indicates the ability of the agents to exchange information.
\end{assumption}

Lloyd's algorithm is inherently centralized, in that, both the reassignment and center update steps at each iteration require access to the entire data set $\mathcal{D}$. Implementing Lloyd's algorithm in a multi-agent setting in which the data set $\mathcal{D}$ is distributed across multiple agents, would require either fully centralized coordination or all-to-all communication between the agents at all times (or equivalent assumptions). However, in practice, and especially in large-scale settings, inter-agent communication can be sparse and ad-hoc (for instance, envision a scenario in which the agents or data centers correspond to a network of cellphone users or sensors), centralized coordination may not be achievable and raw data exchange among the agents may be prohibited.

These concerns motivate the current study in which we present efficient distributed approaches for $K$-means clustering in possibly large-scale, realistic multi-agent networks.

%[Maybe remark here that, of course, agents could simply transmit their data to all other agents, but this would be impractical. Maybe also remark here on the difference between this setup and the parallelizable mapreduce type frameworks [cite].]

%Let $\calD = \calD_1\cup\cdots\cup\calD_M$ denote the total dataset. In this paper we are interested in studying decentralized methods for computing $K$-means clusterings of the dataset $\calD$ in this setting.

%Note that in such a setting, when the size of datasets is large it is impractical for agents to share their entire datasets with one another. We are interested in algorithms where agents communicate

%To this end, in Sections ??--?? we will formulate a generalized multi-agent $K$-means objective.
%In Section ?? we begin by formulating an (exact) multi-agent $K$-means objective which is equivalent to the classical centralized objective. In Section ?? we will then consider a relaxed multi-agent $K$-means objective which is amenable to distributed implementation.

\section{A Generalized Multi-Agent $K$-Means Objective} \label{sec:dist_K_means}
As noted in the previous section, Lloyd's algorithm is inherently centralized. In this section we will set up a multi-agent $K$-means objective which will be used to design a class of decentralized $K$-means algorithms.

In Section \ref{sec:objective-formulation} we will formulate the generalized multi-agent $K$-means objective. In Section \ref{sec:gen-min} we will introduce the notion of a generalized Lloyd's minimum (the analog of the Lloyd's minimum in the multi-agent setting) and discuss properties of generalized Lloyd's minima.
%In Section \ref{sec:exact-distributed} we will begin by formulating the $K$-means optimization problem ?? in a multi-agent framework. In Section \ref{sec:relaxed} we then consider a relaxed multi-agent $K$-means objective that is amenable to distributed methods, and will lead to a parametric class of decentralized $K$-means algorithms (see Section \ref{sec:dist-algorithm}).

%In this section we present a generalized multi-agent $k$-means objective which will be used to develop the $NK$-means algorithm. In Section ?? we begin by formulating a multi-agent $K$-means objective which is equivalent to the classical centralized scenario (i.e., an \emph{exact} multi-agent $K$-means objective).
%The exact formulation of the multi-agent $K$-means problem is not readily amenable to distributed algorithms. In Section ?? we present a relaxed multi-agent $K$-means objective which is amenable to distributed approaches and will be used to develop the $NK$-means algorithm in Section ??. In Section ?? we introduce the notion of a generalized multi-agent Lloyd's minimum (or just \emph{generalized minimum}) and discuss basic properties of such points.

%\subsection{A Generalized Multi-Agent $K$-Means Objective}
\label{sec:rel_K_means}

\subsection{Multi-Agent $K$-Means Objective Formulation} \label{sec:objective-formulation}
As a matter of notation, for each agent $m$, denote by $\mathbb{C}_{m}$ the set of all partitions of size $K$ of the local data $\mathcal{D}_{m}$ at agent $m$.
%That is, a $K$-tuple $\mathcal{C}_{m}=\{\mathcal{C}_{m}^{1},\cdots,\mathcal{C}_{m}^{K}\}$ belongs to $\mathbb{C}_{m}$ if and only if $\mathcal{C}_{m}^{k}\subset\mathcal{D}_{m}$ for all $k=1,\cdots,K$, $\mathcal{C}_{m}^{k}\cap\mathcal{C}_{m}^{\acute{k}}=\emptyset$ for $k\neq \acute{k}$ and $\mathcal{C}_{m}^{1}\cup\cdots\cup\mathcal{C}_{m}^{K}=\mathcal{D}_{m}$.

To motivate our construction and in view of the fact that the problem data $\mathcal{D}$ is distributed across multiple agents, we start by noting that the $K$-means formulation~\eqref{Hmin} may be equivalently written as a multi-agent optimization problem in which the goal is to minimize a \emph{separable} cost function subject to an inter-agent coupling constraint as follows:
\begin{equation}
\label{rel1}
\left\{\begin{array}{l}
                                    \inf_{\{(\mathbf{x}_{1},\mathcal{C}_{1}),\cdots,(\mathbf{x}_{M},\mathcal{C}_{M})\}} \sum_{m=1}^{M}\sum_{k=1}^{K}\sum_{\y\in\mathcal{C}_{m}^{k}}\|\y-\mathbf{x}_{m}^{k}\|^{2}\\
                                    \mbox{subject to}\\
                                    \mathbf{x}_{1}=\mathbf{x}_{2}=\cdots=\mathbf{x}_{M},
                                    \end{array}\right.
\end{equation}
where the minimization in~\eqref{rel1} is performed over all \emph{admissible} $M$-tuples of local cluster center and cluster pairs $\{(\mathbf{x}_{1},\mathcal{C}_{1}),\cdots,(\mathbf{x}_{M},\mathcal{C}_{M})\}$, i.e., for each $m=1,\cdots,M$, the pair $(\mathbf{x}_{m},\mathcal{C}_{m})$ consists of a $K$-tuple $\mathbf{x}_{m}=(\mathbf{x}_{m}^{1},\cdots,\mathbf{x}_{m}^{K})$, $\vx_m^k\in \R^p$, $k=1,\ldots,K$ and a partition $\mathcal{C}_{m}\in\mathbb{C}_{m}$.
%of the set $\mathcal{D}_{m}$ corresponding to a clustering of the local data at agent $m$.
At times we will find it convenient to treat the $K$-tuple of cluster centers $\mathbf{x}_{m}=\{\mathbf{x}_{m}^{1},\cdots,\mathbf{x}_{m}^{K}\}$ at agent $m$ as a vector in $\R^{Kp}$ and the $M$-tuple $\vx = \{\vx_1,\ldots,\vx_M\}$ as a vector in $\R^{KMp}$.
%Let $\C = \C_1\times\cdots\times \C_M$ and
%Finally, let $\calK= \{(\vx,\calC): \vx\in\R^{KMp}, \calC \in \C_1\times\cdots\times \C_M\}$ denote the set of all admissible tuples of cluster centers and clusters across agents.

Observe that the formulation \eqref{rel1} has a separable objective in that the objective is the sum of $M$ cost terms in which the $m$-th cost term, $m\in \{1,\cdots,M\}$, is a function of only the local variables $(\mathbf{x}_{m},\mathcal{C}_{m})$ and local data $\mathcal{D}_{m}$ of the $m$-th agent. Note, however, that the equality constraint enforces the coupling between the cluster center variables.

The formulations \eqref{rel1} and \eqref{Hmin} are equivalent in the sense that if $\{(\bx_1,\breve{\calC}_1),\ldots,(\bx_M,\breve{\calC}_M) \}$ is a global minimizer of \eqref{rel1}, then setting
$$
\breve{\mathbf{x}}=\breve{\mathbf{x}}_{1}=\cdots=\breve{\mathbf{x}}_{M} \quad \mbox{ and } \quad \calP^k = \calC_1^k\cup\cdots\cup\calC_M^k
$$
for $k=1,\ldots,K$, and $\calP = \{\calP^1,\ldots,\calP^K\}$, the pair $(\bx,\calP)$ is a global minimizer of \eqref{Hmin}.

In particular, note that $\bx = \{\bx_1,\ldots,\bx_K\}$ is an admissible tuple of cluster centers and $\calP = \{\calP^1,\ldots,\calP^K\}$ is a partition of the collective data $\calD$.
To see that $(\bx,\calP)$ is a global minimizer of \eqref{Hmin} note that if, on the contrary, there exists another pair $(\widehat{\mathbf{x}},\widehat{\mathcal{P}})$ such that $\mathcal{H}(\widehat{\mathbf{x}},\widehat{\mathcal{P}})<\mathcal{H}(\breve{\mathbf{x}},\breve{\mathcal{P}})$, then the $M$-tuple $\{(\widehat{\mathbf{x}},\widehat{\mathcal{C}}_{1}),\cdots,(\widehat{\mathbf{x}},\widehat{\mathcal{C}}_{M})\}$, with $\widehat{\mathcal{C}}^{k}_{m}=\widehat{\mathcal{P}}^{k}\cap\mathcal{D}_{m}$ for all $m=1,\cdots,M$ and $k=1,\cdots,K$, would be feasible for the optimization formulation~\eqref{rel1} and achieve a strictly lower cost than $\{(\breve{\mathbf{x}}_{1},\breve{\mathcal{C}}_{1}),\cdots,(\breve{\mathbf{x}}_{M},\breve{\mathcal{C}}_{M})\}$.
Similarly, it may be shown that if $(\breve{\mathbf{x}},\breve{\mathcal{P}})$ is a global minimizer of~\eqref{Hmin}, then the $M$-tuple $\{(\breve{\mathbf{x}},\breve{\mathcal{C}}_{1}),\cdots,(\breve{\mathbf{x}},\breve{\mathcal{C}}_{M})\}$, with $\breve{\mathcal{C}}^{k}_{m}=\breve{\mathcal{P}}^{k}\cap\mathcal{D}_{m}$ for all $m=1,\cdots,M$ and $k=1,\cdots,K$, constitutes a minimizer of~\eqref{rel1}.

To facilitate the development of iterative distributed $K$-means algorithms, we assume that the agents may exchange information over a preassigned communication graph. We denote by $G=(V,E)$ the inter-agent communication graph with $V=\{1,\cdots,M\}$ denoting the set of $M$ agents and $E$ the set of (undirected) communication links between agents. The inter-agent communication graph is assumed to be  connected, but otherwise arbitrary (possibly sparse). Formally:
\begin{assumption}
\label{ass-connected} The inter-agent communication graph $G$ is connected, or, equivalently, $\lambda_{2}(L)>0$, where $\lambda_{2}(L)$ denotes the second eigenvalue of the graph Laplacian matrix $L$.
\end{assumption}

Under this assumption the formulation~\eqref{rel1} is equivalent to the formulation
\begin{equation}
\label{rel2}
\left\{\begin{array}{l}
                                    \inf_{\{(\mathbf{x}_{1},\mathcal{C}_{1}),\cdots,(\mathbf{x}_{M},\mathcal{C}_{M})\}} \sum_{m=1}^{M}\sum_{k=1}^{K}\sum_{\y\in\mathcal{C}_{m}^{k}}\|\y-\mathbf{x}_{m}^{k}\|^{2}\\
                                    \mbox{subject to}\\
                                    \mathbf{x}_{m}=\mathbf{x}_{l}~\mbox{for all pairs $(m,l)\in E$}.
                                    \end{array}\right.
\end{equation}

%Throughout the remainder of the section, we will consider an unconstrained relaxation of ?? which is amenable to distributed implementation.

%\subsection{The Multi-Agent $K$-Means Problem: Relaxed Formulation} \label{sec:relaxed}
%We now present a generalized multi-agent objective which encodes both the objective in \eqref{rel2} and a relaxation of the constraints in \eqref{rel2}.

The following objective function will allow us to consider unconstrained relaxations of \eqref{rel2}:
For a fixed $\rho\in\N_{+}$ and $\vx\in \R^{KMp}$ and $\calC = (\calC_1,\ldots,\calC_M)\in \C_1\times\cdots\times\C_M$ define
\begin{equation}
\label{rel4}
J^{\rho}(\mathbf{x},\mathcal{C})=\frac{1}{\rho}\sum_{m=1}^{M}\sum_{k=1}^{K}\sum_{\y\in\mathcal{C}_{m}^{k}}\|\y-\mathbf{x}_{m}^{k}\|^{2}+\sum_{(m,l)\in E}\sum_{k=1}^{K}\|\mathbf{x}_{m}^{k}-\mathbf{x}_{l}^{k}\|^{2}.
\end{equation}
The first term above corresponds to the clustering cost while the second term penalizes deviations from the constraint set in \eqref{rel2} with increasing severity as $\rho\to\infty$.
This gives rise to the following relaxation of \eqref{rel2}.
\begin{equation}
\label{rel3}
\inf_{\substack{ ~~\vx\in \R^{KMp}\\ \calC\in\C~~~}} J^\rho(\vx,\calC)
\end{equation}
where $\rho\in\N_{+}$ is a relaxation parameter.

\subsection{Generalized Lloyd's Minimum}  \label{sec:gen-min}

We now introduce the notion of a generalized Lloyd's minimum.
%(We sometimes also refer to this as a generalized minimum of $J^\rho$).
%The notion of a generalized minimum is analogous to the notion of a Lloyd's minimum from Section \ref{sec:k-means-classical}. The relationship between Lloyd's minima and generalized minima of $J^\rho(\cdot,\cdot)$ will be made precise in Theorems \ref{lm:conv_to_Z} and \ref{lm:conv_to_Z}.

We start by noting that for each $\rho\in\N_{+}$, \eqref{rel3} is equivalent to the formulation
\begin{equation}
\label{rel6}
\inf_{\vx\in \R^{KMp}}\myQ^{\rho}(\mathbf{x}) ,
\end{equation}
where
\begin{equation}
\label{rel600}
\myQ^{\rho}(\mathbf{x})= \frac{1}{\rho}\sum_{m=1}^{M}\sum_{\y\in\mathcal{D}_{m}}\min_{k=1,\cdots,K}\|\y-\mathbf{x}_{m}^{k}\|^{2}+\sum_{(m,l)\in E}\sum_{k=1}^{K}\|\mathbf{x}_{m}^{k}-\mathbf{x}_{l}^{k}\|^{2}
\end{equation}
and the minimization in~\eqref{rel6} is to be performed over all $M$-tuples $\mathbf{x}=\{\mathbf{x}_{1},\cdots,\mathbf{x}_{M}\}$ of (local) cluster centers. Indeed, it is straightforward to verify that if $\{(\bx_{1},\calC_{1}),\cdots,(\bx_{M},\calC_{M})\}$ is a global minimum of~\eqref{rel3} then $\{\bx_{1},\cdots,\bx_{M}\}$ is a global minimum of~\eqref{rel6} and, conversely, if $\{\bx_{1},\cdots,\bx_{M}\}$ is a global minimum of~\eqref{rel6}, then the $M$-tuple $\{(\bx_{1},\calC_{1}),\cdots,(\bx_{M},\calC_{M})\}$ is a global minimum of~\eqref{rel3}, where, for each $m$, $\calC_{m}$ may be taken to be an element of $\mathbb{C}_{m}$ satisfying the property that, for each $\y\in\mathcal{D}_{m}$, $\y\in\calC_{m}^{k}$ only if
\begin{equation}
\label{rel7}
\|\y-\bx_{m}^{k}\|\leq\|\y-\bx_{m}^{\acute{k}}\|~~\mbox{for all $\acute{k}\in\{1,\cdots,K\}$}.
\end{equation}
In what follows we will denote by $\mathcal{J}^{\rho}_{g}$ the set of global minima of the formulation~\eqref{rel3}, and denote by $\mathcal{M}^{\rho}_{g}$ the set of global minima of~\eqref{rel6}. We have by the above equivalence,
\begin{equation}
\label{rel8}
\mathcal{M}^{\rho}_{g}=\left\{\bx=\{\bx_{1},\cdots,\bx_{M}\}~|~\exists~\calC=\{\mathcal{C}_{1},\cdots,\mathcal{C}_{M}\}~\mbox{such that}~(\bx,\calC)\in\mathcal{J}^{\rho}_{g}\right\}.
\end{equation}

Now, note that the relaxation~\eqref{rel3} of~\eqref{rel2} is (still) non-convex. The following definition introduces a notion of an approximate minimum.
%We will say that a pair $(\bx,\breve{\calC})$ is a \emph{generalized minimum} of $J(\cdot,\cdot)$ if
%$$
%\bx \in \argmin_{\vx}J^\rho( \vx,\breve{\calC}) \quad \mbox{ and } \quad \breve{\calC} \in \argmin_{\cal C} J^\rho(\bx,\breve{\calC}).
%$$
\begin{definition}[Generalized Lloyd's Minimum]
\label{def:locmin}
A pair $(\bx,\breve{\mathcal{C}})\in\calK$ is said to be a generalized Lloyd's minimum or a generalized minimum of $J^{\rho}(\cdot,\cdot)$ if\\
\noindent  (i) for each $m$ and $\y\in\mathcal{D}_{m}$,
\begin{equation}
\label{def:locmin2}
\mbox{$\y\in\calC_{m}^{k}$ only if $\|\y-\bx_{m}^{k}\|\leq\|\y-\bx_{m}^{\acute{k}}\|,~\forall \acute{k}$}
\end{equation}
and,\\
\noindent (ii) for each $m$ and $k$,
\begin{equation}
\label{def:locmin3}
\bx_{m}^{k}=\frac{(1/\rho)\sum_{\y\in\calC_{m}^{k}}\y+\sum_{l\in\Omega_{m}}\bx_{l}^{k}}{(1/\rho)|\calC_{m}^{k}|+|\Omega_{m}|}.
\end{equation}
\end{definition}

\begin{remark} \label{remark:gen-min-vs-lloyds}
Note that given a fixed clustering $\calC\in \C_1\times\cdots\times\C_M$, the function $\vx \mapsto J^\rho(\vx,\calC)$ is convex and differentiable with a unique minimizer. In particular, a vector $\vx^\star\in\R^{MKp}$ minimizes $\vx \mapsto J^\rho(\vx,\calC)$ if and only if
$$
\vx_{m}^{\star,k}=\frac{(1/\rho)\sum_{\y\in\calC_{m}^{k}}\y+\sum_{l\in\Omega_{m}}\vx_{l}^{\star, k}}{(1/\rho)|\calC_{m}^{k}|+|\Omega_{m}|}
$$
holds for all $k=1,\ldots,K$, $m=1,\ldots,M$.
Thus, in words, the above definition states that a tuple $(\bx,\breve{\calC})$ is an element of $\calJ^\rho$ if and only if (i) for $\bx$ fixed,  $\breve{\calC}$ is an ``optimal'' partitioning of the datapoints, and (ii), for $\breve{\calC}$ fixed, $\bx$ optimizes the generalized objective $\vx \mapsto J^\rho(\vx,\breve{\calC})$. This may be compared to the classical definition of a Lloyd's minimum \eqref{lloyd_up4}.

\end{remark}

The following notation, which will be used through the remainder of the paper, will facilitate discussion of generalized minima. For each $\mathbf{x}\in\mathbb{R}^{MKp}$, i.e., an $M$-tuple $\mathbf{x} = \{\mathbf{x}_{1},\cdots,\mathbf{x}_{M}\}$ of potential cluster centers,
%let
%\begin{align}
%\calU_{\vx} = \Big\{\calC \in \C: \y\in\mathcal{C}_{m}^{k}~\Longrightarrow~\|\y-\mathbf{x}_{m}^{k}\|\leq\|\y-\mathbf{x}_{m}^{\acute{k}}\|,\\ \forall \acute{k}=1,\ldots,K,~ m=1,\cdots,M, ~n=1,\cdots,N_m \Big\}
%\end{align}
let $\mathcal{U}_{\mathbf{x}}$ denote the subset of $\mathbb{C}_{1}\times\cdots\times\mathbb{C}_{M}$ such that a $M$-tuple $\mathcal{C}\in \mathbb{C}_{1}\times\cdots\times\mathbb{C}_{M}$ belongs to $\mathcal{U}_{\mathbf{x}}$ if the following holds for all $m=1,\cdots,M$ and $\mathbf{y}\in\mathcal{D}_{m}$:
%\begin{equation}
%\label{def_U}
%\bigcup_{k=1}^{K}\mathcal{C}_{m}^{k}=\mathcal{D}_{m}~~\mbox{and}~~\mathcal{C}_{m}^{k}\cap\mathcal{C}_{m}^{\acute{k}}=\emptyset~\mbox{if}~k\neq \acute{k},
%\end{equation}
%and $\forall n=1,\cdots,N_{m}$,
\begin{equation}
\label{def_U1}
\y\in\mathcal{C}_{m}^{k}~\Longrightarrow~\|\y-\mathbf{x}_{m}^{k}\|\leq\|\y-\mathbf{x}_{m}^{\acute{k}}\|,~\forall \acute{k}.
\end{equation}

Note that, analogous to the conditions given in \eqref{lloyd_up4} for a Lloyd's minimum, a tuple $(\bx,\bcalC)$ is a generalized minimum if and only if
$$
\bcalC \in \calU_{\bx} \quad \mbox{ and } \quad \bx \in \argmin_{\vx} J^\rho(\vx,\bcalC).
$$

The set of all generalized minima of $J^{\rho}(\cdot,\cdot)$ will be denoted by $\mathcal{J}^{\rho}$. Additionally, denote by $\mathcal{M}^{\rho}$ the set
\begin{equation}
\label{rel9}
\mathcal{M}^{\rho}=\left\{\bx=\{\bx_{1},\cdots,\bx_{M}\}~|~\exists~\calC=\{\mathcal{C}_{1},\cdots,\mathcal{C}_{M}\}~\mbox{such that}~(\bx,\calC)\in\mathcal{J}^{\rho}\right\}.
\end{equation}
By abusing notation, an element $\bx\in\mathcal{M}^{\rho}$ will also be referred to as a generalized minimum of $J^{\rho}(\cdot,\cdot)$ or $\myQ^\rho(\cdot)$.
%the formulation~\eqref{rel6}.

For each $\vx\in\R^{KMp}$ define $\overline{\mathcal{U}}_{\mathbf{x}}$ (may be empty) to be the subset of $\calU_{\vx}$ given by
$$
\overline{\calU}_{\vx} = \bigg\{ \calC\in\calU_{\vx}:\,  \mathbf{x}_{m}^{k}=\frac{(1/\rho)\sum_{\y\in\mathcal{C}_{m}^{k}}\y+\sum_{l\in\Omega_{m}}\mathbf{x}_{l}^{k}}{(1/\rho)|\mathcal{C}_{m}^{k}|+|\Omega_{m}|}, ~~\forall m \mbox{ and } k \bigg\}.
$$
%the subset of $\mathcal{U}_{\mathbf{x}}$ which consists of elements $\mathcal{C}$ of $\mathcal{U}_{\mathbf{x}}$ which satisfy the additional condition that for all $m$ and $k$,
%\begin{equation}
%\label{def_U2}
%\mathbf{x}_{m}^{k}=\frac{(1/\rho)\sum_{\y\in\mathcal{C}_{m}^{k}}\y+\sum_{l\in\Omega_{m}}\mathbf{x}_{l}^{k}}{(1/\rho)|\mathcal{C}_{m}^{k}|+|\Omega_{m}|}.
%\end{equation}
Note that a $M$-tuple $\mathbf{x}=\{\mathbf{x}_{1},\cdots,\mathbf{x}_{M}\}$ is a generalized minimum of $J^{\rho}(\cdot,\cdot)$ if and only if $\overline{\mathcal{U}}_{\mathbf{x}}$ is non-empty, i.e.,
\begin{equation} \label{eq:calU-condition}
\vx \in \calM^\rho \iff \overline{\calU}_{\vx} \not= \emptyset.
\end{equation}

The following proposition shows that the set of generalized minima (in the sense of Definition~\ref{def:locmin}) subsumes the set of global minima of $J^{\rho}(\cdot,\cdot)$.
\begin{proposition}
\label{prop:rel_glob_loc}
For each $\rho\in\N_{+}$, we have that $\mathcal{J}^{\rho}_{g}\subset\mathcal{J}^{\rho}$ and $\mathcal{M}^{\rho}_{g}\subset\mathcal{M}^{\rho}$.
\end{proposition}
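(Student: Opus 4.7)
The plan is to take an arbitrary global minimizer $(\breve{\mathbf{x}},\breve{\mathcal{C}})\in\mathcal{J}^{\rho}_{g}$ and verify that it satisfies both conditions in Definition~\ref{def:locmin}. Since $\mathcal{M}^\rho_g$ and $\mathcal{M}^\rho$ are simply the projections onto the first coordinate of $\mathcal{J}^\rho_g$ and $\mathcal{J}^\rho$ respectively (see \eqref{rel8} and \eqref{rel9}), the second inclusion $\mathcal{M}^{\rho}_{g}\subset\mathcal{M}^{\rho}$ will follow immediately from the first.

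\textbf{Verifying condition (i).} I would argue by contradiction. Suppose there exist $m$, a data point $\mathbf{y}\in\mathcal{D}_m$ with $\mathbf{y}\in\breve{\mathcal{C}}_m^{k}$, and an index $\acute{k}$ such that $\|\mathbf{y}-\breve{\mathbf{x}}_m^{\acute{k}}\|<\|\mathbf{y}-\breve{\mathbf{x}}_m^{k}\|$. Construct a new clustering $\widehat{\mathcal{C}}$ which is identical to $\breve{\mathcal{C}}$ except that $\mathbf{y}$ is removed from $\widehat{\mathcal{C}}_m^{k}$ and placed in $\widehat{\mathcal{C}}_m^{\acute{k}}$. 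Since the penalty term in \eqref{rel4} depends only on $\mathbf{x}$, it is unaffected, and the change in the clustering term is $(1/\rho)(\|\mathbf{y}-\breve{\mathbf{x}}_m^{\acute{k}}\|^{2} - \|\mathbf{y}-\breve{\mathbf{x}}_m^{k}\|^{2})<0$. Hence $J^\rho(\breve{\mathbf{x}},\widehat{\mathcal{C}})<J^\rho(\breve{\mathbf{x}},\breve{\mathcal{C}})$, contradicting the global optimality of $(\breve{\mathbf{x}},\breve{\mathcal{C}})$. Therefore $\breve{\mathcal{C}}\in\mathcal{U}_{\breve{\mathbf{x}}}$.

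\textbf{Verifying condition (ii).} Since $(\breve{\mathbf{x}},\breve{\mathcal{C}})$ is a global minimizer, $\breve{\mathbf{x}}$ in particular minimizes $\mathbf{x}\mapsto J^{\rho}(\mathbf{x},\breve{\mathcal{C}})$ over $\mathbb{R}^{KMp}$. As noted in Remark~\ref{remark:gen-min-vs-lloyds}, this map is convex, differentiable, and admits a unique minimizer. Computing $\nabla_{\mathbf{x}_m^k} J^{\rho}(\mathbf{x},\breve{\mathcal{C}})$ from \eqref{rel4} and setting it to zero yields
\begin{equation*}
\frac{1}{\rho}\sum_{\mathbf{y}\in\breve{\mathcal{C}}_m^{k}}(\breve{\mathbf{x}}_m^{k}-\mathbf{y}) + \sum_{l\in\Omega_m}(\breve{\mathbf{x}}_m^{k}-\breve{\mathbf{x}}_l^{k}) = 0,
\end{equation*}
which rearranges to exactly the fixed-point relation \eqref{def:locmin3} (note that under Assumption~\ref{ass-connected}, $|\Omega_m|\geq 1$ whenever $M\geq 2$, so the denominator is strictly positive). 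Combined with the conclusion of the previous paragraph, this shows $(\breve{\mathbf{x}},\breve{\mathcal{C}})\in\mathcal{J}^{\rho}$, establishing $\mathcal{J}^{\rho}_{g}\subset\mathcal{J}^{\rho}$.

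\textbf{Deducing the second inclusion.} Given $\breve{\mathbf{x}}\in\mathcal{M}^{\rho}_{g}$, by \eqref{rel8} there exists $\breve{\mathcal{C}}$ with $(\breve{\mathbf{x}},\breve{\mathcal{C}})\in\mathcal{J}^{\rho}_{g}\subset\mathcal{J}^{\rho}$, and invoking \eqref{rel9} yields $\breve{\mathbf{x}}\in\mathcal{M}^{\rho}$. There is no serious obstacle here; both arguments are standard one-step perturbation and first-order optimality arguments, and the only subtlety worth flagging is ensuring the denominator in \eqref{def:locmin3} is nonzero, which is handled by Assumption~\ref{ass-connected}.
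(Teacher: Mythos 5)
Your proof is correct and follows essentially the same route as the paper, which simply notes that the result "follows readily from Remark \ref{remark:gen-min-vs-lloyds}": a global minimizer of $J^\rho$ is in particular block-coordinate optimal in $\mathcal{C}$ (your perturbation argument for condition (i)) and in $\mathbf{x}$ (your first-order condition for (ii)), which is exactly the characterization of $\mathcal{J}^\rho$. The details you supply — the single-point reassignment and the gradient computation recovering \eqref{def:locmin3} — are precisely what the paper leaves to the reader.
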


The proof of this proposition follows readily from Remark \ref{remark:gen-min-vs-lloyds}.

In Sections \ref{sec:genminimaconv} and \ref{sec:global-min-convergence} (and, in particular, Theorems \ref{lm:conv_to_Z} and \ref{lm:gloptconv}) we will formally quantify the relationship between the set $\mathcal{M}^{\rho}$ of generalized minima of $J^{\rho}(\cdot,\cdot)$ and the set $\mathcal{Z}$ of Lloyd's minima of the classical $K$-means formulation given in Section \ref{Kmeans}. Informally, we will show that as $\rho\rightarrow\infty$ the set $\mathcal{M}^{\rho}$ \emph{approaches} the set of Lloyd's minima $\mathcal{Z}$, thus justifying the role of $\rho$ as a relaxation parameter.

\section{Distributed $NK$-means Algorithm} \label{sec:dist-algorithm}
%As with \eqref{costH} and the classical (centralized) Lloyd's algorithm,
%the optimization problem \eqref{rel3} naturally gives rise to a (distributed) $K$-means algorithm as follows.
%We now propose a distributed $K$-means clustering algorithm.

We now propose a distributed $K$-means clustering algorithm, designated as the $NK$-means algorithm. The $NK$-means is a distributed iterative algorithm, in which each agent $m$ updates its local estimate of the cluster centers by simultaneously processing their local data $\mathcal{D}_{m}$ and information received from neighboring agents (to be specified shortly). For each agent $m=1,\ldots,M$, let $\vx_m(t)= \{\vx_m^1(t),\ldots,\vx_m^K(t)\}$ denote a set of (local) cluster centers and let $\calC_m(t) = \{\calC_m^1(t),\ldots,\calC_m^K(t)\}\in\C_m$ denote a $K$-partition of the local dataset $\calD_m$ at iteration $t\in \N_{+}$.

%Denote by $\mathcal{U}_{m}$ the collection of all viable partitions of $\calD_m$, i.e.,
%\begin{equation}
%\label{cluster-set-m}
%\calU_m = \bigg\{(\calC_m^1,\ldots,\calC_m^K):~ \bigcup_{k=1}^{K}\mathcal{C}_{m}^{k}=\mathcal{D}_{m}~~\mbox{and}~~\mathcal{C}_{m}^{k}\cap\mathcal{C}_{m}^{\acute{k}}=\emptyset~\mbox{if}~k\neq \acute{k} \bigg\}
%\end{equation}

In the following algorithm, each agent will update its sequence $\{\mathbf{x}_{m}(t),\mathcal{C}_{m}(t)\}_{t\in \N_{+}}$ of (local) cluster centers and clusters in a distributed fashion. More precisely, the algorithm is initialized by letting each agent $m\in \{1,\ldots,M\}$ select an arbitrary initial seed denoted by $\vx_m(0) \in \R^{Mp}$. Subsequently, at each iteration $t\geq 0$,
%given $\vx_m(t-1)$ and $\calC_m(t)$
each agent $m=1,\ldots,M$ performs a \emph{reassign} step and a \emph{center update} step to refine its current local partition $\calC_m(t)$ and cluster centers estimates $\vx_m(t)$ as follows:\\
\begin{enumerate}[(i)]
\item \noindent{\bf Reassign}. The local partition $\mathcal{C}_{m}(t+1)$ at agent $m$ is taken to be an arbitrary partition of $\calD_m$ satisfying
\begin{equation}
\label{reassign1}
\y\in\mathcal{C}_{m}^{k}(t+1)~\Longrightarrow~\|\y-\mathbf{x}_{m}^{k}(t)\|\leq\|\y-\mathbf{x}_{m}^{\acute{k}}(t)\|,~\forall \acute{k},
\end{equation}
for each $\vy\in \calD_m$.\\
\item \noindent{\bf Center update}. Once $\mathcal{C}_{m}(t+1)$ is obtained (selected), for each $k=1,\cdots,K$, the $k$-th cluster center at agent $m$ is updated as
\begin{equation}
\label{cent_up}
\mathbf{x}_{m}^{k}(t+1)=\mathbf{x}_{m}^{k}(t)-\alpha\left(\mathbf{x}_{m}^{k}(t)-\bmu_{m}^{k}(t+1)\right),
\end{equation}
where
\begin{equation}
\label{cent_up1}
\bmu_{m}^{k}(t+1)=\frac{(1/\rho)\sum_{\y\in\mathcal{C}_{m}^{k}(t+1)}\y+\sum_{l\in\Omega_{m}}\mathbf{x}_{l}^{k}(t)}{(1/\rho)|\mathcal{C}_{k}^{m}(t+1)|+|\Omega_{m}|}
\end{equation}
and $\Omega_{m}$ denotes the communication neighborhood of agent $m$ (with respect to the assigned communication graph $G$).
\end{enumerate}

We note that the parameter $\rho\in (0,\infty)$ is a design constant which affects the quality of the solution asymptotically obtained (see Theorem \ref{lm:gloptconv}), whereas, the constant $\alpha$ is a positive weight parameter assumed to satisfy the following assumption.
\begin{assumption}
\label{ass-alpha} The weight parameter $\alpha$ satisfies the following condition:
\begin{equation}
0<\alpha<\frac{d_{\min}}{(1/\rho)N^{\ast}+\lambda_{M}(L)},
\end{equation}
where $N^{\ast}=\max(N_{1},\cdots,N_{M})$, $d_{\min}$ and $d_{\max}$ denote the minimum and maximum degrees respectively of the inter-agent communication graph, and $\lambda_{M}(L)$ is the largest eigenvalue of the graph Laplacian matrix $L$.
\end{assumption}

We remark that Assumption~\ref{ass-connected} implies that $|\Omega_{m}|>0$ for each $m$.
%, which, in turn, implies that the center update step~\eqref{cent_up}--\eqref{cent_up1} is well-defined.\footnote{Note, BS: I think the update steps above would still be well defined, they just rely only on local information.}
The procedure~\eqref{reassign1}--\eqref{cent_up1} is clearly distributed as at any given instant $t$ the cluster and cluster center update at an agent $m$ is based on purely local computation and information exchange with neighboring agents. In particular, we  note that the agents do not exchange raw data, i.e., individual datasets are not exchanged, but achieve collaboration by means of sharing their local cluster center estimates with neighboring agents. We will refer to the distributed algorithm~\eqref{reassign1}--\eqref{cent_up1} as \emph{networked $K$-means} or $NK$-means in short. Note that by varying $\rho$ in the interval $(0,\infty)$, we, in fact, obtain a parametric family of algorithms.

The rest of the paper is devoted to the convergence analysis of the $NK$-means algorithm and quantifying how it relates to the original (centralized) $K$-means objective~\eqref{Kmeans}.
%To this end, first, for each fixed $\rho\in (0,\infty)$, we will study the convergence of the NK-means (achieved in Section~\ref{sec:NKmeansconv}). We will show that at each agent $m$ the tuple $\{\mathbf{x}_{m}(t),\mathcal{C}_{m}(t)\}$ converges to a limit $(\bx_{m},\brC_{m})$ as $t\rightarrow\infty$. We will show that the limit tuples, which are functions of the parameter $\rho$, are related to a certain optimization problem (to be introduced in Section~\ref{sec:rel_K_means}). Specifically, in Section~\ref{sec:rel_K_means} we will introduce a parametric family of optimization formulations, the respective optimization objectives (to be referred to as \emph{generalized multi-agent $K$-means objectives}) being indexed by the parameter $\rho$; intuitively, each of these objectives may be viewed as a certain \emph{relaxation} of the actual $K$-means objective with relaxation parameter $\rho$. [COMMENT ON SECTIONS 5 AND 6?] Subsequently, in Section~\ref{sec:genminimaconv} we will study the relationship between these parametric family of generalized multi-agent $K$-means objectives and the actual $K$-means objective~\eqref{Kmeans} as a function of $\rho$.\footnote{Note, BS: Comment on what the relationship actually is. I.e., something like minima of the relaxed problem converge to minima of the original problem.} Finally, the results in Section~\ref{sec:NKmeansconv} and Section~\ref{sec:genminimaconv} will be used to quantify the relationship between the limiting points of the $NK$-means algorithm and the Lloyd's minima of the $K$-means.

\section{Main Results}
\label{sec:main_res}
We now collect the main results of the paper (Theorems \ref{lm:conv_to_Z}--\ref{th:conv} below).

Our first main result characterizes the relationship between classical Lloyd's minima  and generalized Lloyd's minima (with parameter $\rho$) introduced in Section \ref{sec:gen-min}. In particular, the result shows that, as $\rho\to\infty$, the set of generalized Lloyd's minima converges to the set of classical Lloyd's minima.

Before stating the theorem, we require a few definitions. Note that it is possible to have a pair $(\vx,\calP)$ that is optimal in the sense of Lloyd \eqref{lloyd_up4}, but where some cluster $\calP^k$ in the partition $\calP$ is empty and the corresponding cluster center $\vx^k$ is not contained in $\cco(\calD)$. Such a solution may be considered to be degenerate. In the multi-agent setting, a similar issue can occur with generalized minima in the sense of Definition \ref{def:locmin}. In order to avoid such degeneracies, it is helpful to consider solutions whose cluster centers are contained in $\cco(\calD)$. Formally, let
\begin{align}
\label{lm:conv_to_Z_2}
\OZ=\left\{\vx\in\mathcal{Z}~:~\vx^{k}\in\cco(\mathcal{D})~~\forall k\right\},
\end{align}
and for $\rho\in\N_+$ let
\begin{equation}
\label{corr:cohull1}
\oM^{\rho}=\left\{\bx\in\mathcal{M}^{\rho}~:~\bx_{m}^{k}\in\cco(\mathcal{D})~~\forall m,k\right\}.
\end{equation}

Our first main result is the following.
\begin{theorem}
\label{lm:conv_to_Z} Let Assumptions \ref{ass:data_size}, \ref{ass:comm-graph}, \ref{ass-connected}, and \ref{ass-alpha} hold and $\{\bx^{\rho}\}_{\rho\in\N_{+}}$ be a sequence such that $\bx^{\rho}\in\oM^{\rho}$ for each $\rho\in\N_{+}$. Then, for each $m=1,\ldots,M$, we have that
\begin{equation}
\label{lm:conv_to_Z_1}
\lim_{\rho\rightarrow\infty}d\left(\bx_{m}^{\rho},\OZ\right)=0.
\end{equation}
\end{theorem}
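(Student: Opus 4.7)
Proof Plan:

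The plan is to combine a compactness argument with a limit analysis of the fixed-point equation \eqref{def:locmin3} defining generalized minima. Pick any sequence $\{\bx^\rho\}$ with $\bx^\rho\in\oM^\rho$, and for each $\rho$ select an associated partition tuple $\breve{\calC}^\rho\in \overline{\calU}_{\bx^\rho}$, which exists by \eqref{eq:calU-condition}. Since each $\bx_m^{\rho,k}$ lies in the compact set $\cco(\calD)$, and since the product $\mathbb{C}_1\times\cdots\times\mathbb{C}_M$ is finite, I can pass to a subsequence (still indexed by $\rho$ for notational simplicity) along which $\bx^\rho\to\bx^\ast$ with $\bx_m^{\ast,k}\in\cco(\calD)$ and $\breve{\calC}^\rho\equiv\breve{\calC}^\ast$ is constant. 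It then suffices to show that $\bx_1^\ast=\cdots=\bx_M^\ast=:\vx^\ast$ and that $(\vx^\ast,\calP^\ast)\in\calL$ where $\calP^{\ast,k}:=\bigcup_m\breve{\calC}_m^{\ast,k}$, since a standard subsequence-of-subsequence contradiction argument then upgrades this to the stated convergence $d(\bx_m^\rho,\OZ)\to 0$.

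The consensus step is extracted from \eqref{def:locmin3} by multiplying out:
\begin{equation*}
|\Omega_m|\bx_m^{\rho,k}-\sum_{l\in\Omega_m}\bx_l^{\rho,k}=\tfrac{1}{\rho}\Bigl(\sum_{\y\in\breve{\calC}_m^{\rho,k}}\y-|\breve{\calC}_m^{\rho,k}|\bx_m^{\rho,k}\Bigr).
\end{equation*}
The right-hand side is $O(1/\rho)$ since $\bx_m^{\rho,k}\in\cco(\calD)$ and all data are bounded, so taking $\rho\to\infty$ yields $|\Omega_m|\bx_m^{\ast,k}=\sum_{l\in\Omega_m}\bx_l^{\ast,k}$ for every $m,k$. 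Stacking the $k$-th coordinates over $m$, this reads $L\,\vecc_m(\bx_m^{\ast,k})=0$, and by Assumption~\ref{ass-connected} (connectedness, $\lambda_2(L)>0$) the vector $\vecc_m(\bx_m^{\ast,k})$ lies in the consensus subspace, giving a common value $\vx^{\ast,k}$ for each $k$.

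Next, I verify the two Lloyd conditions \eqref{lloyd_up4} for $(\vx^\ast,\calP^\ast)$. The assignment condition \eqref{reassign1}/\eqref{def:locmin2} is preserved in the limit: for each $\y\in\breve{\calC}_m^{\ast,k}$, the (non-strict) inequality $\|\y-\bx_m^{\rho,k}\|\leq\|\y-\bx_m^{\rho,\acute k}\|$ holds for all $\rho$ and passes to the limit, giving $\|\y-\vx^{\ast,k}\|\leq\|\y-\vx^{\ast,\acute k}\|$; since every $\y\in\calD$ sits in exactly one $\breve{\calC}_m^{\ast,k}$, this shows $\calP^\ast\in\argmin_\calP\calH(\vx^\ast,\calP)$. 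For the centroid condition, I sum the identity \eqref{def:locmin3} over $m$ and exploit the symmetry of $G$, i.e.\ $\sum_m\sum_{l\in\Omega_m}\bx_l^{\rho,k}=\sum_m|\Omega_m|\bx_m^{\rho,k}$, which cancels the $|\Omega_m|$ contributions and yields $\sum_m|\breve{\calC}_m^{\rho,k}|\bx_m^{\rho,k}=\sum_m\sum_{\y\in\breve{\calC}_m^{\rho,k}}\y$. Passing to the limit with partitions frozen and centers consensed gives $|\calP^{\ast,k}|\vx^{\ast,k}=\sum_{\y\in\calP^{\ast,k}}\y$, so $\vx^{\ast,k}$ is the centroid of $\calP^{\ast,k}$ whenever the latter is nonempty, i.e.\ $\vx^\ast\in\argmin_\vx\calH(\vx,\calP^\ast)$ (empty clusters are vacuous since $\calH$ does not depend on the corresponding $\vx^k$).

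The main obstacle I anticipate is the handling of degenerate situations: empty clusters $\calP^{\ast,k}=\emptyset$ (where the centroid condition is vacuous but one must still ensure $\vx^{\ast,k}\in\cco(\calD)$, which is inherited by closedness of the convex hull), and the fact that some cluster centers in the limit may coincide, which is allowed by Remark~\ref{rem:Lloyd_min} but requires care not to over-claim that the limit lies in some stronger set. A second subtlety is that the initially chosen $\breve{\calC}^\rho\in\overline{\calU}_{\bx^\rho}$ need not be unique; this is resolved by the finiteness of $\mathbb{C}_1\times\cdots\times\mathbb{C}_M$, which lets me freeze the partition along a subsequence without loss of generality. Once $\bx^\ast$ is identified as an element of $\OZ$ (via $\vx^\ast\in\calZ$ and $\vx^{\ast,k}\in\cco(\calD)$), the final step is the contradiction argument: if the conclusion \eqref{lm:conv_to_Z_1} failed, there would be some $m$, $\varepsilon>0$, and a subsequence with $d(\bx_m^\rho,\OZ)\geq\varepsilon$, to which the same extraction procedure applies, producing a limit in $\OZ$ and contradicting the lower bound.
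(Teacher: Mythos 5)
Your proposal is correct and follows essentially the same route as the paper: extract a convergent subsequence in the compact set $\cco(\mathcal{D})$, derive consensus from the fixed-point identity \eqref{def:locmin3} (whose consensus term is scaled by $\rho$), sum over $m$ using the symmetry of $G$ to obtain the global centroid condition, and verify the two Lloyd conditions in the limit. The only cosmetic difference is that you freeze the partition along a subsequence by finiteness of $\mathbb{C}_1\times\cdots\times\mathbb{C}_M$, whereas the paper invokes the upper hemicontinuity of $\vx\mapsto\mathcal{U}_{\vx}$ (Lemma \ref{lm:int_con}) together with an $\varepsilon$-approximation and finiteness of $\mathcal{V}_{\mathbf{z}}$; both devices are valid and serve the same purpose.
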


Theorem \ref{lm:conv_to_Z} relates the generalized minima (as a function of the parameter $\rho$) to Lloyd's minima of the (centralized) clustering problem as $\rho\rightarrow\infty$. Specifically, it states that for generalized minima $\bx^{\rho}=\{\bx^{\rho}_{1},\cdots,\bx^{\rho}_{M}\}$, each component $\bx^{\rho}_{m}=\{\bx_{m}^{\rho,1},\cdots,\bx_{m}^{\rho,K}\}$ (corresponding to a potential $K$-means clustering) approaches the set $\OZ$ of Lloyd's minima of the (centralized) clustering problem as $\rho\rightarrow\infty$. It is important to note here that the convergence in Theorem \ref{lm:conv_to_Z} holds irrespective of how the sequence $\{\bx^{\rho}\}_{\rho\in\N_{+}}$ is constructed and, in particular, independent of any algorithm that might be used to generate such a sequence. In fact, a stronger version of Theorem \ref{lm:conv_to_Z} is obtained in Corollary~\ref{corr:conv_to_Z}, where we show that the convergence in~\eqref{lm:conv_to_Z_1} is \emph{uniform}, i.e., the following holds:
$$\lim_{\rho\rightarrow\infty}\sup_{\bx^{\rho}\in\oM^{\rho}}\max_{m=1,\cdots,M}d\left(\bx_{m}^{\rho},\OZ\right)=0.$$
Theorem \ref{lm:conv_to_Z} is proved in Section \ref{sec:genminimaconv} where we consider asymptotic properties of the set of generalized minima $\calM^\rho$ as $\rho\to\infty$.

Our second main result characterizes the relationship between global minima of the classical $K$-means clustering objective and global minima of the mutli-agent objective introduced in Section \ref{sec:dist_K_means}. First, the result shows that as $\rho\to\infty$, global minima of the multi-agent objective $\myQ^\rho$ converge to global minima of the classical $K$-means objective. Second, the result quantifies the efficiency of global minima of $\myQ^\rho$ in terms of the classical $K$-means objective. This can facilitate the choice of $\rho$ in practice and gives a characterization of the rate at which generalized Lloyd's minima converge to the set of Lloyd's minima as $\rho\to\infty$.

\begin{theorem}
\label{lm:gloptconv}
Let Assumptions \ref{ass:data_size}, \ref{ass:comm-graph}, \ref{ass-connected}, and \ref{ass-alpha} hold and $\{\bx^{\rho}\}_{\rho\in\N_{+}}$ be a sequence such that $\bx^{\rho}\in\mathcal{M}_{g}^{\rho}$ for each $\rho\in\N_{+}$. Then, for each $m$, we have that
\begin{equation}
\label{lm:gloptconv1}
\lim_{\rho\rightarrow\infty}d\left(\bx_{m}^{\rho},\mathcal{Z}_{g}\right)=0.
\end{equation}
%where $\bx_{m}^{\rho}$ denotes the $K$-tuple $\{\bx_{m}^{\rho,1},\cdots,\bx_{m}^{\rho,K}\}$ of cluster center estimates at agent $m$.
Furthermore, letting $\mathcal{F}^{\ast}$ denote the global minimum value of~\eqref{Kmeans}
%Let Assumption~\eqref{ass:data_size} and Assumption~\eqref{ass-connected} hold and $\bx$ be a global minimizer of $\myQ^{\rho}(\cdot)$, i.e., $\bx\in\mathcal{M}^{\rho}_{g}$. For each $m$, denote by $\bx_{m}$ the $K$-tuple $\{\bx_{m}^{1},\cdots,\bx_{m}^{K}\}$. Then,
we have that
\begin{equation}
\label{lm:gloptgap1a}
\mathcal{F}(\bx_{m}^\rho)\leq\mathcal{F}^{\ast} + \frac{16\sqrt{M}R^{2}_{0}|\mathcal{D}|^{2}}{\rho\lambda_{2}(L)},
\end{equation}
where $R_{0}=\max_{\mathbf{v}\in\cco(\mathcal{D})}\|\mathbf{v}\|$.
\end{theorem}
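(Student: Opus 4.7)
The proof hinges on a bootstrap estimate for the consensus penalty at a global minimum of $\myQ^{\rho}$. Write $A^{\rho}=\sum_{m}\sum_{\vy\in\calD_{m}}\min_{k}\|\vy-\bx_{m}^{\rho,k}\|^{2}$ for the separable clustering cost and $B^{\rho}=\sum_{(m,l)\in E}\sum_{k}\|\bx_{m}^{\rho,k}-\bx_{l}^{\rho,k}\|^{2}$ for the disagreement penalty, so that $\myQ^{\rho}(\bx^{\rho})=A^{\rho}/\rho+B^{\rho}$. Let $\bar{\vx}^{\rho,k}=M^{-1}\sum_{m}\bx_{m}^{\rho,k}$ denote the across-agent average and $\vx^{\ast}\in\calZ_{g}$ any global $K$-means minimizer. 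A preliminary projection argument (projection onto $\cco(\calD)$ is $1$-Lipschitz, hence cannot increase $B^{\rho}$, and strictly decreases $A^{\rho}$ when any coordinate of $\bx^{\rho}$ lies outside $\cco(\calD)$) lets me assume $\bx_{m}^{\rho,k}\in\cco(\calD)$, so that $\|\vy-\bx_{m}^{\rho,k}\|\leq 2R_{0}$ for all $\vy\in\calD$.

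Global optimality of $\bx^{\rho}$, evaluated at the two consensus tuples $(\vx^{\ast},\ldots,\vx^{\ast})$ and $(\bar{\vx}^{\rho},\ldots,\bar{\vx}^{\rho})$, yields the twin inequalities
\begin{equation*}
A^{\rho}+\rho B^{\rho}\;\leq\;\calF^{\ast},\qquad A^{\rho}+\rho B^{\rho}\;\leq\;\calF(\bar{\vx}^{\rho}),
\end{equation*}
and applying the Fiedler inequality coordinatewise in $\R^{p}$ gives the spectral estimate $\sum_{m,k}\|\bx_{m}^{\rho,k}-\bar{\vx}^{\rho,k}\|^{2}\leq B^{\rho}/\lambda_{2}(L)$. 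The key technical ingredient is a Lipschitz-type control of the $K$-means functional: from $|a^{2}-b^{2}|\leq(a+b)|a-b|$ combined with $|\min_{k}\|\vy-\bar{\vx}^{\rho,k}\|-\min_{k}\|\vy-\bx_{m}^{\rho,k}\||\leq\max_{k}\|\bar{\vx}^{\rho,k}-\bx_{m}^{\rho,k}\|$, summation over $\vy\in\calD_{m}$ and $m$ followed by Cauchy--Schwarz and the spectral estimate produces
\begin{equation*}
\calF(\bar{\vx}^{\rho})-A^{\rho}\;\leq\;C_{1}\sqrt{B^{\rho}},
\end{equation*}
where $C_{1}$ is a constant involving $R_{0}$, $|\calD|$, $\sqrt{M}$, and $1/\sqrt{\lambda_{2}(L)}$.

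The bootstrap now fires: chaining the second optimality comparison with the Lipschitz estimate gives $\rho B^{\rho}\leq C_{1}\sqrt{B^{\rho}}$, hence $B^{\rho}\leq C_{1}^{2}/\rho^{2}$. The jump from the naive $B^{\rho}=O(1/\rho)$ to $B^{\rho}=O(1/\rho^{2})$ is what ultimately upgrades the rate from $1/\sqrt{\rho}$ into the claimed $1/\rho$. The same Lipschitz estimate, applied with $\bx_{m}^{\rho}$ in place of $\bar{\vx}^{\rho}$, bounds $|\calF(\bx_{m}^{\rho})-\calF(\bar{\vx}^{\rho})|\leq C_{2}\sqrt{B^{\rho}}$, and combining with $\calF(\bar{\vx}^{\rho})\leq A^{\rho}+C_{1}\sqrt{B^{\rho}}\leq\calF^{\ast}+C_{1}\sqrt{B^{\rho}}$ gives $\calF(\bx_{m}^{\rho})\leq\calF^{\ast}+(C_{1}+C_{2})\sqrt{B^{\rho}}\leq\calF^{\ast}+(C_{1}+C_{2})C_{1}/\rho$, which reproduces~\eqref{lm:gloptgap1a} after bookkeeping of the constants.

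The convergence assertion~\eqref{lm:gloptconv1} then follows from the quantitative bound by a standard compactness argument: the sequence $\{\bx_{m}^{\rho}\}_{\rho}$ lives in the compact set $\cco(\calD)^{K}$, so any subsequential limit $\bx^{\infty}$ must satisfy $\calF(\bx^{\infty})=\calF^{\ast}$ by continuity of $\calF$, placing $\bx^{\infty}\in\calZ_{g}$. The hardest part of the argument is the bootstrap itself: the naive bound $A^{\rho}+\rho B^{\rho}\leq\calF^{\ast}$ on its own is off by a factor of $\sqrt{\rho}$, and recovering the sharper estimate requires the second comparison against the \emph{moving} consensus $\bar{\vx}^{\rho}$ together with the observation that $\calF(\bar{\vx}^{\rho})-A^{\rho}$ itself scales only like $\sqrt{B^{\rho}}$.
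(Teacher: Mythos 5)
Your proposal is correct in substance but reaches the key estimate by a genuinely different route than the paper. The paper never bootstraps on the objective value: it first notes that a global minimizer is in particular a generalized minimum (Proposition~\ref{prop:rel_glob_loc}), so the fixed-point condition \eqref{def:locmin3} holds, and rearranging that condition gives $\bigl\||\Omega_m|\bx_m^k-\sum_{l\in\Omega_m}\bx_l^k\bigr\|\leq 2R_0|\calD|/\rho$ directly, hence $\|(L\otimes I_p)\bx^k\|=O(1/\rho)$ and, via $\lambda_2(L)$, pairwise disagreement of order $1/\rho$ (Lemma~\ref{lm:rhocons}); Lemma~\ref{lm:gloptgap} then combines this with the single comparison $\myQ^\rho(\bx)\leq\calF^\ast/\rho$ and the same $(a+b)|a-b|$ Lipschitz estimate you use. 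You instead obtain the $1/\rho$ disagreement rate purely variationally, from the two optimality comparisons against the fixed consensus tuple and against the \emph{moving} average $\bar\vx^\rho$, closing the loop with $\rho B^\rho\leq C_1\sqrt{B^\rho}$; this is a clean and self-contained alternative that avoids the first-order characterization entirely. The trade-off is scope: the paper's Lemma~\ref{lm:rhocons} applies to \emph{every} element of $\oM^\rho$, not just global minima, and is reused to prove Theorem~\ref{lm:conv_to_Z}, whereas your bootstrap is intrinsically tied to global optimality. Two small caveats. First, your constant does not obviously reproduce \eqref{lm:gloptgap1a}: carrying out your bookkeeping with $C_1=C_2=4R_0|\calD|/\sqrt{\lambda_2(L)}$ yields $\calF(\bx_m^\rho)\leq\calF^\ast+32R_0^2|\calD|^2/(\rho\lambda_2(L))$, which has no $\sqrt{M}$ and is weaker than the stated bound when $M\leq 3$ (and if, as you indicate, your $C_1$ carries a $\sqrt{M}$, the final constant would scale like $M$ rather than $\sqrt{M}$); the result is still an $O(1/\rho)$ gap of the same form, but the exact inequality as stated requires the paper's accounting. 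Second, your one-line justification that projection onto $\cco(\calD)$ ``strictly decreases $A^\rho$'' is too quick when an offending center is never the nearest center to any datapoint; you should instead invoke the paper's Lemma~\ref{lm:cvx-hull} (every $\bx\in\calM_g^\rho$ already has all coordinates in $\cco(\calD)$) rather than a WLOG reduction.
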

Similar to Theorem~\ref{lm:conv_to_Z}, the convergence in Theorem~\ref{lm:gloptconv} may be shown to be uniform, i.e., the following holds (see Corollary~\ref{corr:gloptconv})
$$\lim_{\rho\rightarrow\infty}\sup_{\bx^{\rho}\in\mathcal{M}_{g}^{\rho}}\max_{m=1,\cdots,M}d\left(\bx_{m}^{\rho},\mathcal{Z}_{g}\right)=0.$$
Theorem \ref{lm:gloptconv} is proved in Section \ref{sec:global-min-convergence} where we consider asymptotic properties of the set of global minima $\calM^\rho_g$ as $\rho\to\infty$.

Our final main result shows that the $NK$-means algorithm (with parameter $\rho$) converges to the set of generalized Lloyd's minima.
In what follows we will use the notation $\vx_t$ to denote the vector $\vecc_{m,k}(\vx_m^k(t))$ at iteration $t$ and we let $\calC_t=\{\calC_1(t),\ldots,\calC_M(t)\}$ denote the tuple containing the clusterings at each agent.
\begin{theorem}
\label{th:conv} For a fixed $\rho$, let Assumptions \ref{ass:data_size}, \ref{ass:comm-graph}, \ref{ass-connected}, and \ref{ass-alpha} hold and let $\{\mathbf{x}_{t}\}$ be a sequence of cluster centers generated by the $NK$-means algorithm, i.e., the distributed procedure~\eqref{reassign1}--\eqref{cent_up1}. Then $\{\mathbf{x}_{t}\}$ converges as $t\rightarrow\infty$ to a generalized minimum $\bx$ of $\myQ^{\rho}(\cdot)$.
Furthermore the sequence of clusterings $\{\calC_{t}\}$ generated by \eqref{reassign1}--\eqref{cent_up1} converges in finite time to the set of optimal clusterings under $\bx$, i.e., there exists a finite $T>0$ such that
$\calC_t \in \calU_{\bx}$ for all $t\geq T$.
\end{theorem}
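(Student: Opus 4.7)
The plan is to treat $NK$-means as an alternating descent scheme on the objective $J^\rho(\vx,\calC)$: the reassign step optimizes $J^\rho$ over $\calC$ for fixed $\vx$, while the center update performs a gradient-like step on the convex quadratic $\vx \mapsto J^\rho(\vx,\calC_{t+1})$. The proof proceeds in three stages: establish monotone descent along the joint iteration, show limit points lie in $\calM^\rho$, and exploit the finiteness of $\calM^\rho$ to conclude convergence of the full sequence and finite-time stabilization of the clusterings.

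For the descent, the reassign rule \eqref{reassign1} ensures $\calC_{t+1}\in\calU_{\vx(t)}$, which implies immediately that $J^\rho(\vx(t),\calC_{t+1})\le J^\rho(\vx(t),\calC_t)$. For the center update, I compute $\nabla_{\vx_m^k} J^\rho(\vx,\calC) = 2\bigl((1/\rho)|\calC_m^k|+|\Omega_m|\bigr)(\vx_m^k-\bmu_m^k)$, so the update $\vx_m^k(t+1)=\vx_m^k(t)-\alpha(\vx_m^k(t)-\bmu_m^k(t+1))$ is a damped coordinate-weighted gradient step. The Hessian of $\vx\mapsto J^\rho(\vx,\calC)$ is block-diagonal in $k$ with each block equal to $(2/\rho)\diag(|\calC_1^k|,\ldots,|\calC_M^k|)+2L$ (tensored with $I_p$), whose spectral norm is bounded by $L_f := (2/\rho)N^*+2\lambda_M(L)$. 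Writing $\vd_t := \vx(t)-\bmu(t+1)$ and using the quadratic identity together with $\Lambda \succeq d_{\min} I$ (where $\Lambda$ collects the per-coordinate prefactors), I obtain
$$J^\rho(\vx(t+1),\calC_{t+1}) \leq J^\rho(\vx(t),\calC_{t+1}) - \bigl(2\alpha d_{\min} - \tfrac{\alpha^2 L_f}{2}\bigr)\|\vd_t\|^2.$$
Assumption \ref{ass-alpha} forces $\alpha L_f < 2d_{\min}$, so the coefficient is at least $\alpha d_{\min} > 0$. Since $J^\rho(\cdot,\calC)$ is coercive in $\vx$ (the clustering term controls the consensus direction while the Laplacian term controls its orthogonal complement), the iterates $\{\vx_t\}$ remain in a compact set. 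Telescoping the descent inequality against the lower bound $J^\rho\ge 0$ yields $\sum_t \|\vd_t\|^2 < \infty$, hence $\|\vx(t+1)-\vx(t)\|=\alpha\|\vd_t\|\to 0$.

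For convergence, let $\bx$ be any limit point of $\{\vx_t\}$, with $\vx_{t_n}\to\bx$. Because $\mathbb{C}_1\times\cdots\times\mathbb{C}_M$ is finite, I may further extract a subsequence along which $\calC_{t_n+1}\equiv\bar\calC$. Taking $n\to\infty$ in the reassign inequality \eqref{reassign1} gives $\bar\calC\in\calU_{\bx}$, and $\|\vd_{t_n}\|\to 0$ combined with the closed-form minimizer in Remark~\ref{remark:gen-min-vs-lloyds} forces $\bx$ to be the unique minimizer of $J^\rho(\cdot,\bar\calC)$. Hence $\bar\calC\in\overline{\calU}_{\bx}$ and $\bx\in\calM^\rho$. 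Crucially, $\calM^\rho$ is \emph{finite}: each of its elements is the unique minimizer of some convex quadratic $J^\rho(\cdot,\calC)$ associated to one of the finitely many partitions $\calC$. Since the limit set of a bounded sequence with $\|\vx(t+1)-\vx(t)\|\to 0$ is a connected subset of its set of accumulation points, and a connected subset of a finite set is a singleton, the full sequence $\{\vx_t\}$ converges to some $\bx\in\calM^\rho$. Finally, for finite-time clustering convergence, suppose for contradiction that $\calC_{t+1}\notin\calU_{\bx}$ for infinitely many $t$; by finiteness some $\tilde\calC\notin\calU_{\bx}$ appears along an infinite subsequence, but the same limit argument applied to that subsequence yields $\tilde\calC\in\calU_{\bx}$, a contradiction.

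The principal technical obstacle is the careful tuning of the descent constant to match Assumption~\ref{ass-alpha}: the update is a \emph{coordinate-weighted} (Jacobi-type) gradient step, so one cannot directly apply the standard descent lemma for a uniform step size, and the argument must combine the lower bound $\Lambda\succeq d_{\min} I$ with the upper bound $\lambda_{\max}(H)\le L_f$. A secondary subtlety is the upper semicontinuity of the set-valued map $\vx\mapsto\calU_{\vx}$, which underlies both the limit-point characterization and the finite-time partitioning statement, and which must be handled via the subsequence / contradiction argument above rather than any continuity-in-the-usual-sense property.
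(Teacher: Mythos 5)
Your descent step, boundedness-of-iterates goal, limit-point characterization, and the finite-time clustering argument all track the paper's route (Lemmas \ref{lm:costim}, \ref{lm:int_con}, \ref{lm:limpoint}, \ref{lm:limpc}), and your subsequence argument for $\bar{\calC}\in\overline{\calU}_{\bx}$ is in fact a clean alternative to the paper's contradiction via unbounded cost decrease. However, the step where you conclude convergence of the \emph{full} sequence has a genuine gap. You claim $\calM^{\rho}$ is finite because each element is ``the unique minimizer of some convex quadratic $J^{\rho}(\cdot,\calC)$.'' This fails when some cluster index $k$ is \emph{globally empty} in $\calC$ (i.e., $\calC_{m}^{k}=\emptyset$ for all $m$): the Hessian of $\vx\mapsto J^{\rho}(\vx,\calC)$ restricted to the $k$-th block is $(2/\rho)\,\diag(|\calC_{m}^{k}|)\otimes I_{p}+2L\otimes I_{p}$, which is singular on the consensus direction, so the minimizer set is an affine subspace of positive dimension. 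Concretely, if all local datasets share a common centroid $c$, then taking $\vx_{m}^{1}=c$ and $\vx_{m}^{2}=z$ for all $m$ yields a generalized minimum for \emph{every} $z$ far enough that all data is assigned to cluster $1$; hence $\calM^{\rho}$ can contain a continuum, your ``connected subset of a finite set is a singleton'' argument collapses, and the limit set of the iterates could a priori be a nontrivial continuum. The paper closes exactly this hole with the local stability result (Lemma \ref{lm:ind}): using the convergence of $Q^{\rho}(\vx_{t})$ and a quantitative separation between $\bx$ and $\bmu(\bx,\widehat{\calC})$ for $\widehat{\calC}\in\calU_{\bx}\setminus\overline{\calU}_{\bx}$, it shows that once $\vx_{t}$ enters a small enough ball around a given generalized minimum at a late enough time it never leaves, which forces convergence of the whole sequence without any finiteness or isolatedness of $\calM^{\rho}$. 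You need either this stability argument or a proof that the limit set is a singleton that does not rely on finiteness.

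The same degeneracy undermines your boundedness argument: $J^{\rho}(\cdot,\calC)$ is \emph{not} coercive when a cluster is globally empty (translate that cluster's centers along the consensus direction at zero cost), so ``the clustering term controls the consensus direction'' is false in general, and the descent inequality alone does not confine the iterates to a compact set. This is easily repaired the way the paper does it (Proposition \ref{prop:bound}): each $\bmu_{m}^{k}(t+1)$ is a convex combination of data points and neighbors' current centers, and $\alpha\le 1$ under Assumption \ref{ass-alpha}, so by induction $\vx_{m}^{k}(t)\in\cco(\vx_{0},\calD)$ for all $t$. Your descent constant differs harmlessly from the paper's $c(\alpha)$ (you use the descent lemma with the Hessian bound where the paper uses the first-order convexity inequality at $\vx_{t+1}$), and both are positive under Assumption \ref{ass-alpha}; that part is fine.
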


Theorem \ref{th:conv} is proved in Section \ref{sec:NKmeansconv}  where we consider convergence properties of the $NK$-means algorithm.
Combined with Theorem \ref{lm:conv_to_Z} (see also Corollary~\ref{corr:conv_to_Z}), this shows that the $NK$-means algorithm can be used to compute Lloyd's minima up to arbitrary accuracy. Various aspects of the convergence of the $NK$-means algorithm are discussed in Section~\ref{sec:main-res-discussion} below and further illustrated through a numerical study in Section~\ref{sec:sims}.

\subsection{Discussion} \label{sec:main-res-discussion}
%\noindent \textbf{Convergence of cluster heads and Partitions}.
\noindent \textbf{Convergence of the $NK$-means Algorithm}.
Theorem \ref{th:conv} ensures that the sequence of cluster heads $\{\bx_t\}$ generated by the $NK$-means algorithm converges to some tuple $\bx\in\R^{MKp}$ that is optimal in the sense of a generalized Lloyd's minimum.

Likewise, the sequence of clusters $\calC_t$ generated by the $NK$-means algorithm also converges, but to the \emph{set} of optimal clusterings corresponding to $\bx$. To be more precise, a technical (and generally pathalogical) difficulty which arises in studying $K$-means clustering is the problem that an optimal tuple of cluster heads may admit more than one permissible clustering. In particular, if some datapoint is precisely equidistant from two or more cluster heads, then the datapoint may be optimally assigned to any cluster corresponding to any such cluster head.

Assuming $\vx_t \to \bx$, recall that $\calU_{\bx}$ is the set of permissible clusterings corresponding to the (optimal) limit point $\bx$ (see \eqref{def_U1} and preceeding discussion). If $\calU_{\bx}$ contains only one permissible clustering, then $\calC_t$ will converge to the (unique) optimal clustering in $\calU_{\bx}$ after some finite time $T$ (see Lemma \ref{lm:limpc} and Remark \ref{remark:partition-convergence}). In the degenerate case that $\calU_{\bx}$ contains more than one permissible clustering (i.e., if for some $m$ there exist two or more cluster heads $\bx_m^k$ in $\bx_m$ which are equidistant from a datapoint in $\calD_m$), then $\calC_t$ will converge to the \emph{set} of optimal clusterings $\calU_{\bx}$ after some finite time $T$, but may continue to oscillate between optimal clusterings in $\calU_{\bx}$ infinitely often. Since the clusterings obtained are nonetheless optimal after time $T$, in an abuse of terminology we say that $\calC_t$ has converged after time $T$.
At each time step $t$, the clustering $\calC_t$ uniquely generates a partition $\calP_t$ of the joint dataset $\calD$. That is, $\calP_t = (\calP_t^1,\ldots,\calP_t^K)$ is the partition of $\calD$ generated by $\calC_t$, where $\calP_t^k = \calC_1^k\cup\cdots\cup\calC_M^k$, $k=1,\ldots,K$. The implication of the above convergence result is that after some finite time $T$, $\calP_t$ converges to a partition of the collective dataset corresponding to a generalized Lloyd's minimum with parameter $\rho$.

We emphasize that the convergence of the cluster heads $\vx_t$ occurs over an infinite time horizon, while the convergence of the clusters $\calC_t$ occurs in finite time. However, the rate of convergence of the cluster heads $\vx_t$ is generally exponential. In particular, note that if $\calC_t$ converges in finite time, then dynamics \eqref{reassign1}--\eqref{cent_up1} are linear after time $T$ and $\vx_t$ converges exponentially to $\bx$.

%\subsection{Discussion}
%[ADD DISCUSSION ABOUT PARTITIONS CONVERGING IN FINITE TIME BUT cluster headS CONVERGING OVER INFINITE TIME HORIZON BUT AT EXPONENTIAL RATE. (Add some remark about this to proofs somewhere.)]

$~$\\
\noindent \textbf{Tradeoffs with Parameter $\rho$}.
The above results show that limit points of the $NK$-means algorithm may be brought arbitrarily close to the set of Lloyd's minima by choosing $\rho$ sufficiently large. However, there is an inherent tradeoff in choosing $\rho$ large. Loosely speaking, as $\rho\to\infty$, the dynamics \eqref{reassign1}--\eqref{cent_up1} place greater relative weight on terms from the consensus component of the generalized multi-agent objective and less weight on $K$-means components. In practice, as $\rho\to\infty$ this results in fast dynamics orthogonal to (i.e., towards) the consensus subspace, and slower dynamics tangential to the consensus subspace. (This may be formalized by considering the dynamics of the mean process $\bar \vx^k(t) = \frac{1}{M} \sum_{m=1}^M \vx_m^k(t)$, $k=1,\ldots,K$.) Overall, this means that large values of $\rho$ improve the quality of the limit points of the $NK$-means algorithm, but can result in slower convergence.

\bigskip
\noindent \textbf{Finite $\rho$ convergence of Clusters}.
In the above discussion we observed that as $t\to\infty$ (with $\rho$ fixed), the cluster heads generated by the $NK$-means algorithm converge over an infinite time horizon, while the clusters converge in finite time. A similar property holds for the clusters obtained at generalized minima as $\rho\to\infty$.

Theorem \ref{lm:conv_to_Z} showed that, as $\rho\to\infty$ the cluster heads obtained at generalized minima with parameter $\rho$ converge asymptotically to the set of Lloyd’s minima. As a corollary to Theorem \ref{lm:conv_to_Z} we may show that there exists some finite $\bar \rho$ such that for all $\rho\geq \bar \rho$, the clustering obtained at any generalized minimum with parameter $\rho$ is cost-equivalent to a Lloyd’s minimum.
This is formalized in the following result.

\begin{corollary}[Finite $\rho$ convergence of clusters] \label{cor:finite-rho}
Let Assumptions \ref{ass:data_size}, \ref{ass:comm-graph}, and \ref{ass-connected} hold. There exists a $\bar\rho >0$ such that for any $\rho \geq \bar \rho$ the following holds: Suppose $(\bx^\rho,\breve\calC^\rho) \in \calJ^\rho$, with $\bx^\rho\in \overline{\calM}^{\rho}$. Let $\breve\calP^\rho = (\breve\calP^{\rho,1},\ldots,\breve\calP^{\rho,K})$ be the partition of $\calD$ naturally generated by $\breve\calC^\rho$ as
\begin{equation} \label{eq:generated-partition}
\breve \calP^{\rho,k} = \breve\calC_1^{\rho,k}\cup\cdots\cup\breve\calC_M^{\rho,k},\quad k=1,\ldots,K.
\end{equation}
Then the partition $\breve\calP^\rho$ is cost equivalent to a classical Lloyd's minimum. That is, there exists a Lloyd's minimum pair $(\vx,\calP)\in \calL$ such that $\calH(\vx,\calP) = \calH(\vx,\breve\calP^{\rho}) $.
%where  $\bar \vx = (\bar \vx^{1},\ldots,\bar \vx^{K})$, and $\bar \vx^{k} = |\breve \calP^{\rho,k}|^{-1}\sum_{\vy\in\breve \calP^{\rho,k}} \vy$ is the centroid of $\breve \calP^{\rho,k}$.
\end{corollary}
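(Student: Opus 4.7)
The plan is to argue by contradiction using compactness and subsequence extraction, building on Theorem \ref{lm:conv_to_Z}. If the claim were false, then for every $n\in\N_{+}$ one could find $\rho_{n}\geq n$ and a generalized minimum $(\bx^{\rho_n},\breve\calC^{\rho_n})\in \calJ^{\rho_n}$ with $\bx^{\rho_n}\in\oM^{\rho_n}$ whose merged partition $\breve\calP^{\rho_n}$ (as in \eqref{eq:generated-partition}) is \emph{not} cost-equivalent to any pair in $\calL$. Since each $\bx_m^{\rho_n}\in\cco(\calD)^{K}$ (compact) and the finite product $\mathbb{C}_{1}\times\cdots\times\mathbb{C}_{M}$ contains only finitely many clusterings, a standard extraction yields a subsequence (still indexed by $n$) along which $\bx_m^{\rho_n}\to \vx^{*}_{m}\in \cco(\calD)^{K}$ for every $m$, and along which both $\breve\calC^{\rho_n}$ and $\breve\calP^{\rho_n}$ are \emph{constant} --- call them $\breve\calC$ and $\breve\calP$.

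Next I would pass the two generalized-minimum conditions of Definition \ref{def:locmin} to the limit along this subsequence. Condition (i) consists of non-strict inequalities and survives termwise: for every $m$ and every $\vy\in\calD_{m}$ with $\vy\in\breve\calC_{m}^{k}$ there holds $\|\vy-\vx_{m}^{*,k}\|\leq \|\vy-\vx_{m}^{*,k'}\|$ for all $k'$. In condition (ii) the weight $1/\rho_{n}$ multiplying the local data contribution tends to zero, so the limit identity becomes $\vx_{m}^{*,k}=\frac{1}{|\Omega_{m}|}\sum_{l\in\Omega_{m}}\vx_{l}^{*,k}$ for all $m$ and $k$. Since any vector that equals the mean of its neighbors at every node of a connected graph must be constant (Assumption \ref{ass-connected}), there is a common $\vx^{*}\in \cco(\calD)^{K}$ with $\vx^{*}_{m}=\vx^{*}$ for all $m$. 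Combining this with Theorem \ref{lm:conv_to_Z}, which forces $d(\vx^{*}_{m},\OZ)=0$, together with the closedness of $\OZ$ (a finite union, indexed by partitions of $\calD$, of sets defined by centroid equations and closed inequalities inside the compact set $\cco(\calD)^{K}$), I would conclude $\vx^{*}\in \OZ\subset \calZ$.

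Finally, I would invoke Remark \ref{rem:Lloyd_min} to close the contradiction. The limit form of condition (i) together with the consensus $\vx^{*}_{m}=\vx^{*}$ shows that the merged partition $\breve\calP$ is an optimal assignment for $\vx^{*}$, i.e.\ $\breve\calP\in \calU_{\vx^{*}}$, and hence $\calH(\vx^{*},\breve\calP)=\sum_{\vy\in\calD}\min_{k}\|\vy-\vx^{*,k}\|^{2}=\calF(\vx^{*})$. On the other hand, $\vx^{*}\in \calZ$ guarantees the existence of some $\calP$ with $(\vx^{*},\calP)\in \calL$, and for such a pair Remark \ref{rem:Lloyd_min} gives $\calH(\vx^{*},\calP)=\calF(\vx^{*})$. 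Therefore $\calH(\vx^{*},\breve\calP)=\calH(\vx^{*},\calP)$, contradicting the hypothesis that $\breve\calP$ is not cost-equivalent to any Lloyd's minimum. The main obstacle in executing this plan is establishing closedness of $\OZ$: because Lloyd fixed points with empty clusters leave the corresponding cluster centers unconstrained, one must carefully use the restriction $\vx^{k}\in \cco(\calD)$ built into the definition of $\OZ$ to keep those degenerate components within a compact region and then verify that the remaining centroid and closed-inequality constraints are preserved under limits.
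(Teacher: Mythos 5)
Your proposal is correct, but it takes a genuinely different route from the paper. The paper's proof is direct and uniform: it applies Lemma \ref{lm:int_con} at each of the finitely many Lloyd's minima $\vx\in\calZ$ (in their $M$-fold stacked form $\hat\vx$) to extract a single $\epsilon>0$ with $\calU_{\tilde\vx}\subset\calU_{\hat\vx}$ whenever $\|\tilde\vx-\hat\vx\|<\epsilon$, and then invokes Theorem \ref{lm:conv_to_Z} (in its uniform form, Corollary \ref{corr:conv_to_Z}, together with the vanishing consensus gap from Lemma \ref{lm:rhocons}) to guarantee that for $\rho\geq\bar\rho$ every $\bx^\rho\in\oM^\rho$ lies in such a neighborhood; then $\breve\calC^\rho\in\calU_{\hat\vx}$ forces $\calH(\vx,\breve\calP^\rho)=\calF(\vx)=\calH(\vx,\calP)$. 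You instead argue by contradiction: you extract a subsequence along which the cluster centers converge and (by finiteness of $\C_1\times\cdots\times\C_M$) the clustering is frozen, pass both conditions of Definition \ref{def:locmin} to the limit, obtain consensus of the limit from the harmonic identity $\vx_m^{*,k}=|\Omega_m|^{-1}\sum_{l\in\Omega_m}\vx_l^{*,k}$ on a connected graph, and place the limit in $\OZ$ via Theorem \ref{lm:conv_to_Z} and closedness of $\OZ$. This is sound; in effect you re-derive inline the consensus step that the paper delegates to Lemma \ref{lm:rhocons}, and you replace the upper-hemicontinuity step (Lemma \ref{lm:int_con}) by the observation that the frozen clustering's optimality inequalities are non-strict and hence survive the limit. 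What the paper's route buys is a quantitative handle on $\bar\rho$ (tied to the explicit $\epsilon$ of Lemma \ref{lm:int_con} and the $O(1/\rho)$ bounds), whereas yours is purely existential; what yours buys is independence from the hemicontinuity lemma and a slightly more self-contained limit argument. Two small remarks: the compactness of $\OZ$ that you flag as the main obstacle is already asserted (for the same reason you give) at the start of the paper's proof of Theorem \ref{lm:conv_to_Z}, so you may simply cite it; and the object you call $\calU_{\vx^*}$ for the merged partition should be $\calV_{\vx^*}$ in the paper's notation, though the intended meaning is clear and correct.
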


A proof of this corollary can be found in Section \ref{sec:finite-rho-conv}.

%The rough idea of the proof is as follows. Theorem ?? shows that any generalized Lloyd's minimum $\bx^\rho$ of $Q^\rho(\cdot)$ converges asymptotically to the set of classical Lloyd's minima as $\rho\to\infty$. Once $\bx^\rho$ is sufficiently close to the set of classical Lloyd's minima (so, for all $\rho$ sufficiently large), the clusterings converge to the set of optimal clusterings.

\bigskip
\noindent \textbf{Computing Optimal Clusterings.}
Many of our main results are asymptotic in nature. However, we emphasize that this \emph{does not} imply that the $NK$-means algorithm can only be used to \emph{approximate} an optimal $K$-means clustering.

We emphasize the following key implication of our results: For all values of $\rho$ sufficiently large, the $NK$-means algorithm will converge in finite-time to a $K$-means clustering that is cost-equivalent to a Lloyd's minimum. This follows from Theorem \ref{th:conv} and Corollary \ref{cor:finite-rho}.

The technicality in the above statement (convergence to a \emph{cost-equivalent} Lloyd's minimum clustering) arises due to the weak assumptions we make on the dataset. In particular, we allow for $\calD$ to possess repeated datapoints. As noted in Section \ref{introduction}, this is necessary due to the multi-agent nature of the setup: e.g., two agents may collect some datapoints from the same datasource or datastream. This introduces technical challenges into the analysis and leads to slightly weaker convergence results (e.g., convergence to clusterings that are cost-equivalent to Lloyd's minima).
%In practice, we often expect convergence to genuine Lloyd's minima partitions.
However, for practical purposes the clusterings obtained by the $NK$-means algorithm are functionally equivalent to Lloyd's minima.

\section{Illustrative Example} \label{sec:sims}
We now present a simple example illustrating the operation and salient features of the $NK$-means algorithm. Consider a system with 10 agents, let $p=1$ so data points reside in $\R$, and let the data set $D_m$ for agent $m=1,\ldots,10$ be generated by drawing 50 independent samples from the normal distribution $\calN(\mu_m,\sigma_m^2)$, with mean $\mu_m$ and variance $\sigma_m^2$.
%We set the means as $(\mu_1,\ldots,\mu_10) = (5, 20, 30, 60, 100, 5, 20, 30, 60, 100)$, and set the variance to be one for all agents, i.e., $\sigma_1 = \cdots = \sigma_5 = 1$.
Note that the full dataset $\calD= \bigcup_{m=1}^{10} \calD_m$ is drawn from a Gaussian mixture of the distributions $\calN(\mu_m, \sigma_m^2)$, $m=1,\ldots,10$.

We note that the behavior of classical $K$-means algorithm has been well studied for such Gaussian mixture models and is known to be consistent in the sense of \citep{pollard1981strong}. We will not give an in-depth treatment of issues of statistical consistency here.

%that, if $K$ is set to be the number of distributions in the mixture, then as the number of sampled datapoints goes to infinity, the set of cluster heads generated by the $K$-means algorithm converges to the parametric means of the mixture, almost surely. (See [CITE] for more details.)

For our simulation example we let $\mu = (\mu_m)_{m=1}^{10} = (5, 20, 30, 60, 100, 5, 20, 30, 60, 100)$ and let $\sigma_m^2 = 1$ for all $m$. Note that, in this case, the full dataset is sampled from a uniform mixture of 5 Gaussian distributions ($\mu$ consists of 5 distinct means repeated twice). Let the number of clusters be set to $K=5$ and let the graph $G$ be given by a ring graph. We let each agent's initial estimate of the cluster heads, $x_m(0) = (x_m^k(0))_{k=1}^5$ be  given by $x_m = (0, 20, 40, 60, 80)$.

%generated by drawing each $x_i^k(1)$ uniformly at random over the interval $[0,100]$.

Figure \ref{fig:plot1} plots the cluster head estimates for each agent over time for one instance of the $NK$-means algorithm with the above initialization scheme for parameters $\rho = 2,5,10,10^2,10^3,10^4$.
For $\rho$ small ($\rho = 2,5$), consensus is weakly enforced; the cluster heads of the individual agents are asymptotically separated and the algorithm performs relatively poorly. For $\rho \geq 10$ the vector of cluster head estimates $\vx_m(t) = (\vx_m^k(t))_{k=1}^5$ approximates well the vector of sample means of the Gaussian mixture.

In general, the partitions generated by the $NK$-means algorithm will converge in finite time (see Section \ref{sec:main-res-discussion}). The convergence time for the partitions for each value of $\rho$ for this example is given in Table \ref{table:tab1}.
In contrast to the convergence of the partitions, the convergence of the \emph{cluster heads} $(\vx_m^k(t))$ occurs over an infinite time horizon. Using the linearity of the update equations \eqref{reassign1}--\eqref{cent_up1} it is straightforward to show that once the partitions have converged, that rate of convergence of the cluster heads is exponential. However, as noted in Section \ref{sec:main_res}, there is an inherent tradeoff in the choice of $\rho$ in terms of the rate of convergence. In particular, increasing $\rho$ improves the quality of limit points of the $NK$-means algorithm (in terms of distance to Lloyd's minima) and increases the rate at which agents approach consensus. However, increasing $\rho$ results in slower dynamics tangential to the consensus subspace, and reduces the overall rate of convergence. %This can be see in Figure \ref{fig:plot1}.

Table \ref{table:tab1} shows that the partitions converge most quickly for $\rho=100$. In Figure \ref{fig:plot1} we see that as $\rho$ increases, the cluster heads approach consensus more quickly, however, the asymptotic rate of convergence towards the limiting cluster head tuple decreases, which comports with the above observations.
%NOTE: The Balcan papers look at the communication cost of these algorithms. This is roughly equivalent to estimating the number of iterations until the partitions converge. Which could be interesting to look at, though probably challenging. Furthermore, this algorithm requires ``infinite'' communication in the sense we let it run forever while the cluster heads converge. However, if it were somehow possible for agents to detect when the clusters had converged, this might be helpful in generating a termination condition for the algorithm.
\begin{center}
\begin{figure}
\includegraphics[width=1.0\textwidth]{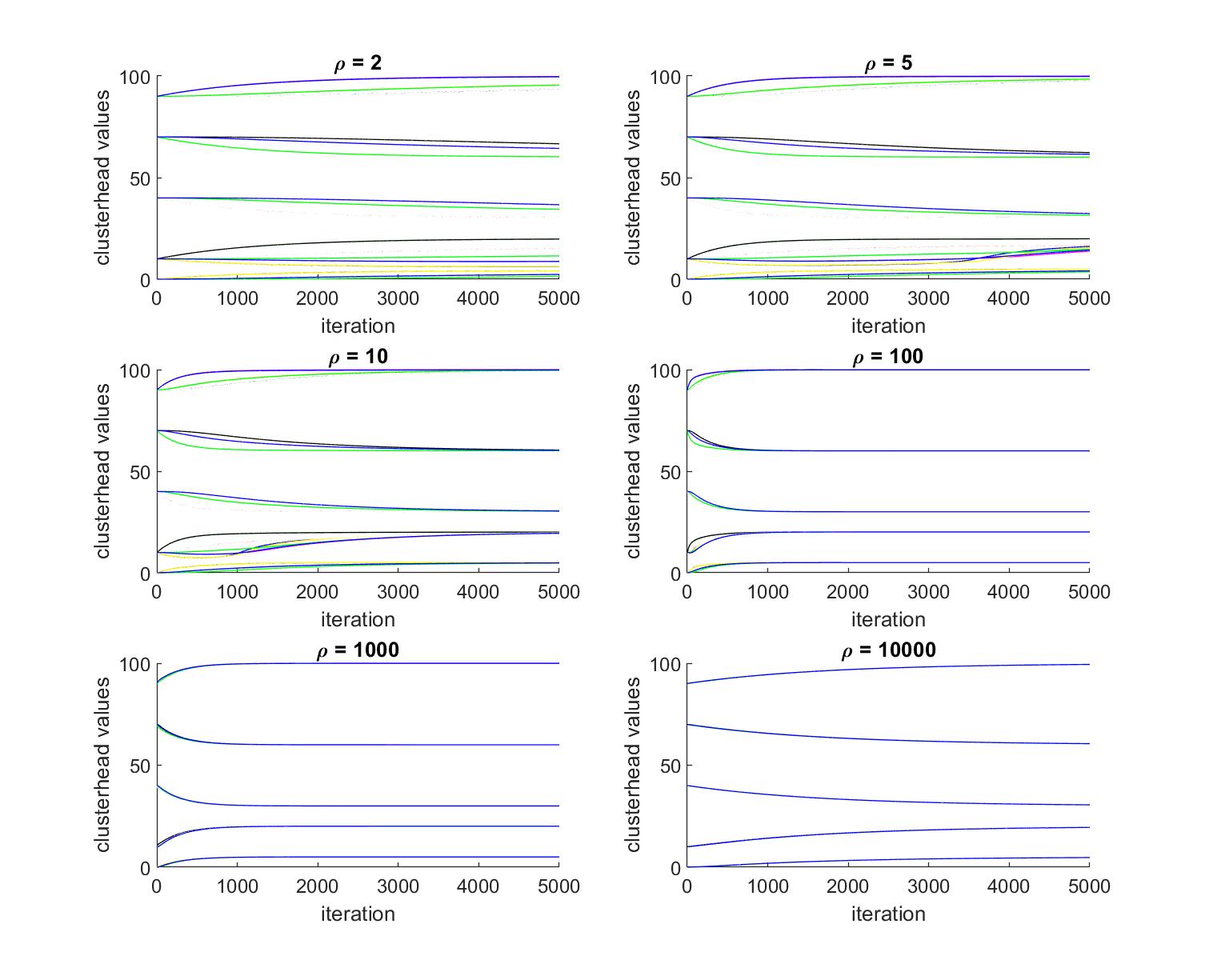}
\caption{}
\label{fig:plot1}
\end{figure}
\end{center}
\begin{table}[]
\label{table:tab1}
\centering
\begin{tabular}{|l|c|}
\hline
$\rho$ & \begin{tabular}[c]{@{}c@{}}Partition \\ Convergence Time\end{tabular} \\ \hline
2      & $>5000$                                                               \\ \hline
5      & 4026                                                                  \\ \hline
10     & 1225                                                                  \\ \hline
$10^2$ & 123                                                                   \\ \hline
$10^3$ & 164                                                                   \\ \hline
$10^4$ & 1015                                                                  \\ \hline
\end{tabular}
\end{table}

\vspace{-1em}
We note that if agents were not permitted to share data and were to perform clustering using their individual data alone, the cluster heads obtained at each individual agent would be an extremely poor clustering for the full dataset. This example is constructed to emphasize the effects of a disparity between data obtained at different nodes. Heterogeneity between datasets at different nodes can be common (and desirable) in practice given, for example, geographic separation between nodes.

\section{Convergence of $NK$-means} \label{sec:NKmeansconv}
We will now establish the convergence of the $NK$-means algorithm to the set of generalized minima of $\myQ^{\rho}(\cdot)$. In particular, we will prove Theorem \ref{th:conv}.

We will proceed as follows: We will begin by showing some preliminary results. We will then consider properties of limit points of the algorithm and then we will prove convergence of the algorithm to the set $\calM^\rho$. Theorem \ref{th:conv} will then follow immediately from Lemmas \ref{lm:limpoint} and \ref{lm:ind}.

We begin by proving the following monotonic cost improvement property for the $NK$-means algorithm. We use $\mu_t$ to denote the vector $\vecc_{m,k}\mu_m^k(t)$ at iteration $t$.
\begin{lemma}
\label{lm:costim} Let $\{\mathbf{x}_{t},\mathcal{C}_{t}\}$ be the sequence of cluster centers and clusters generated by the iterative procedure~\eqref{reassign1}--\eqref{cent_up1} and let the weight parameter $\alpha$ satisfy Assumption~\ref{ass-alpha}. Then, there exists a constant $c(\alpha)>0$ such that for each $t\in \N_{+}$ we have
\begin{equation}
\label{lm:costim1}
J^{\rho}(\mathbf{x}_{t+1},\mathcal{C}_{t+1})\leq J^{\rho}(\mathbf{x}_{t},\mathcal{C}_{t}) - c(\alpha)\left\|\mathbf{x}_{t}-\bmu_{t+1}\right\|^{2}.
\end{equation}
\end{lemma}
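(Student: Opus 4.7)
My plan is to decompose the one-step change in $J^{\rho}$ into the contribution of the reassignment step and the contribution of the center-update step, then bound each piece separately. Writing
\[
J^\rho(\vx_{t+1},\mathcal{C}_{t+1}) - J^\rho(\vx_t,\mathcal{C}_t) = \bigl[J^\rho(\vx_t,\mathcal{C}_{t+1}) - J^\rho(\vx_t,\mathcal{C}_t)\bigr] + \bigl[J^\rho(\vx_{t+1},\mathcal{C}_{t+1}) - J^\rho(\vx_t,\mathcal{C}_{t+1})\bigr],
\]
I would show the first bracket is $\leq 0$ and the second bracket is $\leq -c(\alpha)\|\vx_t-\bmu_{t+1}\|^{2}$.

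The first bracket is handled directly. Only the data-fit term of $J^{\rho}$ depends on $\mathcal{C}$, and with $\vx$ held at $\vx_t$ this term decouples as $\tfrac{1}{\rho}\sum_m\sum_{\vy\in\mathcal{D}_m}\|\vy-\vx_m^{k_m(\vy)}\|^{2}$, where $k_m(\vy)$ is the cluster index assigned to $\vy$ in $\mathcal{C}_m$. The reassignment rule~\eqref{reassign1} assigns every $\vy$ to a nearest local cluster center of $\vx_t$, so it minimizes each summand pointwise. Therefore $\mathcal{C}_{t+1}$ minimizes $\mathcal{C}\mapsto J^\rho(\vx_t,\mathcal{C})$ over $\mathbb{C}_1\times\cdots\times\mathbb{C}_M$, and in particular $J^\rho(\vx_t,\mathcal{C}_{t+1})\leq J^\rho(\vx_t,\mathcal{C}_t)$.

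For the second bracket, the key observation is that with $\mathcal{C}=\mathcal{C}_{t+1}$ fixed, $\vx\mapsto J^\rho(\vx,\mathcal{C}_{t+1})$ is a convex quadratic. A direct differentiation gives
\[
\nabla_{\vx_m^k}J^\rho(\vx_t,\mathcal{C}_{t+1}) = 2\bigl(\tfrac{1}{\rho}|\mathcal{C}_m^k(t+1)|+|\Omega_m|\bigr)\bigl(\vx_m^k(t)-\bmu_m^k(t+1)\bigr),
\]
so the center update \eqref{cent_up} is precisely a preconditioned gradient step $\vx_{t+1}-\vx_t=-\alpha(\vx_t-\bmu_{t+1})$. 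Since $J^\rho(\cdot,\mathcal{C}_{t+1})$ is quadratic, its second-order Taylor expansion at $\vx_t$ is exact, and substituting the update direction yields
\[
J^\rho(\vx_{t+1},\mathcal{C}_{t+1}) - J^\rho(\vx_t,\mathcal{C}_{t+1}) = -2\alpha\,\Delta^{\top} G\,\Delta + \tfrac{\alpha^{2}}{2}\,\Delta^{\top} H\,\Delta,
\]
where $\Delta := \vx_t-\bmu_{t+1}$, $G$ is the block-diagonal matrix whose $(m,k)$-block equals $\bigl(\tfrac{1}{\rho}|\mathcal{C}_m^k(t+1)|+|\Omega_m|\bigr)I_p$, and $H$ is the Hessian of $J^\rho(\cdot,\mathcal{C}_{t+1})$, which decouples across $k$ into blocks $2\bigl(\tfrac{1}{\rho}D_\mathcal{C}^k+L\bigr)\otimes I_p$ with $D_\mathcal{C}^k := \diag(|\mathcal{C}_m^k(t+1)|)_m$.

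The decisive spectral estimates are $\lambda_{\min}(G)\geq d_{\min}$ (each diagonal entry is at least $|\Omega_m|\geq d_{\min}$) and $\lambda_{\max}(H)\leq 2\bigl(\tfrac{N^{\ast}}{\rho}+\lambda_M(L)\bigr)$, which follows from Weyl's subadditivity for the top eigenvalue together with $|\mathcal{C}_m^k(t+1)|\leq N_m\leq N^{\ast}$. Plugging these into the previous display yields the claim with $c(\alpha):=2\alpha d_{\min}-\alpha^{2}\bigl(\tfrac{N^{\ast}}{\rho}+\lambda_M(L)\bigr)$, and Assumption~\ref{ass-alpha} guarantees $c(\alpha)>0$ (in fact with a factor-of-two slack). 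The main obstacle is the correct identification of the multi-agent Hessian and its spectral bound in terms of the graph Laplacian and the relaxation parameter $\rho$; once that bookkeeping is in place, combining the two bracketed bounds gives \eqref{lm:costim1} directly.
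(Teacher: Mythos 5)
Your proof is correct and follows essentially the same route as the paper's: the same two-step decomposition (the reassignment step can only decrease the data-fit term, and the center update is a preconditioned gradient step on the convex quadratic $\vx\mapsto J^{\rho}(\vx,\calC_{t+1})$), with the same spectral bounds $\lambda_{\min}(G)\geq d_{\min}$ and $\lambda_{\max}(H)\leq 2\bigl((1/\rho)N^{\ast}+\lambda_{M}(L)\bigr)$. The only (harmless) difference is that you use the exact second-order expansion of the quadratic rather than the first-order convexity inequality anchored at $\vx_{t+1}$, which yields the slightly sharper constant $c(\alpha)=2\alpha d_{\min}-\alpha^{2}\bigl((1/\rho)N^{\ast}+\lambda_{M}(L)\bigr)$ in place of the paper's $2\alpha\bigl(d_{\min}-\alpha((1/\rho)N^{\ast}+\lambda_{M}(L))\bigr)$; both are positive under Assumption~\ref{ass-alpha}, so the lemma follows either way.
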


\begin{proof}
For each $t$, denote by $W^{\rho}_{t}(\cdot)$ the non-negative function on $\mathbb{R}^{MKp}$ such that
\begin{equation}
\label{lm:costim2}
W^{\rho}_{t}(\mathbf{x})=J^{\rho}(\mathbf{x},\mathcal{C}_{t})
\end{equation}
for all $\mathbf{x}\in\mathbb{R}^{MKp}$, i.e., $W^{\rho}_{t}(\cdot)$ is the clustering cost as a function of the cluster centers $\mathbf{x}$ given the cluster assignments are fixed by $\mathcal{C}_{t}$. Further, let $N^{\ast} = \max(N_1,\ldots,N_M)$. \\

\noindent{\bf Claim 1}. $W^{\rho}_{t+1}(\mathbf{x}_{t})\leq J^{\rho}(\mathbf{x}_{t},\mathcal{C}_{t})$,~~~$\forall t$.\\
Indeed, note that the reassignment step, which allocates the data points at each agent to the respective closest cluster centers~\eqref{reassign1}--\eqref{cent_up1}, ensures that
\begin{equation}
\label{lm:costim3}
\sum_{m=1}^{M}\sum_{k=1}^{K}\sum_{\y\in\mathcal{C}_{m}^{k}(t+1)}\|\y-\mathbf{x}_{m}^{k}(t)\|^{2}\leq \sum_{m=1}^{M}\sum_{k=1}^{K}\sum_{\y\in\mathcal{C}_{m}^{k}(t)}\|\y-\mathbf{x}_{m}^{k}(t)\|^{2}.
\end{equation}
Hence,
\begin{align}
\label{lm:costim4}
W^{\rho}_{t+1}(\mathbf{x}_{t}) & = J^{\rho}(\mathbf{x}_{t},\mathcal{C}_{t+1})\nonumber \\ & =\frac{1}{\rho}\sum_{m=1}^{M}\sum_{k=1}^{K}\sum_{\y\in\mathcal{C}_{m}^{k}(t+1)}\|\y-\mathbf{x}_{m}^{k}(t)\|^{2}+\mathbf{x}_{t}^{\top}(L\otimes I_{Kp})\mathbf{x}_{t}\nonumber \\
& \leq \frac{1}{\rho}\sum_{m=1}^{M}\sum_{k=1}^{K}\sum_{\y\in\mathcal{C}_{m}^{k}(t)}\|\y-\mathbf{x}_{m}^{k}(t)\|^{2}+\mathbf{x}_{t}^{\top}(L\otimes I_{Kp})\mathbf{x}_{t} \nonumber \\
& = J^{\rho}(\mathbf{x}_{t},\mathcal{C}_{t}).
\end{align}\\

\noindent{\bf Claim 2}. $W_{t+1}^{\rho}(\mathbf{x}_{t+1})\leq W^{\rho}_{t+1}(\mathbf{x}_{t})-2\alpha\left(d_{\min}-\alpha((1/\rho)N^{\ast}+\lambda_{M}(L))\right)\left\|\mathbf{x}_{t}-\bmu_{t+1}\right\|^{2}$,~~~$\forall t$.\\
For all $m$, $k$ and $t$, denote by $\beta_{m}^{k}(t)$ and $\gamma_{m}^{k}(t)$ the quantities
\begin{equation}
\label{lm:costim5}
\beta_{m}^{k}(t)=|\Omega_{m}|+\frac{1}{\rho}|\mathcal{C}_{m}^{k}(t)|~~\mbox{and}~~\gamma_{m}^{k}(t)=\frac{1}{\rho}|\mathcal{C}_{m}^{k}(t)|.
\end{equation}
Now, note that the function $W_{t+1}^{\rho}(\cdot)$ is convex and continuously differentiable in its argument $\mathbf{x}$ and hence we have
\begin{equation}
\label{lm:costim6}
W_{t+1}^{\rho}(\mathbf{x}_{t})\geq W_{t+1}^{\rho}(\mathbf{x}_{t+1})+\nabla W_{t+1}^{\rho}(\mathbf{x}_{t+1})^{\top}\left(\mathbf{x}_{t}-\mathbf{x}_{t+1}\right).
\end{equation}
Noting that
\begin{equation}
\label{lm:costim7}
\mathbf{x}_{t+1}=\mathbf{x}_{t}-\alpha\left(\mathbf{x}_{t}-\bmu_{t+1}\right),
\end{equation}
where the components of $\bmu_{t+1}$ are defined in~\eqref{cent_up1} and that the function $W^{\rho}_{t+1}(\cdot)$ may be written as
\begin{equation}
\label{lm:costim8}
W^{\rho}_{t+1}(\mathbf{x})=\frac{1}{\rho}\sum_{m=1}^{M}\sum_{k=1}^{K}\sum_{\y\in\mathcal{C}_{m}^{k}(t+1)}\|\y-\mathbf{x}_{m}^{k}\|^{2}+\mathbf{x}^{\top}(L\otimes I_{Kp})\mathbf{x},
\end{equation}
standard algebraic manipulations yield
\begin{equation}
\label{lm:costim9}
\nabla W_{t+1}^{\rho}(\mathbf{x}_{t+1})=2(\beta_{t+1}\otimes I_{p})\left(\mathbf{x}_{t}-\bmu_{t+1}\right)-\left(2\alpha(\gamma_{t+1}\otimes I_{p})+2\alpha(L\otimes I_{Kp})\right)\left(\mathbf{x}_{t}-\bmu_{t+1}\right),
\end{equation}
where $\beta_{t+1}$ and $\gamma_{t+1}$ are $MK\times MK$ diagonal matrices with $\beta_{t+1}=\diag(\beta_{m}^{k}(t+1))$ and $\gamma_{t+1}=\diag(\gamma_{m}^{k}(t+1))$.

Since for all $m,k,t$,
\begin{equation}
\label{lm:costim10}
\beta_{m}^{k}(t)=|\Omega_{m}|+\frac{1}{\rho}|\mathcal{C}_{m}^{k}(t)|\geq d_{\min}
\end{equation}
and
\begin{equation}
\label{lm:costim11}
\gamma_{m}^{k}(t)=\frac{1}{\rho}|\mathcal{C}_{m}^{k}(t)|\leq (1/\rho)N^{\ast},
\end{equation}
it may be readily verified that
\begin{equation}
\label{lm:costim12}
\mathbf{z}^{\top}(\beta_{t+1}\otimes I_{p})\mathbf{z}\geq d_{\min}\|\mathbf{z}\|^{2}
\end{equation}
and
\begin{equation}
\label{lm:costim13}
\mathbf{z}^{\top}\big(2\alpha(\gamma_{t+1}\otimes I_{p})+2\alpha(L\otimes I_{Kp})\big)\mathbf{z}\leq 2\alpha\big((1/\rho)N^{\ast}+\lambda_{M}(L)\big)\|\mathbf{z}\|^{2}
\end{equation}
for any $\mathbf{z}\in\mathbb{R}^{MKp}$. Hence, by~\eqref{lm:costim6} and~\eqref{lm:costim9} we obtain
\begin{align}
\label{lm:costim14}
W_{t+1}^{\rho}(\mathbf{x}_{t}) \geq & W_{t+1}^{\rho}(\mathbf{x}_{t+1})+2\alpha\left(\mathbf{x}_{t}-\bmu_{t+1}\right)^{\top}(\beta_{t+1}\otimes I_{p})\left(\mathbf{x}_{t}-\bmu_{t+1}\right)\nonumber \\
& - \alpha\left(\mathbf{x}_{t}-\bmu_{t+1}\right)^{\top}\left(2\alpha(\gamma_{t+1}\otimes I_{p})+2\alpha(L\otimes I_{Kp})\right)\left(\mathbf{x}_{t}-\bmu_{t+1}\right)\nonumber \\
\geq & W_{t+1}^{\rho}(\mathbf{x}_{t+1}) +2\alpha d_{\min}\left\|\mathbf{x}_{t}-\bmu_{t+1}\right\|^{2}\\ &-2\alpha^{2}((1/\rho)N^{\ast}+\lambda_{M}(L))\left\|\mathbf{x}_{t}-\bmu_{t+1}\right\|^{2}\nonumber \\
= & W^{\rho}_{t+1}(\mathbf{x}_{t+1})+2\alpha\left(d_{\min}-\alpha((1/\rho)N^{\ast}+\lambda_{M}(L))\right)\left\|\mathbf{x}_{t}-\bmu_{t+1}\right\|^{2}
\end{align}
and Claim 2 follows.

To complete the final steps of the proof of Lemma~\ref{lm:costim}, we note that the quantity
\begin{equation}
\label{lm:costim15}
c(\alpha)=2\alpha\left(d_{\min}-\alpha((1/\rho)N^{\ast}+\lambda_{M}(L))\right)
\end{equation}
satisfies $c(\alpha)>0$ by Assumption~\ref{ass-alpha}. The assertion then follows immediately from Claims 1--2 and additionally noting that, by definition, $W^{\rho}_{t+1}(\mathbf{x}_{t+1})=J^{\rho}(\mathbf{x}_{t+1},\mathcal{C}_{t+1})$.
\end{proof}

As an immediate corollary we obtain the following.
\begin{corollary}
\label{corr:costim} Let $\{\mathbf{x}_{t}\}$ be the sequence of cluster centers generated by the iterative procedure~\eqref{reassign1}--\eqref{cent_up1} and let the weight parameter $\alpha$ satisfy Assumption~\ref{ass-alpha}. Then, for each $t$ we have
\begin{equation}
\label{corr:costim1}
\myQ^{\rho}(\mathbf{x}_{t+1})\leq \myQ^{\rho}(\mathbf{x}_{t}) - c(\alpha)\left\|\mathbf{x}_{t}-\bmu_{t+1}\right\|^{2},
\end{equation}
where $c(\alpha)>0$ is defined in~\eqref{lm:costim1}.
\end{corollary}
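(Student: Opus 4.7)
The plan is to deduce the corollary directly from Lemma~\ref{lm:costim} by exploiting the variational relationship between $\myQ^\rho$ and $J^\rho$. Specifically, from the definitions \eqref{rel4} and \eqref{rel600}, the coupling term $\sum_{(m,l)\in E}\sum_k \|\vx_m^k - \vx_l^k\|^2$ does not depend on $\calC$, while the data fidelity term in $\myQ^\rho(\vx)$ is obtained by minimizing the data fidelity term in $J^\rho(\vx,\calC)$ over $\calC$. Thus, for every $\vx\in\R^{MKp}$,
$$
\myQ^\rho(\vx) \;=\; \min_{\calC\in \C_1\times\cdots\times\C_M} J^\rho(\vx,\calC),
$$
with the minimum attained precisely by any $\calC\in\calU_{\vx}$.

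First, I would use the reassignment rule \eqref{reassign1}, which guarantees $\calC_{t+1}\in\calU_{\vx_t}$, to conclude
$$
J^{\rho}(\vx_t,\calC_{t+1}) \;=\; \myQ^\rho(\vx_t).
$$
Next, I would invoke Claim~2 from the proof of Lemma~\ref{lm:costim} (the center-update contraction), which states
$$
J^{\rho}(\vx_{t+1},\calC_{t+1}) \;\leq\; J^{\rho}(\vx_t,\calC_{t+1}) - c(\alpha)\,\|\vx_t - \bmu_{t+1}\|^2.
$$
Substituting the previous identity into the right-hand side yields $J^{\rho}(\vx_{t+1},\calC_{t+1}) \leq \myQ^\rho(\vx_t) - c(\alpha)\,\|\vx_t - \bmu_{t+1}\|^2$.

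Finally, since $\myQ^\rho(\vx_{t+1})$ is by definition the minimum of $J^\rho(\vx_{t+1},\cdot)$ over all admissible clusterings, we have $\myQ^\rho(\vx_{t+1})\leq J^\rho(\vx_{t+1},\calC_{t+1})$, giving the desired inequality \eqref{corr:costim1}. No substantive obstacle is expected; the proof is essentially a bookkeeping argument built on the already-established Lemma~\ref{lm:costim} together with the observation that the reassign step makes the $J^\rho$-cost at $\vx_t$ coincide with $\myQ^\rho(\vx_t)$.
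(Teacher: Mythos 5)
Your proposal is correct and follows essentially the same route as the paper's own proof: both identify $\myQ^\rho(\vx_t)=J^\rho(\vx_t,\calC_{t+1})$ via $\calC_{t+1}\in\calU_{\vx_t}$, apply the center-update contraction (Claim~2 of Lemma~\ref{lm:costim}) to get $J^\rho(\vx_{t+1},\calC_{t+1})\leq J^\rho(\vx_t,\calC_{t+1})-c(\alpha)\|\vx_t-\bmu_{t+1}\|^2$, and finish with $\myQ^\rho(\vx_{t+1})=\min_{\calC}J^\rho(\vx_{t+1},\calC)\leq J^\rho(\vx_{t+1},\calC_{t+1})$. No gaps.
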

\begin{proof}
Note that, by definition of $\mathcal{U}_{\vx}$ (see Section \ref{sec:gen-min}), we have for each $\widehat{\mathcal{C}}\in\mathcal{U}_{\vx_t}$
\begin{align}
\label{corr:costim2}
\myQ^{\rho}(\mathbf{x}_{t}) = \sum_{m=1}^{M}\left(\frac{1}{\rho}\sum_{\mathbf{y}\in\mathcal{D}_{m}}\min_{1\leq k\leq K}\|\y-\mathbf{x}_{m}^{k}(t)\|^{2}+\sum_{l\in\Omega_{m}}\|\mathbf{x}_{m}(t)-\mathbf{x}_{l}(t)\|^{2}\right) \nonumber \\ = \sum_{m=1}^{M}\left(\left(\frac{1}{\rho}\sum_{k=1}^{K}\sum_{\y\in\widehat{\mathcal{C}}_{m}^{k}}\|\y-\mathbf{x}_{m}^{k}(t)\|^{2}\right)+\left(\sum_{l\in\Omega_{m}}\|\mathbf{x}_{m}(t)-\mathbf{x}_{l}(t)\|^{2}\right)\right) \nonumber \\ =J^{\rho}(\mathbf{x}_{t},\widehat{\mathcal{C}}).
\end{align}
In particular, $\myQ^{\rho}(\mathbf{x}_{t})=J^{\rho}(\mathbf{x}_{t},\mathcal{C}_{t+1})$ since $\mathcal{C}_{t+1}\in\mathcal{U}_{\vx_t}$. Hence, by Lemma~\ref{lm:costim} we have
\begin{align}
\label{corr:costim3}
\myQ^{\rho}(\mathbf{x}_{t+1}) & =\min_{\mathcal{C}}J^{\rho}(\mathbf{x}_{t+1},\mathcal{C})\leq J^{\rho}(\mathbf{x}_{t+1},\mathcal{C}_{t+1})\nonumber \\
& \leq J^{\rho}(\mathbf{x}_{t},\mathcal{C}_{t+1}) - c(\alpha)\left\|\mathbf{x}_{t}-\bmu_{t+1}\right\|^{2}\nonumber \\
& = \myQ^{\rho}(\mathbf{x}_{t}) - c(\alpha)\left\|\mathbf{x}_{t}-\bmu_{t+1}\right\|^{2}.
\end{align}
\end{proof}

The following boundedness of the iterate sequence $\{\mathbf{x}_{t}\}$ is also immediate.
\begin{proposition}
\label{prop:bound} Let the hypotheses of Lemma~\ref{lm:costim} hold.
%and let $\mathcal{D}^{\ast}=\mathcal{D}_{1}\cup\cdots\cup\mathcal{D}_{M}$ denote the set of all agent data.
Let $\cco(\mathbf{x}_{0},\mathcal{D})$ denote the closed convex hull of the set of data points in $\mathcal{D}$ and the center initializations $\mathbf{x}_{m}^{k}(0)$, $m=1,\cdots,M$ and $k=1,\cdots,K$. Then, it holds that
\begin{equation}
\label{prop:bound1}
\mathbf{x}_{m}^{k}(t)\in\cco(\mathbf{x}_{0},\mathcal{D}),~~~\forall m,k,t.
\end{equation}
\end{proposition}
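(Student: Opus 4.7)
The plan is a straightforward induction on $t$, after rewriting the center update as a nested convex combination. First, I would rewrite~\eqref{cent_up} as
\begin{equation*}
\vx_{m}^{k}(t+1) = (1-\alpha)\,\vx_{m}^{k}(t) + \alpha\,\bmu_{m}^{k}(t+1),
\end{equation*}
and observe from~\eqref{cent_up1} that $\bmu_{m}^{k}(t+1)$ is itself a convex combination of (i) the data points $\y\in\mathcal{C}_{m}^{k}(t+1)\subset \mathcal{D}$ and (ii) the neighboring cluster centers $\{\vx_{l}^{k}(t)\}_{l\in\Omega_{m}}$, with non-negative weights summing to one; note that the definition of $\bmu_{m}^{k}(t+1)$ remains a convex combination even if $\mathcal{C}_{m}^{k}(t+1)=\emptyset$, since $|\Omega_m|\geq 1$ by Assumption~\ref{ass-connected}.

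The only point requiring care is to verify that $\alpha\in(0,1)$, so that the outer combination is also convex. By Assumption~\ref{ass-alpha}, $\alpha>0$ and
\begin{equation*}
\alpha < \frac{d_{\min}}{(1/\rho)N^{\ast}+\lambda_{M}(L)} \leq \frac{d_{\min}}{\lambda_{M}(L)}.
\end{equation*}
Since $L$ is a graph Laplacian with $\lambda_1(L)=0$ and diagonal entries equal to the node degrees, $\mathrm{tr}(L)=\sum_{i=2}^{M}\lambda_{i}(L)=\sum_{i=1}^{M}d_i \geq M d_{\min}$, which gives $\lambda_{M}(L)\geq \frac{M}{M-1}d_{\min}\geq d_{\min}$. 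Therefore $\alpha<1$.

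With these two observations, the induction is immediate. For the base case, $\vx_{m}^{k}(0)\in\cco(\vx_{0},\mathcal{D})$ by the very definition of $\cco(\vx_{0},\mathcal{D})$. For the inductive step, assume $\vx_{l}^{k}(t)\in\cco(\vx_{0},\mathcal{D})$ for all $l$ and $k$. Then $\bmu_{m}^{k}(t+1)$ is a convex combination of elements of $\mathcal{D}\cup\{\vx_{l}^{k}(t)\}_{l\in\Omega_{m}}\subset\cco(\vx_{0},\mathcal{D})$, hence lies in $\cco(\vx_{0},\mathcal{D})$; and then $\vx_{m}^{k}(t+1)=(1-\alpha)\vx_{m}^{k}(t)+\alpha\bmu_{m}^{k}(t+1)$ is a convex combination of two elements of $\cco(\vx_{0},\mathcal{D})$, so also lies in $\cco(\vx_{0},\mathcal{D})$. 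There is no real ``hard part''; the only potential subtlety is checking $\alpha<1$ using the elementary Laplacian eigenvalue bound above.
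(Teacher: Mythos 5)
Your proof is correct and follows essentially the same induction argument as the paper: rewrite the update as $\vx_m^k(t+1)=(1-\alpha)\vx_m^k(t)+\alpha\bmu_m^k(t+1)$, note that $\bmu_m^k(t+1)$ is a convex combination of data points and neighbors' centers, and use $\alpha\le 1$ (which the paper also obtains from $d_{\min}\le\lambda_M(L)$, citing it as a standard Laplacian property where you supply the trace-based justification explicitly).
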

\begin{proof}
The proof follows by induction. Clearly, $\mathbf{x}_{m}^{k}(0)\in\cco(\mathbf{x}_{0},\mathcal{D})$ for all $m,k$. Suppose that the assertion holds for all times $s$ such that $0\leq s\leq t$. Now, observe that, by~\eqref{cent_up1}, for all $m,k$, $\bmu_{m}^{k}(t+1)$ is a convex combination of data points in $\mathcal{D}$ and the current cluster center estimates $\mathbf{x}_{l}^{k}(t)$, $m=1,\cdots,M$ and $k=1,\cdots,K$. Hence, by the induction hypothesis, i.e., $\mathbf{x}_{l}^{k}(t)\in\cco(\mathbf{x}_{0},\mathcal{D})$ for all $l,k$, it readily follows that
\begin{equation}
\label{prop:bound2}
\bmu_{m}^{k}(t+1)\in\cco(\mathbf{x}_{0},\mathcal{D}),~~~\forall m,k.
\end{equation}
Also, note that by~\eqref{cent_up},
\begin{equation}
\label{prop:bound3}
\mathbf{x}_{m}^{k}(t+1)=(1-\alpha)\mathbf{x}_{m}^{k}(t)+\alpha\bmu_{m}^{k}(t+1),~~~\forall m,k.
\end{equation}
Further, by Assumption~\ref{ass-alpha}, $\alpha\leq 1$, since $d_{\min}\leq\lambda_{M}(L)$ by standard properties of the Laplacian. Hence, by~\eqref{prop:bound2}--\eqref{prop:bound3}, $\mathbf{x}_{m}^{k}(t+1)\in\cco(\mathbf{x}_{0},\mathcal{D})$ for all $m,k$, being a convex combination of points in $\cco(\mathbf{x}_{0},\mathcal{D})$. This completes the induction step and the assertion follows.
\end{proof}

\vspace{1em}
\noindent{\bf Analysis of Limit Points}. We will now show that limit points of the distributed algorithm~\eqref{reassign1}--\eqref{cent_up1} are generalized minima in the sense of Definition~\ref{def:locmin}.

We start with the following intermediate result which shows that the set-valued mapping $\calU_\vx$ is continuous in an appropriate sense (namely, $\vx\mapsto \calU_\vx$ is \emph{upper hemicontinuous} \citep{aubin2009set}).
\begin{lemma}
\label{lm:int_con}
Let Assumptions \ref{ass:data_size}, \ref{ass:comm-graph}, \ref{ass-connected}, and \ref{ass-alpha} hold.
For each $\bx\in\mathbb{R}^{MKp}$, a set of potential cluster centers, there exists $\Vap_{\bx}>0$ such that $\mathcal{U}_{\mathbf{x}}\subset \mathcal{U}_{\bx}$ for all $\mathbf{x}\in\mathbb{R}^{MKp}$ with $\|\mathbf{x}-\bx\|<\Vap_{\bx}$.
\end{lemma}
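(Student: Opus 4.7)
The plan is to exploit the fact that the ``optimality'' relation $\|\vy-\vx_m^k\|\leq\|\vy-\vx_m^{k'}\|$ defining $\calU_\vx$ is strict at $\bx$ except for the ties, and strict inequalities are preserved under small perturbations. So the idea is: at $\bx$, for each agent $m$ and each datapoint $\vy\in\calD_m$, a small enough perturbation of $\bx$ cannot turn a strictly suboptimal cluster head into an (even weakly) optimal one. This will force any permissible partition for $\vx$ to assign $\vy$ only to indices that were already tied-for-best at $\bx$, which are exactly the indices to which $\vy$ could be assigned in a partition in $\calU_{\bx}$.

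To make this precise, I would first define, for each $m$ and each $\vy\in\calD_m$, the set of ``best'' indices at $\bx$,
\[
S_{m,\vy}=\{k\in\{1,\ldots,K\}:\|\vy-\bx_m^k\|=\min_{k'}\|\vy-\bx_m^{k'}\|\},
\]
together with the gap
\[
\delta_{m,\vy}=\min_{k\notin S_{m,\vy}}\|\vy-\bx_m^k\|-\min_{k'}\|\vy-\bx_m^{k'}\|,
\]
(set to $+\infty$ when $S_{m,\vy}=\{1,\ldots,K\}$). By construction $\delta_{m,\vy}>0$, and since $\calD$ is finite the quantity $\delta_\bx:=\min_{m,\,\vy\in\calD_m}\delta_{m,\vy}$ is strictly positive. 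I would then take $\Vap_\bx = \delta_\bx/4$.

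Using the reverse triangle inequality, for any $\vx$ with $\|\vx-\bx\|<\Vap_\bx$ (so in particular $\|\vx_m^k-\bx_m^k\|<\delta_\bx/4$ for every $m,k$), the perturbation changes each distance $\|\vy-\vx_m^k\|$ by at most $\delta_\bx/4$ relative to $\|\vy-\bx_m^k\|$. Suppose now that $\calC\in\calU_\vx$ and $\vy\in\calC_m^k$ for some index $k\notin S_{m,\vy}$. Picking $k^\ast\in S_{m,\vy}$ and combining the two-sided bounds gives $\|\vy-\vx_m^k\|-\|\vy-\vx_m^{k^\ast}\|\geq\delta_\bx-\delta_\bx/2>0$, contradicting the assumption $\calC\in\calU_\vx$. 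Hence every index used by $\calC$ must lie in $S_{m,\vy}$, which means the same partition satisfies the defining inequality for membership in $\calU_{\bx}$, i.e. $\calC\in\calU_{\bx}$.

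The argument is mostly bookkeeping; the only mild subtlety is the handling of ties at $\bx$ (the case where $|S_{m,\vy}|>1$), which is precisely why we get the one-sided inclusion $\calU_\vx\subset\calU_\bx$ rather than equality: perturbations can break ties and shrink the permissible assignment set, but they cannot enlarge it. Finiteness of $\calD$ is what lets us collapse the per-datapoint gaps into a single strictly positive $\delta_\bx$.
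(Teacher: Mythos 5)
Your proof is correct and follows essentially the same route as the paper's: define the per-datapoint gap between strictly suboptimal and optimal cluster heads at $\bx$, take the minimum over the finite dataset to get a uniform positive gap, and use the (reverse) triangle inequality to show that a small perturbation cannot make a strictly suboptimal index weakly optimal, yielding the contradiction. The only differences are cosmetic choices of constants ($\delta_{\bx}/4$ versus the paper's $\Vap/2$).
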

\begin{proof}
Let $\bx\in \R^{KMp}$ be fixed.
For each $m$ and $\vy\in\calD_m$ let $\omega_{m,\vy}$ be the subset of indices in $\{1,\cdots,K\}$ such that $k\in\omega_{m,\vy}$ if and only if
\begin{equation}
\label{lm:int_con1}
\|\y-\bx_{m}^{k}\|\leq\|\y-\bx_{m}^{\acute{k}}\|~~~\forall \acute{k}\in\{1,\cdots,K\}.
\end{equation}
Note that, by the above construction, for each $m$ and $\vy\in \calD_m$, the quantity
\begin{equation}
\label{lm:int_con2}
\min_{\acute{k}\notin\omega_{m,\vy}}\|\y-\bx_{m}^{\acute{k}}\|-\min_{\acute{k}}\|\y-\bx_{m}^{\acute{k}}\|
\end{equation}
is strictly positive (could be $\infty$), where we adopt the convention that the minimum of an empty set is $\infty$. Since the total number of data points across all the agents is finite, there exists $\Vap>0$ such that
\begin{equation}
\label{lm:int_con3}
\min_{m,\vy\in \calD_m}\left(\min_{\acute{k}\notin\omega_{m,\vy}}\|\y-\bx_{m}^{\acute{k}}\|-\min_{\acute{k}}\|\y-\bx_{m}^{\acute{k}}\|\right)>\Vap.
\end{equation}
Now, consider any $\mathbf{x}\in\mathbb{R}^{MKp}$ such that
\begin{equation}
\label{lm:int_con4}
\|\mathbf{x}_{m}^{k}-\bx_{m}^{k}\|<\Vap/2~~~\forall m,k.
\end{equation}
We now show that $\mathcal{U}_{\mathbf{x}}\subset\mathcal{U}_{\bx}$ for all $\mathbf{x}$ satisfying~\eqref{lm:int_con4}.

To this end, let $\mathcal{C}\in\mathcal{U}_{\mathbf{x}}$ and assume on the contrary that $\mathcal{C}\notin\mathcal{U}_{\bx}$. Then, by the construction above, there exist a triple $(m,n,k)$ such that $\y\in\mathcal{C}_{m}^{k}$ and $k\notin\omega_{m,\vy}$. Also, by definition, since $\y\in\mathcal{C}_{m}^{k}$ and $\mathcal{C}_{m}^{k}\in\mathcal{U}_{\mathbf{x}}$, we have that
\begin{equation}
\label{lm:int_con5}
\|\y-\mathbf{x}_{m}^{k}\|\leq\|\y-\mathbf{x}_{m}^{\acute{k}}\|~~~\forall \acute{k}\in\{1,\cdots,K\}.
\end{equation}
Hence, by~\eqref{lm:int_con4}--\eqref{lm:int_con5}, we have for all $\acute{k}$
\begin{align}
\label{lm:int_con6}
\|\y-\bx_{m}^{k}\|< \|\y-\mathbf{x}_{m}^{k}\|+\Vap/3\leq \|\y-\mathbf{x}_{m}^{\acute{k}}\|+\Vap/2\nonumber
%\\ <\|\y-\bx_{m}^{\acute{k}}\|+2\Vap/3.
\end{align}
In particular, letting $k_{0}\in\omega_{m,\vy}$ we have
\begin{equation}
\label{lm:int_con7}
\|\y-\bx_{m}^{k}\| < \|\y-\bx_{m}^{k_{0}}\|+\Vap/2.
\end{equation}
On the other hand, by~\eqref{lm:int_con3}, since $k\notin\omega_{m,\vy}$ we have that
\begin{equation}
\label{lm:int_con8}
\|\y-\bx_{m}^{k}\|>\|\y-\bx_{m}^{k_{0}}\|+\Vap,
\end{equation}
which clearly contradicts~\eqref{lm:int_con7}. Hence, we conclude that $\mathcal{C}\in\mathcal{U}_{\bx}$ and, more importantly, that $\mathcal{U}_{\mathbf{x}}\subset\mathcal{U}_{\bx}$ for all $\mathbf{x}\in\mathbb{R}^{MKp}$ satisfying~\eqref{lm:int_con4}. Hence, the desired assertion follows by taking $\Vap_{\bx}$ to be $\Vap/2$.
\end{proof}

The following lemma considers properties of limit points of the $NK$-means algorithm. It does not, however, establish that a limit exists.
\begin{lemma}
\label{lm:limpoint} Let Assumptions \ref{ass:data_size}, \ref{ass:comm-graph}, \ref{ass-connected}, and \ref{ass-alpha} hold. Then, any limit point $\bx$ of the sequence $\{\mathbf{x}_{t}\}$ of cluster centers is a generalized minimum in the sense of Definition~\ref{def:locmin}.
\end{lemma}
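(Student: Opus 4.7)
The plan is to combine the monotone descent result (Corollary \ref{corr:costim}) with the upper hemicontinuity property of the map $\mathbf{x} \mapsto \mathcal{U}_{\mathbf{x}}$ (Lemma \ref{lm:int_con}) to verify the two conditions in Definition~\ref{def:locmin} at any limit point. First, since $\myQ^{\rho}(\mathbf{x}_{t})$ is non-negative and, by Corollary \ref{corr:costim}, monotonically non-increasing in $t$, the sequence $\{\myQ^{\rho}(\mathbf{x}_{t})\}$ converges to some finite limit. Telescoping the per-step descent inequality yields
\begin{equation*}
c(\alpha) \sum_{t=0}^{\infty} \|\mathbf{x}_{t}-\bmu_{t+1}\|^{2} \leq \myQ^{\rho}(\mathbf{x}_{0}) - \lim_{t\to\infty} \myQ^{\rho}(\mathbf{x}_{t}) < \infty,
\end{equation*}
which, together with $c(\alpha) > 0$, forces $\|\mathbf{x}_{t} - \bmu_{t+1}\| \to 0$ as $t\to\infty$.

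Next, let $\breve{\mathbf{x}}$ be a limit point and fix a subsequence $\{\mathbf{x}_{t_{n}}\}$ with $\mathbf{x}_{t_{n}} \to \breve{\mathbf{x}}$. By Lemma \ref{lm:int_con}, there exists $\varepsilon_{\breve{\mathbf{x}}} > 0$ such that $\mathcal{U}_{\mathbf{x}} \subset \mathcal{U}_{\breve{\mathbf{x}}}$ whenever $\|\mathbf{x} - \breve{\mathbf{x}}\| < \varepsilon_{\breve{\mathbf{x}}}$. Since the reassignment step of the $NK$-means algorithm enforces $\mathcal{C}_{t+1} \in \mathcal{U}_{\mathbf{x}_{t}}$ for every $t$, all $n$ sufficiently large satisfy $\mathcal{C}_{t_{n}+1} \in \mathcal{U}_{\breve{\mathbf{x}}}$. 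Because the collection of admissible partitions $\mathbb{C}_{1} \times \cdots \times \mathbb{C}_{M}$ is finite, I pass to a further subsequence (still denoted $\{t_{n}\}$ by abuse of notation) along which $\mathcal{C}_{t_{n}+1} \equiv \breve{\mathcal{C}}$ is constant. The limit clustering $\breve{\mathcal{C}}$ then lies in $\mathcal{U}_{\breve{\mathbf{x}}}$, which is precisely condition~(i) of Definition \ref{def:locmin}.

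Finally, I verify condition~(ii). Along the chosen subsequence, the update formula~\eqref{cent_up1} gives
\begin{equation*}
\bmu_{m}^{k}(t_{n}+1) = \frac{(1/\rho) \sum_{\mathbf{y} \in \breve{\mathcal{C}}_{m}^{k}} \mathbf{y} + \sum_{l \in \Omega_{m}} \mathbf{x}_{l}^{k}(t_{n})}{(1/\rho)|\breve{\mathcal{C}}_{m}^{k}| + |\Omega_{m}|},
\end{equation*}
which, being a continuous function of $\mathbf{x}_{t_{n}}$ with $\breve{\mathcal{C}}$ held fixed, converges as $n \to \infty$ to the analogous expression with $\mathbf{x}_{l}^{k}(t_{n})$ replaced by $\breve{\mathbf{x}}_{l}^{k}$. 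On the other hand, combining $\mathbf{x}_{t_{n}} \to \breve{\mathbf{x}}$ with $\|\mathbf{x}_{t_{n}} - \bmu_{t_{n}+1}\| \to 0$ forces $\bmu_{t_{n}+1} \to \breve{\mathbf{x}}$. Equating the two expressions for the limit yields condition~(ii), completing the verification that $(\breve{\mathbf{x}}, \breve{\mathcal{C}}) \in \mathcal{J}^{\rho}$ and hence $\breve{\mathbf{x}} \in \mathcal{M}^{\rho}$.

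The main technical obstacle is the discontinuous nature of the cluster-assignment step: $\mathcal{U}_{\mathbf{x}}$ changes abruptly as $\mathbf{x}$ varies, so one cannot simply pass to the limit in the reassignment. The one-sided (upper hemi-)continuity supplied by Lemma \ref{lm:int_con}, combined with the finiteness of the partition space that permits extracting a constant-clustering subsequence, is what bridges this gap and lets the fixed-point relation for $\breve{\mathbf{x}}$ emerge in the limit.
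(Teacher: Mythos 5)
Your proof is correct, and it rests on the same two pillars as the paper's: the per-step descent inequality (Lemma \ref{lm:costim} / Corollary \ref{corr:costim}) and the upper hemicontinuity of $\vx\mapsto\calU_{\vx}$ (Lemma \ref{lm:int_con}). The route differs in its logical organization. The paper argues by contradiction: assuming $\overline{\calU}_{\bx}=\emptyset$, it bounds $\min_{\widehat{\calC}\in\calU_{\bx}}\|\bx-\bmu(\bx,\widehat{\calC})\|$ away from zero, propagates this via a triangle-inequality chain to a uniform lower bound on $\|\vx_{t_s}-\bmu_{t_{s}+1}\|$ along the subsequence, and then concludes that $J^{\rho}(\vx_t,\calC_t)\to-\infty$, contradicting non-negativity. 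You instead run the argument directly: telescoping the descent inequality gives square-summability of the innovations, hence $\|\vx_t-\bmu_{t+1}\|\to 0$ along the \emph{entire} sequence; you then use finiteness of $\C_1\times\cdots\times\C_M$ to extract a constant-clustering subsequence $\calC_{t_n+1}\equiv\breve{\calC}\in\calU_{\bx}$ and pass to the limit in the explicit formula \eqref{cent_up1} to obtain the fixed-point identity $\bmu(\bx,\breve{\calC})=\bx$. Your version is somewhat cleaner: it avoids the quantitative constants ($\Vap_1$, the factor $1+\sqrt{MK}d_{\max}/d_{\min}$) that the paper needs for its triangle-inequality estimates, it isolates the reusable fact $\|\vx_t-\bmu_{t+1}\|\to 0$, and it exhibits an explicit witness $\breve{\calC}\in\overline{\calU}_{\bx}$ rather than merely refuting emptiness. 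The paper's contradiction machinery, on the other hand, is set up so that essentially the same estimates can be recycled verbatim in the proof of the stronger Lemma \ref{lm:limpc} (that $\calC_{t_s+1}\in\overline{\calU}_{\bx}$ eventually), which your direct argument only establishes along the further constant-clustering subsequence.
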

\begin{proof}
%By Remark~\ref{rem:deflocmin}
By the definition of $\overline{\calU}_{\vx}$, it suffices to show that the set $\overline{\mathcal{U}}_{\bx}$ is non-empty. To this end, let $\{\mathbf{x}_{t_{s}}\}_{s\geq 0}$ be a subsequence of $\{\mathbf{x}_{t}\}$, the sequence of cluster centers generated by the distributed algorithm, such that $\mathbf{x}_{t_{s}}\rightarrow\bx$ as $t\rightarrow\infty$. Note that by Proposition \ref{prop:bound} such a subsequence exists.

First, note that by Lemma \ref{lm:int_con} the following claim holds.\\

\noindent\textbf{Claim 1}. There exists $s_{0}$ sufficiently large such that $\mathcal{C}_{t_{s+1}}\in\mathcal{U}_{\bx}$ for all $s\geq s_{0}$.\\

Now, for the sake of contradiction suppose that $\overline{\mathcal{U}}_{\bx}=\emptyset$. This implies that for each $\widehat{\mathcal{C}}\in\mathcal{U}_{\bx}$ there exist $m$ and $k$ such that
\begin{equation}
\label{lm:limp9}
\bx_{m}^{k}\neq\frac{(1/\rho)\sum_{\y\in\widehat{\mathcal{C}}_{m}^{k}}\y+\sum_{l\in\Omega_{m}}\bx_{l}^{k}}{(1/\rho)|\widehat{\mathcal{C}}_{m}^{k}|+|\Omega_{m}|}.
\end{equation}
Since $\mathcal{U}_{\bx}$ consists of a finite number of elements, we have that
\begin{equation}
\label{lm:limp10}
\Vap_{1}=\min_{\widehat{\mathcal{C}}\in\mathcal{U}_{\bx}}\left\|\bx-\bmu(\bx,\widehat{\mathcal{C}})\right\|>0,
\end{equation}
where $\bmu(\bx,\widehat{\mathcal{C}})=\vecc_{m,k}\left(\bmu_{m}^{k}(\bx,\widehat{\mathcal{C}})\right)$ with
\begin{equation}
\label{lm:limp11}
\bmu_{m}^{k}(\bx,\widehat{\mathcal{C}})=\frac{(1/\rho)\sum_{\y\in\widehat{\mathcal{C}}_{m}^{k}}\y+\sum_{l\in\Omega_{m}}\bx_{l}^{k}}{(1/\rho)|\widehat{\mathcal{C}}_{m}^{k}|+|\Omega_{m}|}
\end{equation}
for each $m,k$. Recall that by Claim 1 there exists $s_{0}$ such that $\mathcal{C}_{t_{s+1}}\in\mathcal{U}_{\bx}$ for all $s\geq s_{0}$. Let $s_{1}\geq s_{0}$ be sufficiently large such that
\begin{equation}
\label{lm:limp12}
\left(1+\frac{\sqrt{MK}d_{\max}}{d_{\min}}\right)\left\|\mathbf{x}_{t_{s}}-\bx\right\|<\frac{\Vap_{1}}{2}
\end{equation}
for all $s\geq s_{1}$.

By the triangle inequality, we obtain for each $s$
\begin{equation}
\label{lm:limp13}
\|\mathbf{x}_{t_{s}}-\bmu_{t_{s+1}}\|\geq \|\bx-\bmu(\bx,\mathcal{C}_{t_{s+1}})\|-\|\mathbf{x}_{t_{s}}-\bx\|-\|\bmu(\bx,\mathcal{C}_{t_{s+1}})-\bmu_{t_{s+1}}\|.
\end{equation}
Note that by \eqref{corr:costim2} we have
\begin{align}
\label{lm:limp14}
\left\|\bmu_{m}^{k}(\bx,\mathcal{C}_{t_{s+1}})-\bmu_{m}^{k}(t_{s}+1)\right\|&=\left\|\frac{\sum_{l\in\Omega_{m}}(\bx_{l}^{k}-\mathbf{x}_{l}^{k}(t_{s}))}{(1/\rho)|\widehat{\mathcal{C}}_{m}^{k}(t_{s+1})|+|\Omega_{m}|}\right\|\nonumber \\ & \leq \frac{d_{\max}}{d_{\min}}\|\mathbf{x}_{t_{s}}-\bx\|
\end{align}
for all $s$. Hence, by~\eqref{lm:limp12}, we have for all $s\geq s_{1}$
\begin{equation}
\label{lm:limp15}
\left\|\mathbf{x}_{t_{s}}-\bx\|+\|\bmu(\bx,\mathcal{C}_{t_{s+1}})-\bmu_{t_{s+1}}\right\|\leq \left(1+\frac{\sqrt{MK}d_{\max}}{d_{\min}}\right)\left\|\mathbf{x}_{t_{s}}-\bx\right\|<\frac{\Vap_{1}}{2}.
\end{equation}
Also, note that $\mathcal{C}_{t_{s+1}}\in\mathcal{U}_{\bx}$ for all $s\geq s_{1}$ and hence, by~\eqref{lm:limp10} we have
\begin{equation}
\label{lm:limp16}
\|\bx-\bmu(\bx,\mathcal{C}_{t_{s+1}})\|\geq \Vap_{1}~~~\forall s\geq s_{1}.
\end{equation}
Substituting~\eqref{lm:limp15}--\eqref{lm:limp16} in~\eqref{lm:limp13} we obtain
\begin{equation}
\label{lm:limp17}
\|\mathbf{x}_{t_{s}}-\bmu_{t_{s+1}}\|>\Vap_{1}/2~~~\forall s\geq s_{1}.
\end{equation}

Note that, by Lemma~\ref{lm:costim}, we have for each $t$
\begin{equation}
\label{lm:limp18}
J^{\rho}(\mathbf{x}_{t+1},\mathcal{C}_{t+1})\leq J^{\rho}(\mathbf{x}_{t},\mathcal{C}_{t}) - c(\alpha)\left\|\mathbf{x}_{t}-\bmu_{t+1}\right\|^{2},
\end{equation}
and hence unrolling the recursion
\begin{equation}
\label{lm:limp19}
J^{\rho}(\mathbf{x}_{t},\mathcal{C}_{t})\leq J^{\rho}(\mathbf{x}_{0},\mathcal{C}_{0})-c(\alpha)\sum_{r=0}^{t-1}\left\|\mathbf{x}_{r}-\bmu_{r+1}\right\|^{2}.
\end{equation}
Since, by~\eqref{lm:limp17},
\begin{equation}
\label{lm:limp20}
\sum_{r=0}^{\infty}\left\|\mathbf{x}_{r}-\bmu_{r+1}\right\|^{2}\geq\sum_{s\geq s_{1}}^{\infty}\left\|\mathbf{x}_{t_{s}}-\bmu_{t_{s+1}}\right\|^{2}=\infty,
\end{equation}
we obtain from~\eqref{lm:limp19} that
\begin{equation}
\label{lm:limp21}
\limsup_{t\rightarrow\infty}J^{\rho}(\mathbf{x}_{t},\mathcal{C}_{t})=-\infty.
\end{equation}
Clearly,~\eqref{lm:limp21} contradicts the fact that the clustering cost $J(\cdot,\cdot)$ is non-negative and we conclude that the assertion $\overline{\mathcal{U}}_{\bx}=\emptyset$ is false. Hence, $\overline{\mathcal{U}}_{\bx}\neq\emptyset$ and we conclude that the limit point $\bx$ is a generalized minimum in the sense of Definition~\ref{def:locmin}.
\end{proof}

A few more manipulations yield a stronger result concerning the limiting behavior of clusters.
\begin{lemma}
\label{lm:limpc} Let Assumptions \ref{ass:data_size}, \ref{ass:comm-graph}, \ref{ass-connected} and \ref{ass-alpha} hold, let $\bx$ be a limit point of the sequence $\{\mathbf{x}_{t}\}$ of cluster centers generated by the distributed algorithm~\eqref{reassign1}--\eqref{cent_up1}, and let $\{\mathbf{x}_{t_{s}}\}_{s\geq 0}$ be a subsequence such that $\mathbf{x}_{t_{s}}\rightarrow\bx$ as $s\rightarrow\infty$. Then, there exists $\acute{s}$ large enough such that $\mathcal{C}_{t_{s+1}}\in\overline{\mathcal{U}}_{\bx}$ for all $s\geq\acute{s}$.
\end{lemma}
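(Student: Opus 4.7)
The plan is to piece together the ingredients already assembled in the proof of Lemma~\ref{lm:limpoint}, and push one step further to show that the candidate clusterings actually sit in $\overline{\mathcal{U}}_{\bx}$, not merely in $\mathcal{U}_{\bx}$. Throughout I will denote by $\bmu(\vx,\mathcal{C})$ the map defined in~\eqref{lm:limp11}, i.e., the value of $\bmu_{t+1}$ when the cluster centers are $\vx$ and the clustering is $\mathcal{C}$.

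\smallskip
\noindent\textbf{Step 1 (summability).} First I would establish that
\[
\|\mathbf{x}_{t}-\bmu_{t+1}\|\longrightarrow 0\quad\mbox{as }t\to\infty.
\]
This is an immediate consequence of Corollary~\ref{corr:costim}: unrolling the recursion and using $\myQ^{\rho}\geq 0$ yields
$c(\alpha)\sum_{r=0}^{\infty}\|\mathbf{x}_{r}-\bmu_{r+1}\|^{2}\leq \myQ^{\rho}(\mathbf{x}_{0})<\infty$, so the summands tend to zero.

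\smallskip
\noindent\textbf{Step 2 (propagation to the limit).} Along the convergent subsequence, Step~1 gives $\bmu_{t_{s}+1}\to\bx$. Next, the same algebraic bound derived in~\eqref{lm:limp14} shows
\[
\|\bmu(\bx,\mathcal{C}_{t_{s}+1})-\bmu_{t_{s}+1}\|\leq \frac{\sqrt{MK}\,d_{\max}}{d_{\min}}\,\|\mathbf{x}_{t_{s}}-\bx\|\longrightarrow 0,
\]
so combining with $\bmu_{t_{s}+1}\to\bx$ we obtain $\bmu(\bx,\mathcal{C}_{t_{s}+1})\to\bx$ as $s\to\infty$.

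\smallskip
\noindent\textbf{Step 3 (restriction to $\mathcal{U}_{\bx}$).} By Lemma~\ref{lm:int_con} applied to $\mathbf{x}=\mathbf{x}_{t_{s}}$ (which converges to $\bx$), there exists $s_{0}$ with $\mathcal{C}_{t_{s}+1}\in\mathcal{U}_{\mathbf{x}_{t_{s}}}\subset\mathcal{U}_{\bx}$ for all $s\geq s_{0}$; this is exactly Claim~1 of Lemma~\ref{lm:limpoint}.

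\smallskip
\noindent\textbf{Step 4 (contradiction argument).} Suppose for contradiction that $\mathcal{C}_{t_{s}+1}\notin\overline{\mathcal{U}}_{\bx}$ for infinitely many $s\geq s_{0}$. Define
\[
\Vap=\min_{\widehat{\mathcal{C}}\in\mathcal{U}_{\bx}\setminus\overline{\mathcal{U}}_{\bx}}\|\bx-\bmu(\bx,\widehat{\mathcal{C}})\|.
\]
The set $\mathcal{U}_{\bx}$ is finite, and by the very definition of $\overline{\mathcal{U}}_{\bx}$ each $\widehat{\mathcal{C}}$ in the minimization above satisfies $\bx\neq\bmu(\bx,\widehat{\mathcal{C}})$, so $\Vap>0$. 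The assumption then forces $\|\bx-\bmu(\bx,\mathcal{C}_{t_{s}+1})\|\geq\Vap$ for infinitely many $s$, contradicting Step~2. Hence $\mathcal{C}_{t_{s}+1}\in\overline{\mathcal{U}}_{\bx}$ for all $s$ sufficiently large, which is the statement of the lemma.

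\smallskip
\noindent\textbf{Main obstacle.} The argument is largely bookkeeping built on top of Lemma~\ref{lm:limpoint}. The only real subtlety is keeping the two layers of approximation --- $\bmu_{t_{s}+1}$ (which uses the iterates $\mathbf{x}_{l}^{k}(t_{s})$) versus $\bmu(\bx,\mathcal{C}_{t_{s}+1})$ (which uses $\bx$ but the same clustering) --- cleanly separated, so that the argument really isolates the non-equilibrium property encoded by $\overline{\mathcal{U}}_{\bx}$. Estimate~\eqref{lm:limp14}, already proved inside Lemma~\ref{lm:limpoint}, is exactly what is needed for this separation, which is why the additional reasoning beyond Lemma~\ref{lm:limpoint} is quite short.
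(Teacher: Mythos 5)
Your proposal is correct and rests on exactly the same ingredients as the paper's proof (Corollary~\ref{corr:costim}, Lemma~\ref{lm:int_con}, the perturbation bound~\eqref{lm:limp14}, and the positivity of the minimum over the finite set $\mathcal{U}_{\bx}\setminus\overline{\mathcal{U}}_{\bx}$); the only difference is that you extract $\|\mathbf{x}_{t}-\bmu_{t+1}\|\to 0$ up front from summability of the cost decrements, whereas the paper runs the equivalent contrapositive by showing the divergent sum would force $J^{\rho}\to-\infty$. This is a cosmetic reorganization, arguably slightly cleaner, but not a genuinely different argument.
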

\begin{proof} Recall that $\overline{\mathcal{U}}_{\bx}$ is non-empty by Lemma \ref{lm:limpoint}. Now suppose on the contrary that the assertion in Lemma~\ref{lm:limpc} does not hold. Then, there exists a further subsequence $\{\mathbf{x}_{\acute{t}_{r}}\}_{r\geq 0}$, which is a subsequence of $\{\mathbf{x}_{t_{s}}\}_{s\geq 0}$, such that $\mathcal{C}_{\acute{t}_{r}+1}\notin\overline{\mathcal{U}}_{\bx}$ for all $r\geq 0$. Note that the subsequence $\{\mathbf{x}_{\acute{t}_{r}}\}$ inherits the limiting properties of the subsequence $\{\mathbf{x}_{t_{s}}\}$, and hence, by Lemma \ref{lm:int_con} we conclude that there exists $r_{0}$ sufficiently large such that
\begin{equation}
\label{lm:limpc1}
\mathcal{C}_{\acute{t}_{r}+1}\in\mathcal{U}_{\bx}~~~\forall r\geq r_{0}.
\end{equation}
Since, by construction, $\mathcal{C}_{\acute{t}_{r}+1}\notin\overline{\mathcal{U}}_{\bx}$ for all $r\geq 0$,~\eqref{lm:limpc1} implies that the set $\mathcal{U}_{\bx}\setminus\overline{\mathcal{U}}_{\bx}$ is non-empty and hence,
\begin{equation}
\label{lm:limpc2}
\Vap_{2}=\min_{\widehat{\mathcal{C}}\in\mathcal{U}_{\bx}\setminus\overline{\mathcal{U}}_{\bx}}\left\|\bx-\bmu(\bx,\widehat{\mathcal{C}})\right\|>0,
\end{equation}
where $\bmu(\bx,\widehat{\mathcal{C}})$ is defined as in~\eqref{lm:limp11}. Note that $\mathbf{x}_{\acute{t}_{r}}\rightarrow\bx$ as $r\rightarrow\infty$ and let $r_{1}\geq r_{0}$ be sufficiently large such that
\begin{equation}
\label{lm:limpc3}
\left(1+\frac{\sqrt{MK}d_{\max}}{d_{\min}}\right)\left\|\mathbf{x}_{\acute{t}_{r}}-\bx\right\|<\frac{\Vap_{2}}{2}
\end{equation}
for all $r\geq r_{1}$. Now, repeating the arguments as in~\eqref{lm:limp13}--\eqref{lm:limp17} (applied to the subsequence $\{\mathbf{x}_{\acute{t}_{r}}\}$) we obtain
\begin{equation}
\label{lm:limpc4}
\|\mathbf{x}_{\acute{t}_{r}}-\bmu_{\acute{t}_{r}+1}\|>\Vap_{2}/2~~~\forall r\geq r_{1}.
\end{equation}
By similar reasoning as in~\eqref{lm:limp18}--\eqref{lm:limp21} we finally derive the conclusion that
\begin{equation}
\label{lm:limpc5}
\limsup_{t\rightarrow\infty}J^{\rho}(\mathbf{x}_{t},\mathcal{C}_{t})=-\infty.
\end{equation}
Clearly,~\eqref{lm:limpc5} contradicts the fact that the clustering cost $J(\cdot,\cdot)$ is non-negative and hence the assertion that $\mathcal{C}_{t_{s+1}}\notin\overline{\mathcal{U}}_{\bx}$ infinitely often (i.o.) is false. Lemma~\ref{lm:limpc} now follows immediately.
\end{proof}

\begin{remark}[Finte-Time Convergence of Partitions] \label{remark:partition-convergence}
The above Lemma shows that if $\vx_t \to\bx$, then after some finite number of iterations $\acute{t}$,  the partitions generated by the $NK$-means algorithm satisfy $\calC_t \in \calU_{\bx}$ for all $t\geq \acute{t}$. We note that, if the limit point $\bx$ is such that each $\bx^k_m$, $k=1,\ldots,K$, $m=1,\ldots,M$ in $\bx$ is not equidistant from any two datapoints then $\calU_{\bx}$ will consist of a unique partition. In such cases, the above result implies that the sequence of partitions $\{\calC_t\}$ generated by the $NK$-means algorithm converges (to the unique partition) in finite time. Aside from exceptional cases, in practice we generally expect the sequence of partitions to converge in finite time.
\end{remark}

%\begin{remark}
%Suppose $\bx$ is a Lloyd's minimum and let $\hat \vx$ be the $M$-fold repetition of $\bx$. By Lemma ??, for all $\tilde \vx$ in a neighborhood of $\hat \vx$ there holds $\calU_{\tilde\vx} \subset \calU_{\hat\vx}$. If $\bx$ is such that each $\bx^k$, $k=1,\ldots,K$ is not equidistant from any two datapoints, then $\calU_{\hat\vx}$ will consist of a unique partition. In this case, if $\bx^\rho \to \hat \vx$, then any partition $\calP^\rho$ of $\calD$ generated by $\bx^\rho$ will converge to the unique partition $\calP$ in $\calU_{\hat x}$ in finite time. Moreover, $\calP$ is a Lloyd's minimum clustering in the sense that the pair $(\bx,\calP)\in \calL$.
%\end{remark}

$~$\newline
\noindent{\bf Convergence}. We now establish the convergence of the distributed scheme~\eqref{reassign1}--\eqref{cent_up1} to a generalized minimum.
The following result is crucial to establishing the convergence of the distributed clustering procedure.
\begin{lemma}[Local stability of generalized minima]
\label{lm:ind} Let Assumptions \ref{ass:data_size}, \ref{ass:comm-graph}, \ref{ass-connected}, and~\ref{ass-alpha} hold and let $\{\mathbf{x}_{t}\}$ be the sequence of cluster centers generated by the distributed procedure~\eqref{reassign1}--\eqref{cent_up1}. Suppose that $\bx$ is a generalized minimum in the sense of Definition~\ref{def:locmin}. Then there exists $\overline{\Vap}_{\bx}>0$, small enough, such that, for each $\Vap\in (0,\overline{\Vap}_{\bx})$ there exists $t_{\Vap}$ with the following property\textup{:} If
\begin{equation}
\label{lm:ind2}
\|\mathbf{x}_{\acute{t}}-\bx\|_{\infty}\leq\Vap
\end{equation}
for some $\acute{t}\geq t_{\Vap}$, then
\begin{equation}
\label{lm:ind3}
\|\mathbf{x}_{t}-\bx\|_{\infty}\leq\Vap,~~~\forall t\geq \acute{t}.
\end{equation}
\end{lemma}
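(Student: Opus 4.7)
The plan is to combine three ingredients: (i) the upper hemicontinuity of the set-valued map $\vx \mapsto \calU_\vx$ from Lemma~\ref{lm:int_con}; (ii) the summability $\sum_t \|\vx_t - \bmu_{t+1}\|^2 < \infty$ coming from Corollary~\ref{corr:costim}; and (iii) the fact that, for any fixed $\calC \in \overline{\calU}_\bx$, the center-update map $\vx \mapsto (1-\alpha)\vx + \alpha \bmu(\vx,\calC)$ is affine with $\bx$ as a fixed point and a row-substochastic Jacobian. The idea is to show that, near $\bx$ and after sufficiently many iterations, the reassigned partition $\calC_{t+1}$ is always forced to lie in $\overline{\calU}_\bx$, and on such partitions the center update is $\ell^\infty$-non-expansive about $\bx$, so forward invariance of the $\Vap$-ball follows by induction.

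First, I would fix $\overline{\Vap}_\bx$ small enough that Lemma~\ref{lm:int_con} yields $\calU_\vx \subset \calU_\bx$ whenever $\|\vx - \bx\|_\infty \leq \overline{\Vap}_\bx$. Since $\calU_\bx$ is finite, the gap
$$
\Vap_3 := \min_{\widehat{\calC} \in \calU_\bx \setminus \overline{\calU}_\bx} \left\|\bx - \bmu(\bx, \widehat{\calC})\right\|_\infty
$$
is strictly positive by Definition~\ref{def:locmin} (interpreted as $+\infty$ when the minimizing set is empty, in which case the step below is vacuous). Using that $\vx \mapsto \bmu(\vx, \widehat{\calC})$ is affine with Lipschitz constant depending only on the graph, a triangle-inequality argument analogous to those in~\eqref{lm:limp13}--\eqref{lm:limp16} shows that, after possibly shrinking $\overline{\Vap}_\bx$, one has $\|\vx - \bmu(\vx, \widehat{\calC})\|_\infty \geq \Vap_3/2$ whenever $\|\vx - \bx\|_\infty \leq \overline{\Vap}_\bx$ and $\widehat{\calC} \in \calU_\bx \setminus \overline{\calU}_\bx$. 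Next, Corollary~\ref{corr:costim}, the nonnegativity of $\myQ^\rho$, and telescoping give $\|\vx_t - \bmu_{t+1}\| \to 0$, so for any $\Vap \in (0, \overline{\Vap}_\bx)$ I can choose $t_\Vap$ so large that $\|\vx_t - \bmu_{t+1}\|_\infty < \Vap_3/4$ for all $t \geq t_\Vap$. If $\|\vx_{\acute t} - \bx\|_\infty \leq \Vap$ for some $\acute t \geq t_\Vap$, then the reassignment rule~\eqref{reassign1} places $\calC_{\acute t+1} \in \calU_{\vx_{\acute t}} \subset \calU_\bx$, while the gap estimate excludes $\calC_{\acute t+1} \in \calU_\bx \setminus \overline{\calU}_\bx$, forcing $\calC_{\acute t+1} \in \overline{\calU}_\bx$.

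Finally, for any $\calC \in \overline{\calU}_\bx$, writing $\bmu(\vx, \calC) = B_\calC \vx + b_\calC$ and using $\bmu(\bx, \calC) = \bx$, the update reads $\vx_{\acute t+1} - \bx = ((1-\alpha)I + \alpha B_\calC)(\vx_{\acute t} - \bx)$. A direct inspection of~\eqref{cent_up1} shows that $B_\calC$ has nonnegative entries and zero diagonal blocks, with the $(m,k)$-th row summing to $|\Omega_m|/((1/\rho)|\calC_m^k| + |\Omega_m|) \leq 1$, so $(1-\alpha)I + \alpha B_\calC$ is row-substochastic and hence $\ell^\infty$-non-expansive. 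Thus $\|\vx_{\acute t+1} - \bx\|_\infty \leq \|\vx_{\acute t} - \bx\|_\infty \leq \Vap$, and iterating the previous paragraph's argument inductively on $t \geq \acute t$ yields the claim. The main technical obstacle is ruling out ``partition-flip'' transitions into $\calU_\bx \setminus \overline{\calU}_\bx$: it is precisely the quantitative gap $\Vap_3$, combined with the asymptotic vanishing of $\|\vx_t - \bmu_{t+1}\|$, that forces every reassignment after $t_\Vap$ to lie in the fixed-point set $\overline{\calU}_\bx$, where the linear non-expansion argument can then be applied unimpeded.
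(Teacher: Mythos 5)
Your proposal is correct and follows essentially the same route as the paper's proof: Lemma~\ref{lm:int_con} plus a quantitative gap over $\calU_{\bx}\setminus\overline{\calU}_{\bx}$ forces $\calC_{\acute{t}+1}\in\overline{\calU}_{\bx}$ near $\bx$ for large $\acute{t}$, and the $\ell^{\infty}$-non-expansiveness of the center update about the fixed point $\bx$ (the paper's \eqref{lm:ind16}--\eqref{lm:ind18}, your row-substochasticity observation, both of which rely on $\alpha\leq 1$ from Assumption~\ref{ass-alpha}) then gives forward invariance by induction. The only cosmetic difference is how the bad-partition case is excluded: you use $\|\vx_{t}-\bmu_{t+1}\|\to 0$ directly from the summability in Corollary~\ref{corr:costim}, whereas the paper argues by contradiction that a large step at time $\acute{t}\geq t_{\Vap}$ would push $\myQ^{\rho}(\vx_{\acute{t}+1})$ below its infimum $\underline{\myQ}^{\rho}$ --- the same ingredient packaged differently.
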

\begin{proof} In what follows we will explicitly determine $\overline{\Vap}_{\bx}$ and $t_{\Vap}$ for all $\Vap\in (0,\overline{\Vap}_{\bx})$ and show that if $|\mathbf{x}_{\acute{t}}-\bx\|_{\infty}\leq\Vap$ for some $\acute{t}\geq t_{\Vap}$, then $\|\mathbf{x}_{\acute{t}+1}-\bx\|_{\infty}\leq\Vap$. The assertion~\eqref{lm:ind3} for all $t\geq \acute{t}$ will then follow by simple induction.

To this end, consider the following constructions: Recall Corollary~\ref{corr:costim} and, in particular,~\eqref{corr:costim1}, and let $\underline{\myQ}^{\rho}$ be the limit of the non-increasing sequence $\{\myQ^{\rho}(\mathbf{x}_{t})\}$ of clustering costs, i.e.,
\begin{equation}
\label{lm:ind4}
\underline{\myQ}^{\rho}=\lim_{t\rightarrow\infty}\myQ^{\rho}(\mathbf{x}_{t})=\inf_{t\geq 0}\myQ^{\rho}(\mathbf{x}_{t}).
\end{equation}
Since $\bx$ is a generalized minimum of $\myQ^\rho(\cdot)$, by \eqref{eq:calU-condition} we have $\overline{\calU}_{\bx} \not=\emptyset$. Note that for each $\widehat{\mathcal{C}}\in\mathcal{U}_{\bx}\setminus\overline{\mathcal{U}}_{\bx}$
\begin{equation}
\label{lm:ind5}
\|\bmu(\bx,\widehat{\mathcal{C}})-\bx\|>0,
\end{equation}
where $\bmu(\bx,\widehat{\mathcal{C}})$ is defined in~\eqref{lm:limp11}. Since the cardinality of $\mathcal{U}_{\bx}$ is finite, we conclude that there exists $\Vap_{2}>0$ (could be $\infty$) such that
\begin{equation}
\label{lm:ind6}
\min_{\widehat{\mathcal{C}}\in\mathcal{U}_{\bx}\setminus\overline{\mathcal{U}}_{\bx}}\|\bmu(\bx,\widehat{\mathcal{C}})-\bx\|>\Vap_{2}.
\end{equation}
Now, recall the constant $c(\alpha)$ in~\eqref{corr:costim1}, the positive constant $\Vap_{\bx}$ in Lemma~\ref{lm:int_con}, and define $\overline{\Vap}_{\bx}$ to be
\begin{equation}
\label{lm:ind7}
\overline{\Vap}_{\bx}=\min\left(\frac{\Vap_{\bx}}{\sqrt{MK}},c_{2}^{-1}\Vap_{2}\right),
\end{equation}
where $\Vap_{2}$ is defined in~\eqref{lm:ind6} and $c_{2}=1+\frac{\sqrt{MK}d_{\max}}{d_{\min}}$.

Finally, for each $\Vap\in (0,\overline{\Vap}_{\bx})$ choose $t_{\Vap}$ to be such that
\begin{equation}
\label{lm:ind8}
\myQ^{\rho}(\mathbf{x}_{t})\leq\underline{\myQ}^{\rho}+\frac{c(\alpha)}{2}\left(\Vap_{2}-c_{2}\Vap\right)^{2}~~~\forall t\geq t_{\Vap}.
\end{equation}
Note that such choice of $t_{\Vap}$ exists by~\eqref{lm:ind4} and the fact that $(\Vap_{2}-c_{2}\Vap)>0$ by~\eqref{lm:ind7}.

Now, fixing $\Vap\in (0,\overline{\Vap}_{\bx})$ we show that if $\acute{t}$ is such that $\acute{t}\geq t_{\Vap}$ and $\|\mathbf{x}_{\acute{t}}-\bx\|_{\infty}<\Vap$, then $\|\mathbf{x}_{\acute{t}+1}-\bx\|_{\infty}<\Vap$. This is accomplished in two steps: first (see Claim 1 below), we show that under the stated conditions $\mathcal{C}_{\acute{t}+1}\in\overline{\mathcal{U}}_{\bx}$ and subsequently $\|\mathbf{x}_{\acute{t}+1}-\bx\|_{\infty}<\Vap$.\\ \\

\noindent{\bf Claim 1}. Let $\Vap\in (0,\overline{\Vap}_{\bx})$, $\acute{t}\geq t_{\Vap}$, and $\|\mathbf{x}_{\acute{t}}-\bx\|_{\infty}<\Vap$. Then $\mathcal{C}_{\acute{t}+1}\in\overline{\mathcal{U}}_{\bx}$.\\

Suppose on the contrary that $\mathcal{C}_{\acute{t}+1}\notin\overline{\mathcal{U}}_{\bx}$.

From~\eqref{lm:ind7}
\begin{equation}
\label{lm:ind9}
\|\mathbf{x}_{\acute{t}}-\bx\|\leq \sqrt{MKp}\|\mathbf{x}_{\acute{t}}-\bx\|_{\infty}<\Vap_{\bx},
\end{equation}
and hence, by Lemma~\ref{lm:int_con}, $\mathcal{U}_{\mathbf{x}_{\acute{t}}}\subset\mathcal{U}_{\bx}$. This implies, see \eqref{reassign1}--\eqref{cent_up1}, that $\mathcal{C}_{\acute{t}+1}\in\mathcal{U}_{\bx}$.
Letting $\mu(\vx,\calC)$ be as defined in \eqref{lm:limp11} we then have, for each $m,k$,
\begin{align}
\label{lm:ind10}
 \left\|\bmu_{m}^{k}(\acute{t}+1)-\bmu_{m}^{k}(\bx,\mathcal{C}_{\acute{t}+1})\right\| = \nonumber \\
\left\|\frac{(1/\rho)\sum_{\y\in\mathcal{C}_{m}^{k}(\acute{t}+1)}\y+\sum_{l\in\Omega_{m}}\mathbf{x}_{l}^{k}(\acute{t})}{(1/\rho)|\mathcal{C}_{k}^{m}(\acute{t}+1)|+|\Omega_{m}|}-\frac{(1/\rho)\sum_{\y\in\mathcal{C}_{m}^{k}(\acute{t}+1)}\y+\sum_{l\in\Omega_{m}}\bx_{l}^{k}}{(1/\rho)|\mathcal{C}_{k}^{m}(\acute{t}+1)|+|\Omega_{m}|}\right\|\nonumber \\ = \frac{\left\|\sum_{l\in\Omega_{m}}\left(\mathbf{x}_{l}^{k}(\acute{t})-\bx_{l}^{k}\right)\right\|}{(1/\rho)|\mathcal{C}_{k}^{m}(\acute{t}+1)|+|\Omega_{m}|}\nonumber \\ \leq \frac{d_{\max}\|\mathbf{x}_{\acute{t}}-\bx\|_{\infty}}{d_{\min}}\\
\leq \frac{d_{\max}\Vap}{d_{\min}},
\end{align}
and thus
\begin{equation}
\label{lm:ind11}
\left\|\bmu_{\acute{t}+1}-\bmu(\bx,\mathcal{C}_{\acute{t}+1})\right\|\leq \frac{\sqrt{MK}d_{\max}\Vap}{d_{\min}}.
\end{equation}
Now, since $\mathcal{C}_{\acute{t}+1}\notin\overline{\mathcal{U}}_{\bx}$, we have (see~\eqref{lm:ind6})
\begin{equation}
\label{lm:ind12}
\left\|\bmu(\bx,\mathcal{C}_{\acute{t}+1})-\bx\right\|>\Vap_{2},
\end{equation}
which together with~\eqref{lm:ind11} leads to the estimate
\begin{align}
\label{lm:ind13}
\left\|\bmu_{\acute{t}+1}-\bx\right\|&\geq \left\|\bmu(\bx,\mathcal{C}_{\acute{t}+1})-\bx\right\|-\left\|\bmu_{\acute{t}+1}-\bmu(\bx,\mathcal{C}_{\acute{t}+1})\right\|\nonumber \\
&> \Vap_{2}-\frac{\sqrt{MK}d_{\max}\Vap}{d_{\min}}.
\end{align}
It then follows that
\begin{align}
\label{lm:ind14}
\left\|\mathbf{x}_{\acute{t}}-\bmu_{\acute{t}+1}\right\| & \geq \left\|\bmu_{\acute{t}+1}-\bx\right\|-\left\|\mathbf{x}_{\acute{t}}-\bx\right\|\nonumber \\
& >\Vap_{2}-\frac{\sqrt{MK}d_{\max}\Vap}{d_{\min}}-\Vap\\
& =\Vap_{2}-c_{2}\Vap,
\end{align}
where $c_{2}$ is as in~\eqref{lm:ind7}. Together with corollary~\ref{corr:costim},~\eqref{lm:ind8}, and the fact that $\acute{t}\geq t_{\Vap}$ this yields
\begin{align}
\label{lm:ind15}
\myQ^{\rho}(\mathbf{x}_{\acute{t}+1}) & \leq \myQ^{\rho}(\mathbf{x}_{\acute{t}})-c(\alpha)\|\mathbf{x}_{\acute{t}}-\bmu_{\acute{t}+1}\|^{2}\nonumber \\
& \leq \underline{\myQ}^{\rho}+(c(\alpha)/2)(\Vap_{2}-c_{2}\Vap)^{2}-c(\alpha)(\Vap_{2}-c_{2}\Vap)^{2}<\underline{\myQ}^{\rho}.
\end{align}
This clearly contradicts~\eqref{lm:ind4} and we conclude that $\mathcal{C}_{\acute{t}+1}\in\overline{\mathcal{U}}_{\bx}$, thus establishing Claim 1.\\ \\

To complete the proof, note that by~\eqref{cent_up} we have
\begin{equation}
\label{lm:ind16}
\left(\mathbf{x}_{\acute{t}+1}-\bx\right)=\left(1-\alpha\right)\left(\mathbf{x}_{\acute{t}}-\bx\right)+\alpha\left(\bmu_{\acute{t}+1}-\bx\right).
\end{equation}
Since $\mathcal{C}_{\acute{t}+1}\in\overline{\mathcal{U}}_{\bx}$, we have $\bmu(\bx,\mathcal{C}_{\acute{t}+1})=\bx$ and hence, by \eqref{lm:limp11} it follows that for all $m$ and $k$ we have
\begin{align}
\label{lm:ind17}
\left\|\bmu_{m}^{k}(\acute{t}+1)-\bx_m^k\right\|_{\infty} & \leq \frac{\sum_{l\in\Omega_{m}}\left\|\mathbf{x}_{l}^{k}(\acute{t})-\bx_{l}^{k}\right\|_{\infty}}{(1/\rho)|\mathcal{C}_{k}^{m}(\acute{t}+1)|+|\Omega_{m}|}\nonumber \\ & \leq \frac{|\Omega_{m}|.\|\mathbf{x}_{\acute{t}}-\bx\|_{\infty}}{|\Omega_{m}|}\\
& \leq \|\mathbf{x}_{\acute{t}}-\bx\|_{\infty}.
\end{align}
By~\eqref{lm:ind16}--\eqref{lm:ind17} we then obtain
\begin{align}
\label{lm:ind18}
\left\|\mathbf{x}_{\acute{t}+1}-\bx\right\|_{\infty}& \leq \left(1-\alpha\right)\|\mathbf{x}_{\acute{t}}-\bx\|_{\infty}+\alpha\|\mathbf{x}_{\acute{t}}-\bx\|_{\infty}\nonumber \\
& \leq \|\mathbf{x}_{\acute{t}}-\bx\|_{\infty}<\Vap.
\end{align}
In particular, we have shown that if $\Vap\in (0,\overline{\Vap}_{\bx})$ and $\acute{t}$ is such that $\acute{t}\geq t_{\Vap}$ and $\|\mathbf{x}_{\acute{t}}-\bx\|_{\infty}<\Vap$, then $\|\mathbf{x}_{\acute{t}+1}-\bx\|_{\infty}<\Vap$. A simple inductive argument yields that $\|\mathbf{x}_{t}-\bx\|_{\infty}<\Vap$ for all $t\geq\acute{t}$.\\
\end{proof}

{\bf Proof of Theorem \ref{th:conv}.}  Theorem \ref{th:conv} now follows immediately from Lemmas \ref{lm:limpoint}--\ref{lm:ind}.
In particular, the fact that $\{\vx_t\}$ converges to a generalized  minimum $\bx$ of $Q^\rho(\cdot)$ follows from Lemmas \ref{lm:limpoint} and \ref{lm:ind}. Having established the convergence of $\{\vx_t\}$ to $\bx$, the existence of a finite $T$ such that $\calC_t \in \calU_{\bx}$ for all  $t\geq T$ follows immediately from Lemma \ref{lm:limpc}.

\section{Generalized Minima and Lloyd's Minima}
\label{sec:genminimaconv}
In this section we will study asymptotic properties of the set of generalized minima of $Q^\rho$ as $\rho\to\infty$ and, in particular, we will prove Theorem \ref{lm:conv_to_Z_1}.

%\subsection{Limiting behavior of the set $\oM^{\rho}$ as $\rho\rightarrow\infty$}
%\label{subsec:limsets_local}
%Recall that the set $\oM^\rho$ is the subset of $\calM^\rho$ in which all clusters
To facilitate the discussion below, we introduce some notation. For $\mathbf{z}\in\mathbb{R}^{Kp}$, denote by $\mathcal{V}_{\mathbf{z}}$ the subset of partitions $\mathcal{P}$ of the collective data set $\mathcal{D}$ such that $\mathcal{P}=\left\{\mathcal{P}^{1},\mathcal{P}^{2},\cdots,\mathcal{P}^{K}\right\}\in\mathcal{V}_{\mathbf{z}}$ if, for all $\mathbf{y}\in\mathcal{D}$,
\begin{align}
\label{def_V_z}
\mathbf{y}\in\mathcal{P}^{k}~\Longrightarrow~\|\mathbf{y}-\mathbf{z}^{k}\|\leq\|\mathbf{y}-\mathbf{z}^{\acute{k}}\|,~\forall \acute{k}.
\end{align}
From the definition of Lloyd's minima, it follows that $\mathbf{z}\in\mathcal{Z}$ if and only if there exists a $\mathcal{P}\in\mathcal{V}_{\mathbf{z}}$ with the property that
\begin{align}
\label{def_V_z_1}
\left|\mathcal{P}^{k}\right|\mathbf{z}^{k}=\sum_{\y\in\mathcal{P}^{k}}\y
\end{align}
for all $k$.

We now prove Theorem \ref{lm:conv_to_Z_1}.
\begin{proof}[\textbf{Proof of Theorem \ref{lm:conv_to_Z_1}}] A necessary condition for $\vz\in\R^{Kp}$ to be an element of $\OZ$ is that there exist a partition $\calP = (\calP^k)_{k=1}^K$ such that for each  for each $k=1,\ldots,K$, the subvector $\vz^k$ is the centroid of $\calP^k$. Since the number of possible partitions is finite, this implies that $\OZ$ is finite, and in particular, compact.
Note that the sequence $\{\bx^{\rho}_{m}\}_{\rho\in\N_{+}}$, for each $m$, is bounded (by hypothesis). It then suffices to show that each limit point of $\{\bx^{\rho}_{m}\}_{\rho\in\N_{+}}$ belongs to $\OZ$.

To this end, without loss of generality, suppose that $\bx^{\rho}\rightarrow\mathbf{x}$ as $\rho\rightarrow\infty$. Then, necessarily, by Lemma~\ref{lm:rhocons} we have $\mathbf{x}=\mathbf{1}_{M}\otimes \mathbf{z}$ for some $\mathbf{z}\in\mathbb{R}^{Kp}$, i.e., the agent cluster centers reach consensus as $\rho\rightarrow\infty$. Clearly, $\mathbf{z}\in\cco(\mathcal{D})$. To claim the desired assertion, it is sufficient to show that $\mathbf{z}\in\mathcal{Z}$, which is achieved below.

Since $\bx^{\rho}\in\oM^{\rho}$, for each $\rho$, by Proposition~\ref{prop:rel_glob_loc} there exists a clustering $\calC^{\rho}\in\mathcal{U}_{\bx^{\rho}}$ such that the tuple $(\bx^{\rho},\calC^{\rho})\in\mathcal{J}^{\rho}$, and hence, in particular, we have for all $m$ and $k$
\begin{align}
\label{lm:conv_to_Z_3}
\left|\calC^{\rho,k}_{m}\right|\bx^{\rho,k}_{m}=\sum_{\y\in\calC^{\rho,k}_{m}}\y+\rho\sum_{l\in\Omega_{m}}\left(\bx^{\rho,k}_{l}- \bx^{\rho,k}_{m}\right).
\end{align}
By the symmetricity of the inter-agent communication graph we have
\begin{align}
\label{lm:conv_to_Z_4}
\sum_{m=1}^{M}\sum_{l\in\Omega_{m}}\left(\bx^{\rho,k}_{l}- \bx^{\rho,k}_{m}\right)=0,
\end{align}
and hence, summing both sides of~\eqref{lm:conv_to_Z_3} over $m$, we obtain for all $k$
\begin{align}
\label{lm:conv_to_Z_5}
\sum_{m}\left|\calC^{\rho,k}_{m}\right|\bx^{\rho,k}_{m}=\sum_{m}\sum_{\y\in\calC^{\rho,k}_{m}}\y.
\end{align}
For each $\rho$ and $k$, let
\begin{align}
\label{lm:conv_to_Z_6}\mathcal{P}^{\rho,k}=\calC_{1}^{\rho,k}\cup\calC_{2}^{\rho,k}\cup\cdots\cup\calC_{M}^{\rho,k},
\end{align}
and note that $\mathcal{P}^{\rho}=\left\{\mathcal{P}^{\rho,1},\mathcal{P}^{\rho,2},\cdots,\mathcal{P}^{\rho,K}\right\}$ is a valid partition of $\mathcal{D}$.

Since $\bx^{\rho}\rightarrow\mathbf{x}$ as $\rho\rightarrow\infty$, by Lemma~\ref{lm:int_con} there exists $\rho_{0}>0$ such that $\mathcal{U}_{\bx^{\rho}}\subset\mathcal{U}_{\mathbf{x}}$ for all $\rho\geq\rho_{0}$. Since $\mathbf{x}=\left(\mathbf{z},\mathbf{z},\cdots,\mathbf{z}\right)$ it then follows that (see~\eqref{def_V_z})
\begin{align}
\label{lm:conv_to_Z_7}\mathcal{P}^{\rho}\in\mathcal{V}_{\mathbf{z}},~~\forall \rho\geq\rho_{0}.
\end{align}
By~\eqref{lm:conv_to_Z_5}--\eqref{lm:conv_to_Z_6} and simple algebraic manipulations we obtain, for all $k$,
\begin{align}
\label{lm:conv_to_Z_8}
\left\|\left|\mathcal{P}^{\rho,k}\right|\mathbf{z}^{k}-\sum_{\y
\in\mathcal{P}^{\rho,k}}\y\right\|\leq\sum_{m}\left|\calC^{\rho,k}_{m}\right|\left\|\bx^{\rho,k}_m-\mathbf{z}^{k}\right\|.
\end{align}
Now fix $\Vap>0$ and choose $\rho(\Vap)\geq\rho_{0}$ such that
\begin{align}
\label{lm:conv_to_Z_9}
\left\|\bx^{\rho,k}-\mathbf{z}^{k}\right\|\leq\frac{\Vap}{M|\mathcal{D}|},~~\forall m,k.
\end{align}
Then $\mathcal{P}^{\rho(\Vap)}\in\mathcal{V}_{\mathbf{z}}$ and by~\eqref{lm:conv_to_Z_8} we obtain
\begin{align}
\label{lm:conv_to_Z_10}
\max_{k}\left\|\left|\mathcal{P}^{\rho(\Vap),k}\right|\mathbf{z}^{k}-\sum_{\y\in\mathcal{P}^{\rho(\Vap),k}}\y\right\| & \leq  \max_{k}\sum_{m}\left|\calC^{\rho(\Vap),k}_{m}\right|\left\|\bx^{\rho(\Vap),k}-\mathbf{z}^{k}\right\|\\ & \leq \frac{\Vap}{M|\mathcal{D}|}\left(\max_{k}\sum_{m}\left|\calC^{\rho(\Vap),k}_{m}\right|\right)\\ & \leq \Vap.
\end{align}
In other words, for each $\Vap>0$, there exists a valid partition $\mathcal{P}\doteq\mathcal{P}^{\rho(\Vap)}$ of $\mathcal{D}$ such that $\mathcal{P}^{\rho(\Vap)}\in\mathcal{V}_{\mathbf{z}}$ and
\begin{align}
\label{lm:conv_to_Z_11}
\max_{k}\left\|\left|\mathcal{P}^{\rho(\Vap),k}\right|\mathbf{z}^{k}-\sum_{\y\in\mathcal{P}^{\rho(\Vap),k}}\y\right\|\leq\Vap.
\end{align}
Since $\Vap>0$ is arbitrary, we have,
\begin{align}
\label{lm:conv_to_Z_12}
\min_{\mathcal{P}\in\mathcal{V}_{\mathbf{z}}}\max_{k}\left\|\left|\mathcal{P}^{\rho(\Vap),k}\right|\mathbf{z}^{k}-\sum_{\y\in\mathcal{P}^{\rho(\Vap),k}}\y\right\|=0.
\end{align}
Since the number of partitions in $\mathcal{V}_{\mathbf{z}}$ is finite, there exists $\mathcal{P}^{\ast}\in\mathcal{V}_{\mathbf{z}}$ such that
\begin{align}
\label{lm:conv_to_Z_13}
\max_{k}\left\|\left|\mathcal{P}^{\ast,k}\right|\mathbf{z}^{k}-\sum_{\y\in\mathcal{P}^{\ast,k}}\y\right\|=0.
\end{align}
Thus, by the equivalent characterization of Lloyd's minima in~\eqref{def_V_z}--\eqref{def_V_z_1}, we conclude that $\mathbf{z}\in\mathcal{Z}$ leading to the desired assertion.
\end{proof}

We remark that using similar arguments to the above proof it is straightforward to strengthen Theorem~\ref{lm:conv_to_Z} slightly achieving the following uniform convergence property.
\begin{corollary}
\label{corr:conv_to_Z}
Let Assumptions \ref{ass:data_size}, \ref{ass:comm-graph}, and \ref{ass-connected} hold. Then, we have that
\begin{equation}
\label{corr:conv_to_Z1}
\lim_{\rho\rightarrow\infty}\sup_{\bx^{\rho}\in\oM^{\rho}}\max_{m=1,\cdots,M}d\left(\bx_{m}^{\rho},\OZ\right)=0,
\end{equation}
where for each $\rho$, $\bx^{\rho}\in\oM^{\rho}$ and $m$, the quantity $\bx_{m}^{\rho}$ denotes the $K$-tuple $\{\bx_{m}^{\rho,1},\cdots,\bx_{m}^{\rho,K}\}$ of cluster center estimates at an agent $m$.

In particular, we have that, for each $\Vap>0$, there exists $\rho_{\Vap}\doteq\rho_{\Vap}(\mathcal{D},\mathcal{G})$, a function of the data set $\mathcal{D}$ and the inter-agent communication topology $\mathcal{G}$ only, such that
\begin{equation}
\label{corr:conv_to_Z2}
d\left(\bx_{m}^{\rho},\OZ\right)\leq\Vap
\end{equation}
for all $m$, $\rho\geq\rho_{\Vap}$ and $\bx^{\rho}\in\oM^{\rho}$.
\end{corollary}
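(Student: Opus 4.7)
The plan is to prove Corollary \ref{corr:conv_to_Z} by contradiction, leveraging the machinery already developed in the proof of Theorem \ref{lm:conv_to_Z}. Suppose the uniform convergence fails. Then there exists some $\Vap_0>0$ together with sequences $\rho_n\rightarrow\infty$, $\bx^{\rho_n}\in\oM^{\rho_n}$, and indices $m_n\in\{1,\ldots,M\}$ such that
\begin{equation*}
d\bigl(\bx_{m_n}^{\rho_n},\OZ\bigr)>\Vap_0 \qquad \text{for every } n.
\end{equation*}
Since the index set $\{1,\ldots,M\}$ is finite, by the pigeonhole principle I can pass to a subsequence (not relabeled) on which $m_n\equiv m$ is constant. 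Moreover, since each $\bx^{\rho_n}\in\oM^{\rho_n}$, every subvector $\bx^{\rho_n,k}_l$ lies in the compact set $\cco(\calD)$; hence by Bolzano--Weierstrass, after a further subsequence I may assume that $\bx^{\rho_n}\rightarrow\bx$ for some $\bx\in\R^{MKp}$ with $\bx_l^k\in\cco(\calD)$ for all $l,k$.

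Next I would rerun the argument of Theorem \ref{lm:conv_to_Z} on this subsequence. Although that theorem is stated for sequences indexed by all of $\N_+$, an inspection of its proof shows that only two facts about $\{\bx^{\rho}\}$ are used: (i) boundedness, guaranteeing the existence of a limit $\bx$, and (ii) the relation $\bx^\rho\in\oM^\rho$, which in conjunction with Lemma \ref{lm:rhocons} (cited in that proof) forces the limit to take the consensus form $\bx=\vone_M\otimes\vz$. These facts hold equally well along our subsequence. I would then replay the finite-set argument used from~\eqref{lm:conv_to_Z_3} through~\eqref{lm:conv_to_Z_13}: the partitions $\mathcal{P}^{\rho_n}$ induced on $\calD$ by the clusterings $\calC^{\rho_n}$ eventually lie in $\mathcal{V}_{\vz}$ (by Lemma \ref{lm:int_con}), and the centroid relation passes to the limit, showing that $\vz\in\OZ$.

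But this directly contradicts our assumption: since $\bx^{\rho_n}\rightarrow\vone_M\otimes\vz$ componentwise, in particular $\bx^{\rho_n}_m\rightarrow\vz\in\OZ$, so $d(\bx^{\rho_n}_m,\OZ)\rightarrow 0$, violating $d(\bx^{\rho_n}_m,\OZ)>\Vap_0$. This contradiction establishes~\eqref{corr:conv_to_Z1}. The second statement~\eqref{corr:conv_to_Z2} is an immediate reformulation: the $\Vap$--$\rho_\Vap$ characterization of uniform convergence, together with the observation that the construction depends only on $\calD$ and the graph $\mathcal{G}$ (via $\OZ$, $\cco(\calD)$, Lemma \ref{lm:int_con}'s thresholds, and the Laplacian spectrum) and not on any algorithm-specific data.

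The main technical point, and where I would concentrate the writing, is verifying that the consensus conclusion of Lemma \ref{lm:rhocons} and the limit-of-centroid argument in Theorem \ref{lm:conv_to_Z} carry over verbatim to the extracted subsequence. This is a cosmetic obstacle rather than a substantive one: the original proof never used that $\rho$ ranges over all of $\N_+$, only that $\rho\rightarrow\infty$ and $\bx^\rho\in\oM^\rho$, both of which are preserved under subsequencing. Everything else—compactness of $\OZ$ (finiteness of the set of partitions), the finite cardinality of $\mathcal{U}_{\bx}$, and the inclusions $\mathcal{U}_{\bx^{\rho_n}}\subset\mathcal{U}_{\bx}$ for $n$ large—is already in place.
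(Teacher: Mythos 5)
Your proof is correct and follows essentially the approach the paper intends: the paper gives no explicit proof of Corollary~\ref{corr:conv_to_Z}, remarking only that it follows ``using similar arguments'' to Theorem~\ref{lm:conv_to_Z}, and your contradiction/subsequence-extraction argument (pigeonhole on the finite agent index, Bolzano--Weierstrass on $\cco(\mathcal{D})^{MK}$, then replaying the consensus and centroid-limit steps along the subsequence) is precisely the standard way to upgrade that theorem's sequential statement to uniformity. You correctly identify that nothing in the theorem's proof uses that $\rho$ ranges over all of $\N_+$ rather than a subsequence tending to infinity, so there is no gap.
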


\subsection{Convergence of Clusters for Finite $\rho$} \label{sec:finite-rho-conv}
We now prove Corollary \ref{cor:finite-rho}.
\begin{proof}
Suppose $(\vx,\calP)\in\calL$ is a Lloyd's minimum and let $\hat \vx = (\bx,\ldots,\bx)$ be the $M$-fold repetition of $\vx$. Suppose that $\breve \calC \in \calU_{\hat \vx}$ and let $\breve \calP$ be the partition of $\calD$ generated by $\breve\calC$ in the usual way \eqref{eq:generated-partition}. By the definition of $\calU_{\hat \vx}$ we see that
$\calH(\vx,\calP) = \calH(\vx,\breve\calP)$.

Since the set of Lloyd's minima is finite, By Lemma \ref{lm:int_con}, there exists an $\epsilon >0$ such that for each $\vx\in\calZ$, and $\hat \vx = (\vx,\ldots,\vx)$ (again, the $M$-fold repetition of $\vx$) and all $\tilde \vx$ within a ball of radius $\epsilon$ of $\vx$, there holds $\calU_{\tilde \vx} \subset \calU_{\hat \vx}$.

The result now follows from Theorem \ref{lm:conv_to_Z}.
\end{proof}

%\noindent \textcolor{red}{Items for discussion: As we discussed last time, we have to point out that the above implies that as $\rho$ increases agents converge to cluster centers that are very close to each other and very close to Lloyd's minima; the closeness to Lloyd's can be determined independently of which generalized minima our algorithm is converging to (we can emphasize why this property is helpful, among other things, robustness to initializations etc.; I also think that it may be worthwhile to give the main message of the paper as a single theorem that combines algorithm convergence and with appropriate $\rho$ selection achieving arbitrary closeness to Lloyd's minima and the consensus subspace. If we do this, where would be the right place to state such a result?}

%\subsection{Convergence to set of Lloyd's Minima for Finite $\rho$}
%[ADD RESULT HERE.]

\section{Global Minima} \label{sec:global-min-convergence}
We will now study properties of the set of global minima of $\myQ^\rho$ as $\rho\to\infty$ and, in particular,  we will prove Theorem \ref{lm:conv_to_Z}. We will begin by considering basic properties of set of global minima of $Q^\rho$ in Section \ref{subsec:prop_glob_minima}. In Section \ref{subsec:limsets} we will then consider the behavior of this set as $\rho\to\infty$ and we will give the proof of Theorem \ref{lm:gloptconv1}

\subsection{Properties of $\mathcal{M}^\rho_g$}
\label{subsec:prop_glob_minima}

We start with the following result which shows that, at a global minimum $(\bx,\breve{\calC})$ of  $J^\rho$, the partition $\calP = \{\calP^1,\ldots,\calP^K\}$ of $\calD$ formed as $\calP^k = \calC_1^k\cup\cdots\cup\calC_M^k$, $k=1,\ldots,K$, is non-degenerate in the sense that $\calP^k\not= \emptyset$ for any $k$.
\begin{lemma}
\label{lm:non_empty_clusters}
Let Assumptions \ref{ass:data_size}, \ref{ass:comm-graph}, and \ref{ass-connected} hold, let $\rho\in\N_{+}$ be given, and suppose that the tuple $(\bx,\calC)$ is a global minimizer of the formulation~\eqref{rel3}, i.e., $(\bx,\calC)\in\mathcal{J}^{\rho}_{g}$. Then, for all $k$, we have that
\begin{equation}
\label{lm:nec1}
\bigcup_{m=1}^{M}\calC_{m}^{k}\neq\emptyset.
\end{equation}
\end{lemma}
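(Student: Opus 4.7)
The plan is to argue by contradiction: suppose $(\bx,\calC)\in\calJ^{\rho}_{g}$ yet $\bigcup_{m=1}^{M}\calC_{m}^{k_{0}}=\emptyset$ for some $k_{0}$, and exhibit a feasible pair $(\hat\vx,\hat\calC)$ with $J^{\rho}(\hat\vx,\hat\calC)<J^{\rho}(\bx,\calC)$, contradicting global optimality.

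The first ingredient is the centroid stationarity condition: since $(\bx,\calC)$ is a global minimizer of \eqref{rel3} and $\vx\mapsto J^{\rho}(\vx,\calC)$ is a convex quadratic with unique minimizer (Remark~\ref{remark:gen-min-vs-lloyds}), we must have
\[
\bx_{m}^{k}\!\left(\tfrac{|\calC_{m}^{k}|}{\rho}+|\Omega_{m}|\right) = \tfrac{1}{\rho}\!\sum_{\vy\in\calC_{m}^{k}}\!\vy + \sum_{l\in\Omega_{m}}\bx_{l}^{k}, \qquad \forall\, m,k.
\]
For $k=k_{0}$ the data sum is empty, reducing the condition to $L\bx^{k_{0}}=0$, where $\bx^{k_{0}}=(\bx_{m}^{k_{0}})_{m=1}^{M}$ and $L$ is the graph Laplacian. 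Assumption~\ref{ass-connected} then forces $\bx_{m}^{k_{0}}$ to be independent of $m$, so the cluster-$k_{0}$ consensus terms in $J^{\rho}(\bx,\calC)$ vanish.

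Next I would split into two cases. \emph{Generic case}: there exist indices $m_{a}$ and $k_{a}\neq k_{0}$ and a datapoint $\vy_{a}\in\calC_{m_{a}}^{k_{a}}$ with $\vy_{a}\neq\bx_{m_{a}}^{k_{a}}$. Reassign this occurrence of $\vy_{a}$ into the empty slot by setting $\hat\calC_{m_{a}}^{k_{0}}=\{\vy_{a}\}$, $\hat\calC_{m_{a}}^{k_{a}}=\calC_{m_{a}}^{k_{a}}\setminus\{\vy_{a}\}$, all other parts unchanged; and define $\hat\vx_{m}^{k_{0}}=\vy_{a}$ for every $m$, $\hat\vx_{m}^{k}=\bx_{m}^{k}$ for $k\neq k_{0}$. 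A term-by-term comparison in \eqref{rel4} then yields
\[
J^{\rho}(\hat\vx,\hat\calC)-J^{\rho}(\bx,\calC) = -\tfrac{1}{\rho}\|\vy_{a}-\bx_{m_{a}}^{k_{a}}\|^{2} < 0,
\]
because the only net changes are the deletion of $\tfrac{1}{\rho}\|\vy_{a}-\bx_{m_{a}}^{k_{a}}\|^{2}$ from $(m_{a},k_{a})$, the insertion of a new zero term at $(m_{a},k_{0})$, and the cluster-$k_{0}$ consensus terms, which were zero and remain zero since all $\hat\vx_{m}^{k_{0}}$ equal $\vy_{a}$. This contradicts global optimality.

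The main obstacle is the \emph{degenerate case}, in which $\vy=\bx_{m}^{k}$ for every $\vy\in\calC_{m}^{k}$ and every non-empty local cluster, so the naive move above produces no cost improvement. I would rule this case out via Assumption~\ref{ass:data_size}. Substituting $\vy=\bx_{m}^{k}$ into the stationarity condition collapses it to $|\Omega_{m}|\bx_{m}^{k}=\sum_{l\in\Omega_{m}}\bx_{l}^{k}$ at every $(m,k)$, i.e., $L\bx^{k}=0$ for each cluster $k$. Connectedness (Assumption~\ref{ass-connected}) then forces $\bx_{m}^{k}\equiv\vz_{k}^{\ast}$ independent of $m$, so every datapoint in $\calD$ equals some $\vz_{k}^{\ast}$ with $k\neq k_{0}$. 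Hence $\calD$ contains at most $K-1$ distinct values, contradicting Assumption~\ref{ass:data_size}. Both cases lead to contradictions, establishing the lemma.
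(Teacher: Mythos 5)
Your proof is correct and follows essentially the same route as the paper's: both rest on the centroid stationarity condition at a global minimum, the same reassign-one-point-to-the-empty-cluster construction with all $k_{0}$-centers set to the moved point, and a Laplacian/connectedness argument combined with Assumption~\ref{ass:data_size} to rule out the degenerate case. The only difference is organizational: the paper first isolates (via pigeonhole) a single cluster with two distinct points and proves the existence of a movable point there, whereas you perform the swap directly and dispatch the degenerate case globally by showing $\calD$ would have at most $K-1$ distinct values.
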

\begin{proof} The proof is achieved by contradiction. Suppose the tuple $(\bx,\calC)$ does not satisfy~\eqref{lm:nec1} for all $k$. Then, by Assumption~\ref{ass:data_size}, there exists $\acute{k}$ such that $\calC^{\acute{k}}$ has at least two distinct data points, where
\begin{equation}
\label{lm:nec2}
\calC^{\acute{k}}=\bigcup_{m=1}^{M}\calC_{m}^{k}.
\end{equation}
We now show that the following holds:

\noindent{\bf Claim 1}. There exists $\acute{m}\in\{1,\cdots,M\}$ and a data point $\wy\in\calC_{\acute{m}}^{\acute{k}}$ such that $\wy\neq \bx_{\acute{m}}^{\acute{k}}$.

Now, supposing to the contrary that Claim 1 as noted above does not hold, we must have for all $m=1,\cdots,M$
\begin{equation}
\label{lm:nec3}
\y=\bx_{m}^{\acute{k}}~~\mbox{if $\y\in\calC_{m}^{\acute{k}}$}.
\end{equation}
The only way the above assertion is possible is if each $\calC_{m}^{\acute{k}}$, $m=1,\cdots,M$, contains no more than one distinct data point.
Since, by construction $\calC^{\acute{k}}$ has at least two distinct data points, this implies that there exist $m_{1}$ and $m_{2}$ in $\{1,\cdots,M\}$ such that $\bx_{m_{1}}^{\acute{k}}\neq\bx_{m_{2}}^{\acute{k}}$, which by Assumption~\ref{ass-connected} and properties of the associated Laplacian matrix $L$ further imply that
\begin{equation}
\label{lm:nec4}
\left(L\otimes I_{p}\right)\mathbf{x}^{\acute{k}}\neq\mathbf{0},
\end{equation}
where $\mathbf{x}^{\acute{k}}=\vecc_{m}(\mathbf{x}_{m}^{\acute{k}})$. Now note that, by Proposition~\ref{prop:rel_glob_loc} and the fact that $(\bx,\calC)\in\mathcal{J}^{\rho}_{g}$ (by hypothesis), we have $(\bx,\calC)\in\mathcal{J}^{\rho}$. This, in turn, implies that (by Definition~\ref{def:locmin})
\begin{equation}
\label{lm:nec5}
\bx_{m}^{\acute{k}}=\frac{(1/\rho)\sum_{\y\in\calC_{m}^{\acute{k}}}\y+\sum_{l\in\Omega_{m}}\bx_{l}^{\acute{k}}}{(1/\rho)|\calC_{m}^{\acute{k}}|+|\Omega_{m}|}
\end{equation}
for all $m$. Now, in either case, i.e., as to whether $|\calC_{m}^{\acute{k}}|=0$ or not for a given $m$, combining~\eqref{lm:nec3} and~\eqref{lm:nec5} we obtain
\begin{equation}
\label{lm:nec6}
\bx_{m}^{\acute{k}} = \frac{\sum_{l\in\Omega_{m}}\bx_{l}^{\acute{k}}}{|\Omega_{m}|}
\end{equation}
for each $m$. This, in turn, implies that $\left(L\otimes I_{p}\right)\mathbf{x}^{\acute{k}}=\mathbf{0}$ which clearly contradicts with~\eqref{lm:nec4}. Hence, we conclude that Claim 1 must hold.

Now, by the contradiction hypothesis set up in~\eqref{lm:nec2}, there exists $k_{0}\neq\acute{k}$ such that $\calC_{m}^{k_{0}}=\emptyset$ for all $m$. Further, by Claim 1, there exist $\acute{m}\in\{1,\cdots,M\}$ and a data point $\wy\in\calC_{\acute{m}}^{\acute{k}}$ such that $\wy\neq \bx_{\acute{m}}^{\acute{k}}$. Consider the following potential tuple of cluster centers and clusters $(\wx,\wC)$ defined as follows:
\begin{equation}
\label{lm:nec7}
\wx_{m}^{k} = \left\{\begin{array}{ll}
                                    \bx_{m}^{k} & \mbox{for all $m$ and $k\neq k_{0}$}\\
                                    \wy & \mbox{for all $m$ and $k=k_{0}$},
                                    \end{array}\right.
\end{equation}

\begin{equation}
\label{lm:nec8}
\wC_{m}^{k} = \left\{\begin{array}{ll}
                                    \calC_{\acute{m}}^{\acute{k}}\setminus\{\wy\} & \mbox{if $m=\acute{m}$ and $k=\acute{k}$}\\
                                    \{\wy\} & \mbox{if $m=\acute{m}$ and $k=k_{0}$}\\
                                    \calC_{m}^{k} & \mbox{otherwise}\\
                                    \end{array}\right.
\end{equation}
In other words, the tuple $(\wx,\wC)$ is obtained by essentially transferring the data point $\wy$ from the $\acute{k}$-th cluster at agent $\acute{m}$ to the $k_{0}$-th cluster at $\acute{m}$ (the latter cluster is empty as far as the tuple $(\bx,\calC)$ is concerned), while setting the cluster centers $\wx_{m}^{k_{0}}$, $m=1,\cdots,M$, to be all equal to $\wy$. By directly computing the costs $J^{\rho}(\cdot)$ associated with the tuples $(\bx,\calC)$ and $(\wx,\wC)$ it readily follows that
\begin{equation}
\label{lm:nec9}
J^{\rho}(\wx,\wC)< J^{\rho}(\bx,\calC).
\end{equation}
(Note that by Claim 1 the data point $\wy$ incurs a strictly positive cost in the assignment $(\bx,\calC)$ whereas contributes to zero cost in $(\wx,\wC)$; also, the cluster center agreement costs stay the same by construction~\eqref{lm:nec7}.) Clearly,~\eqref{lm:nec9} contradicts with the fact that the tuple $(\bx,\calC)$ is a global minimizer and the desired assertion follows.
\end{proof}

We now obtain the following result.

\begin{lemma} \label{lm:cvx-hull}
\label{lm:hull} Let Assumptions \ref{ass:data_size}, \ref{ass:comm-graph}, \ref{ass-connected}, and \ref{ass-alpha} hold. Then, for each $\bx\in\mathcal{M}^{\rho}_g$ we have that
\begin{equation}
\label{lm:hull1}
\bx_{m}^{k}\in\cco(\mathcal{D})
\end{equation}
for all $m$ and $k$.
\end{lemma}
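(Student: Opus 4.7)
The plan is to exploit the fixed-point equation that characterizes generalized minima, combined with Lemma~\ref{lm:non_empty_clusters}, rather than attempting a direct projection/improvement argument (which runs into difficulties with the consensus term when a displaced center has empty local cluster). First I would fix $\bx\in\mathcal{M}^{\rho}_{g}$ and pick $\calC$ with $(\bx,\calC)\in\mathcal{J}^{\rho}_{g}$. By Proposition~\ref{prop:rel_glob_loc}, $(\bx,\calC)\in\mathcal{J}^{\rho}$, so Definition~\ref{def:locmin} gives, for all $m,k$,
\begin{equation*}
\bx_{m}^{k}=\frac{(1/\rho)\sum_{\y\in\calC_{m}^{k}}\y+\sum_{l\in\Omega_{m}}\bx_{l}^{k}}{(1/\rho)|\calC_{m}^{k}|+|\Omega_{m}|},
\end{equation*}
which displays $\bx_{m}^{k}$ as a convex combination of (i) the centroid $\bar{\y}_{m}^{k}$ of $\calC_{m}^{k}$ (which lies in $\cco(\mathcal{D})$ whenever $\calC_{m}^{k}\neq\emptyset$) and (ii) the neighboring centers $\bx_{l}^{k}$, $l\in\Omega_{m}$.

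Next, for each $k$ I would study $D^{\ast}:=\max_{m}d(\bx_{m}^{k},\cco(\mathcal{D}))$ and any maximizer $m^{\ast}$. Since $d(\cdot,\cco(\mathcal{D}))$ is convex, applying it to the fixed-point identity yields two cases. If $|\calC_{m^{\ast}}^{k}|>0$, the term $\bar{\y}_{m^{\ast}}^{k}$ has distance zero and carries strictly positive weight, so
\begin{equation*}
D^{\ast}=d(\bx_{m^{\ast}}^{k},\cco(\mathcal{D}))\leq\frac{|\Omega_{m^{\ast}}|}{(1/\rho)|\calC_{m^{\ast}}^{k}|+|\Omega_{m^{\ast}}|}D^{\ast},
\end{equation*}
forcing $D^{\ast}=0$. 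If $|\calC_{m^{\ast}}^{k}|=0$, then $\bx_{m^{\ast}}^{k}$ is the arithmetic mean of its neighbors, so $D^{\ast}\leq\frac{1}{|\Omega_{m^{\ast}}|}\sum_{l\in\Omega_{m^{\ast}}}d(\bx_{l}^{k},\cco(\mathcal{D}))\leq D^{\ast}$, and equality forces every $l\in\Omega_{m^{\ast}}$ to also attain $D^{\ast}$.

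Finally I would propagate: by Assumption~\ref{ass-connected} the communication graph is connected, so iterating the neighbor-inclusion argument shows that \emph{every} agent attains $D^{\ast}$ in the $k$-th coordinate. But then the first case would have to fail at every agent, i.e.\ $\calC_{m}^{k}=\emptyset$ for all $m$, directly contradicting Lemma~\ref{lm:non_empty_clusters}. Hence $D^{\ast}=0$, giving $\bx_{m}^{k}\in\cco(\mathcal{D})$ for all $m,k$. The main subtle step is the empty-cluster case, where one cannot conclude $D^\ast = 0$ pointwise and instead must use connectedness plus Lemma~\ref{lm:non_empty_clusters} to obtain the global contradiction; everything else is a routine consequence of the convexity of $d(\cdot,\cco(\mathcal{D}))$ applied to the generalized-minimum identity.
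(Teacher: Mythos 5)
Your argument is correct, and it takes a genuinely different route from the paper's. The paper proves this lemma by a direct improvement argument: it projects every non-compliant center onto $\cco(\mathcal{D})$, uses the law of cosines to show the data-fit terms do not increase (and strictly decrease where a displaced center serves a datapoint) and nonexpansiveness of the projection to show the consensus terms do not increase, and concludes that a global minimizer of $\vx\mapsto J^{\rho}(\vx,\calC)$ cannot have a center outside $\cco(\mathcal{D})$. You instead work from the first-order characterization in Definition~\ref{def:locmin} (via Proposition~\ref{prop:rel_glob_loc}) and run a discrete maximum principle on the convex function $d(\cdot,\cco(\mathcal{D}))$: at an agent maximizing the distance, either the local cluster is nonempty and the strictly contracting weight forces the maximum to zero, or the center is the average of its neighbors and the maximum propagates, so that by Assumption~\ref{ass-connected} all agents would attain it with empty $k$-th clusters, contradicting Lemma~\ref{lm:non_empty_clusters}. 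Your approach buys a clean treatment of exactly the degenerate case where a non-compliant center has an empty local cluster --- there the paper's displayed chain gets no strict decrease from the data terms and its strictness claim leans only on $\calI\neq\emptyset$, whereas your connectivity-plus-nonemptiness argument handles it explicitly; the cost is that your proof genuinely needs global minimality twice (once through Lemma~\ref{lm:non_empty_clusters}), while the paper's projection argument needs it only once and would extend verbatim to any clustering $\calC$ whose every nonempty index is non-degenerate. Both proofs are valid; yours is arguably the more robust of the two at the degenerate configurations.
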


%Before proving the lemma we note that we note that by Proposition \ref{prop:rel_glob_loc}, $\calM_g^\rho \subset \calM^\rho$, hence as a special case of the above lemma we see that if $\bx\in \calM_g^\rho$ then $\bx_m^k \in \cco(\calD)$ for all $m$ and $k$.  We now prove Lemma \ref{lm:cvx-hull}.
\begin{proof}
Recall that $\bx \in \calM^\rho_g$ if and only if there exists a $\calC$ such that $(\bx,\calC) \in \calJ^\rho_g$. Moreover, recall that a necessary condition for $(\bx,\calC) \in \calJ^\rho_g$ is that $\bx$ be a global minimizer of the (quadratic) function $\vx \mapsto J^\rho(\vx,\calC)$ (see Remark \ref{remark:gen-min-vs-lloyds}).

For the sake of contradiction, suppose that $\vx\in\calM^\rho_g$ but $\bx_{m}^{k}\notin\cco(\mathcal{D})$ for some $m$ and $k$. Let $\calC$ be a partition such that $(\bx,\calC)\in \calJ^\rho_g$. We will show that there exists an $\hat \vx$ such that $J^\rho(\hat \vx,\calC) < J^\rho(\bx,\calC)$ which contradicts the hypothesis that $(\bx,\calC)\in \calJ^\rho_g$ (or equivalently, that $\bx\in \calM^\rho_g$).

%We now proceed with the proof.
Let $\calI =\{(m,k):~ \bx_m^k \notin\cco(\calD)\}$ denote the set of non-compliant indices and let $P_{\calD}:\R^p\to\cco(\calD)$ be the conventional projection operator onto $\cco(\calD)$ given by
$$P_\calD(\vx)= \{\vz\in \cco(\calD): \|\vx - \vz\| \leq \|\vx - \vz'\| ~\forall \vz'\in \cco(\calD)\},$$
Note that since $\cco(\calD)$ is convex and compact, $P_\calD(\vx)$ is non-empty and single valued for all $\vx\in \R^p$ and, moreover, $P_\calD$ is nonexpansive in the sense that for any $\vx,\vy\in\R^p$,
\begin{equation} \label{eq:non-expansive}
\|P_\calD(\vx) - P_\calD(\vy)\| \leq \|\vx - \vy\|.
\end{equation}

%We will now show that for any $\epsilon > 0$ we can construct a vector $\hat \vx\in B_\epsilon(\bx)\subset \R^{MNp}$ for which $\myQ^\rho(\hat \vx) \leq \myQ^\rho(\bx)$. (Note somewhere that $x\in \calM^\rho_g$ is equivalent to $(x,\calC)\in\calJ^\rho_g$ for some partition $\calC$ which holds if and only if $x\mapsto \calJ^\rho_g(x,\calC)$ is a local (global?) min for fixed $\calC$. Thus, showing there exists an $\hat x$ s.t. $J^\rho(\hat x,\calC) < J^\rho(\bx,\calC)$ implies $(\bx,\calC)\not\in \calJ^\rho_g$, which implies that $\bx\not\in \calM^\rho_g$, which contradicts our hypothesis.)

Let $\hat \vx = \vecc_{m,k}(\hat \vx_m^k)$, where for each $(m,k)$
$$
\hat \vx_m^k = P_\calD(\bx_m^k).
$$
Let $\vz$ be an arbitrary element of $\cco(\calD)$.
Since $\cco(\calD)$ is convex, for each $m$, $k$ we have that
$$
(\vz - \hat \vx_m^k)^T (\bx_m^k - \hat \vx_m^k) \leq  0.
$$
%where for two vectors $a,b\in\R^p$, $\langle a,b \rangle= a^T b$ denotes the standard inner product.
By the law of cosines this gives
\begin{align}
\|\vz  - \bx_m^k\|^2 & = \| \vz - \hat \vx_m^k\|^2 + \|\bx_m^k - \hat \vx_m^k\|^2 - 2(\vz - \hat \vx_m^k)^T(\bx_m^k - \hat \vx_m^k)\\
& \geq \| \vz - \hat \vx_m^k\|^2 + \|\bx_m^k - \hat \vx_m^k\|^2
\end{align}
for any $m=1,\ldots,M$, $k=1,\ldots, K$.
Since $\cco(\calD)$ is compact, for each $(m,k) \in \calI$ we have $\|\bx_m^k - \hat \vx_m^k\| > 0$, and hence
\begin{equation} \label{eq:law-of-cosines}
\|\vz  - \bx_m^k\|^2 > \| \vz - \hat \vx_m^k\|^2.
\end{equation}
%%Since $\hat \vx_m^k = P_\calD(\bx_m^k)$,
%For each $(m,k) \in \calI$ we have that $\|\vz_m^k - \bx_m^k\|^2 > \|\vz_m^k - \hat \vx_m^k\|^2$. (JUSTIFY.) Note that the inequality is strict due to the assumption that $\bx_m^k\not\in\cco(\calD)$. Thus, by the theorem of Pythagoras we see that
%\begin{align}
%\|y - \bx_m^k\|^2 & = \|y - \vz_m^k\|^2 + \|\vz_m^k - \bx_m^k\|^2\\
%& > \|y - \vz_m^k\|^2 + \|\vz_m^k - \hat \vx_m^k\|^2\\
%& = \|y - \hat \vx_m^k\|^2
%\end{align}
%for all $y\in \calD$ and $(m,k) \in\calI$.

Furthermore, by \eqref{eq:non-expansive} we have that $\|\hat \vx_m^k - \hat \vx_\ell^k\|^2 \leq \|\bx_m^k - \bx_\ell^k\|^2$ for all $m,\ell\in\{1,\ldots,M\}$ and $k=1,\ldots,K$.
Thus we see that
\begin{align}
J^\rho(\bx, \calC) & = \sum_{m=1}^M \sum_{k=1}^K \sum_{y\in \calC_m^k} \|y-\bx_m^k\|^2 + \sum_{m=1}^M \sum_{k=1}^K \sum_{\ell\in \Omega_m} \|\bx_m^k - \bx_\ell^k\|^2\\
& < \sum_{m=1}^M \sum_{k=1}^K \sum_{y\in \calC_m^k} \|y-\hat \vx_m^k\|^2 + \sum_{m=1}^M \sum_{k=1}^K \sum_{\ell\in \Omega_m} \|\hat \vx_m^k - \hat \vx_\ell^k\|^2\\
& = J^\rho(\hat \vx,\calC),
\end{align}
where the strict inequality follows from \eqref{eq:law-of-cosines} and the fact that $\calI \not = \emptyset$. This completes the proof.
\end{proof}

As an immediate consequence of Proposition~\ref{prop:rel_glob_loc} and Lemma~\ref{lm:hull} we obtain the following result.
\begin{corollary}
\label{corr:cohull}
Let Assumptions \ref{ass:data_size}, \ref{ass:comm-graph}, and \ref{ass-connected} hold.
%and denote by $\oM^{\rho}$ the subset of $\mathcal{M}^{\rho}$ such that
Then, we have that
\begin{equation}
\label{corr:chull2}
\mathcal{M}_{g}^{\rho}\subset\oM^{\rho}.
\end{equation}
\end{corollary}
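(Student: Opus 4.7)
The plan is to assemble the corollary directly from the two ingredients cited in its statement, namely Proposition~\ref{prop:rel_glob_loc} and Lemma~\ref{lm:hull}. Unpacking the definitions, a point $\bx$ lies in $\oM^{\rho}$ iff (i) $\bx \in \mathcal{M}^{\rho}$, i.e., $\bx$ is a generalized minimum of $J^{\rho}(\cdot,\cdot)$, and (ii) every component $\bx_m^k$ belongs to $\cco(\mathcal{D})$. I would fix an arbitrary $\bx \in \mathcal{M}_g^{\rho}$ and verify the two conditions in turn.

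For (i), I would invoke Proposition~\ref{prop:rel_glob_loc}, which yields the inclusion $\mathcal{M}_g^{\rho} \subset \mathcal{M}^{\rho}$ directly, so $\bx \in \mathcal{M}^{\rho}$. For (ii), I would apply Lemma~\ref{lm:hull} (equivalently Lemma~\ref{lm:cvx-hull}) to $\bx$, which asserts exactly that every global minimizer of $\myQ^{\rho}$ has each of its cluster-center components $\bx_m^k$ in $\cco(\mathcal{D})$. Combining these two facts gives $\bx \in \oM^{\rho}$, and since $\bx$ was arbitrary in $\mathcal{M}_g^{\rho}$, the inclusion $\mathcal{M}_g^{\rho} \subset \oM^{\rho}$ follows.

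There is essentially no obstacle here: the work is already done in the prior lemma and proposition, and the corollary simply repackages the two of them by intersecting their conclusions. The only care required is the bookkeeping to confirm that the membership condition defining $\oM^{\rho}$ (namely, $\bx \in \mathcal{M}^{\rho}$ together with $\bx_m^k \in \cco(\mathcal{D})$ for all $m,k$) matches verbatim the two guarantees produced by Proposition~\ref{prop:rel_glob_loc} and Lemma~\ref{lm:hull}, which it does.
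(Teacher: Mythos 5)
Your proposal is correct and matches the paper exactly: the paper states the corollary as "an immediate consequence of Proposition~\ref{prop:rel_glob_loc} and Lemma~\ref{lm:hull}," which is precisely the two-step combination you carry out (membership in $\mathcal{M}^{\rho}$ from the proposition, containment of each $\bx_m^k$ in $\cco(\mathcal{D})$ from the lemma). No gaps.
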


\subsection{Limiting behavior of the sets $\mathcal{M}_{g}^{\rho}$ and $\oM^{\rho}$ as $\rho\rightarrow\infty$}
\label{subsec:limsets}

We start with the following result that quantifies the deviation from consensus of the agent cluster center estimates at a generalized minimum. We note that Corollary \ref{corr:cohull} allows us to study properties of $\calM^\rho_g$ by studying $\oM^\rho$. This is the approach we will take here.
\begin{lemma}
\label{lm:rhocons} Let Assumptions \ref{ass:data_size}, \ref{ass:comm-graph}, and \ref{ass-connected} hold and let $\bx\in\oM^{\rho}$. Then, for all $k$, we have that
\begin{equation}
\label{lm:rhocons1}
\left\|\bx_{m}^{k}-\bx_{l}^{k}\right\|\leq \frac{4\sqrt{M}R_{0}|\mathcal{D}|}{\rho\lambda_{2}(L)}
\end{equation}
for each pair $(m,l)$ of agents, where
\begin{equation}
\label{lm:rhocons2}
R_{0}=\max_{\mathbf{v}\in\cco(\mathcal{D})}\|\mathbf{v}\|
\end{equation}
and $\lambda_{2}(L)$ denotes the second largest eigenvalue of the communication network graph Laplacian $L$.
\end{lemma}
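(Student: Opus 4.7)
\noindent\textbf{Proof plan for Lemma~\ref{lm:rhocons}.}
My plan is to exploit the fixed-point equation that defines $\overline{\mathcal{U}}_{\bx}$ to reformulate the claim as a linear system on the ``non-consensus'' subspace, and then invert using the spectral gap of $L$.

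Since $\bx\in\oM^{\rho}$, by \eqref{eq:calU-condition} there exists $\breve{\calC}\in\overline{\calU}_{\bx}$; in particular, for each $m$ and $k$,
\[
\bigl((1/\rho)|\breve{\calC}_{m}^{k}|+|\Omega_{m}|\bigr)\bx_{m}^{k}=(1/\rho)\!\!\sum_{\y\in\breve{\calC}_{m}^{k}}\!\!\y+\sum_{l\in\Omega_{m}}\bx_{l}^{k}.
\]
Rearranging, $|\Omega_{m}|\bx_{m}^{k}-\sum_{l\in\Omega_{m}}\bx_{l}^{k}=(1/\rho)\sum_{\y\in\breve{\calC}_{m}^{k}}(\y-\bx_{m}^{k})$. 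Stacking over $m$ and writing $\vx^{k}=\vecc_{m}(\bx_{m}^{k})\in\R^{Mp}$, the left-hand side is the $m$-th block of $(L\otimes I_{p})\vx^{k}$. Hence
\[
(L\otimes I_{p})\vx^{k}=\vb^{k},\qquad \vb_{m}^{k}:=\frac{1}{\rho}\sum_{\y\in\breve{\calC}_{m}^{k}}(\y-\bx_{m}^{k}).
\]

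Next I would observe that $\vb^{k}$ lies in the range of $L\otimes I_{p}$, i.e.\ in $\bigl(\ker(L\otimes I_{p})\bigr)^{\perp}=(\mathbf{1}_{M}\otimes\R^{p})^{\perp}$. One sees this either directly by summing the $m$-th equations (using $\mathbf{1}^{\top}L=0$) or a fortiori from the fact that $(L\otimes I_{p})\vx^{k}=\vb^{k}$ has a solution. Decomposing $\vx^{k}=\mathbf{1}_{M}\otimes\bar{\vx}^{k}+\tilde{\vx}^{k}$ with $\bar{\vx}^{k}=\frac{1}{M}\sum_{m}\bx_{m}^{k}$ and $\tilde{\vx}^{k}\perp(\mathbf{1}_{M}\otimes\R^{p})$, the consensus component is annihilated by $L\otimes I_{p}$, so
\[
(L\otimes I_{p})\tilde{\vx}^{k}=\vb^{k},
\]
and on $(\mathbf{1}_{M}\otimes\R^{p})^{\perp}$ the operator $L\otimes I_{p}$ is invertible with smallest eigenvalue $\lambda_{2}(L)>0$ (Assumption~\ref{ass-connected}). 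Therefore $\|\tilde{\vx}^{k}\|\leq \|\vb^{k}\|/\lambda_{2}(L)$.

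It remains to bound $\|\vb^{k}\|$. Here I use that $\bx\in\oM^{\rho}$ forces $\bx_{m}^{k}\in\cco(\mathcal{D})$, so $\|\bx_{m}^{k}\|\leq R_{0}$; combined with $\|\y\|\leq R_{0}$ for $\y\in\calD$, this gives $\|\vb_{m}^{k}\|\leq 2R_{0}|\breve{\calC}_{m}^{k}|/\rho\leq 2R_{0}|\calD|/\rho$, whence $\|\vb^{k}\|\leq 2\sqrt{M}R_{0}|\calD|/\rho$. Finally, for any pair $(m,l)$,
\[
\|\bx_{m}^{k}-\bx_{l}^{k}\|=\|\tilde{\vx}_{m}^{k}-\tilde{\vx}_{l}^{k}\|\leq 2\|\tilde{\vx}^{k}\|\leq \frac{4\sqrt{M}R_{0}|\calD|}{\rho\lambda_{2}(L)},
\]
which is the claimed bound. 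There is no real obstacle; the only subtle point is recognizing that $\overline{\calU}_{\bx}\neq\emptyset$ together with $\bx_{m}^{k}\in\cco(\calD)$ (via $\bx\in\oM^{\rho}$) simultaneously supplies the fixed-point identity \emph{and} the uniform bound on $\|\vb_{m}^{k}\|$, so that the spectral gap of $L$ converts the $O(1/\rho)$ right-hand side into an $O(1/(\rho\lambda_{2}(L)))$ consensus error.
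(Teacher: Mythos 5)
Your proposal is correct and follows essentially the same route as the paper's proof: both start from the fixed-point identity defining $\overline{\calU}_{\bx}$, rearrange it into a residual equation of the form $(L\otimes I_{p})\vx^{k}=\vb^{k}$ with $\|\vb^{k}\|\leq 2\sqrt{M}R_{0}|\calD|/\rho$ (using $\bx_{m}^{k}\in\cco(\calD)$ from the hypothesis $\bx\in\oM^{\rho}$), project off the consensus component, invert via the spectral gap $\lambda_{2}(L)$, and finish with the triangle inequality. The only difference is cosmetic bookkeeping of the residual term.
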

\begin{proof}
Let $\calC$ be such that $(\bx,\calC)\in\mathcal{J}^{\rho}$. By Definition~\ref{def:locmin} we have that
\begin{equation}
\label{lm:rhocons3}
\bx_{m}^{k}=\frac{(1/\rho)\sum_{\y\in\calC_{m}^{k}}\y + \sum_{l\in\Omega_{m}}\bx_{l}^{k}}{(1/\rho)|\calC_{m}^{k}|+|\Omega_{m}|}
\end{equation}
for all $m$ and $k$. Rearranging~\eqref{lm:rhocons3} we obtain
\begin{align}
\label{lm:rhocons4}
\left\||\Omega_{m}|\bx_{m}^{k}-\sum_{l\in\Omega_{m}}\bx_{l}^{k}\right\| & \leq (1/\rho)\left\|\sum_{\y\in\calC}\y\right\|+(1/\rho)|\calC_{m}^{k}|\left\|\bx_{m}^{k}\right\|\\
\leq & \frac{2R_{0}|\mathcal{D}|}{\rho},
\end{align}
where $R_{0}$ is defined in~\eqref{lm:rhocons2} and we used the fact that $|\calC_{m}^{k}|\leq |\mathcal{D}|$. Fixing $k$ and stacking~\eqref{lm:rhocons4} over $m$, we obtain
\begin{equation}
\label{lm:rhocons5}
\left\|\left(L\otimes I_{p}\right)\bx^{k}\right\|\leq \frac{2\sqrt{M}R_{0}|\mathcal{D}|}{\rho},
\end{equation}
where $\bx^{k}=\vecc_{m}(\bx_{m}^{k})$. Denoting by $\underline{\bx}^{k}\in\mathbb{R}^{p}$ the average
\begin{equation}
\label{lm:rhocons6}
\underline{\bx}^{k}=(1/M)\sum_{m=1}^{M}\bx_{m}^{k}
\end{equation}
and noting that $(L\otimes I_{p})\underline{\bx}^{k}=\mathbf{0}_{p}$ we have
\begin{equation}
\label{lm:rhocons7}
\left\|\left(L\otimes I_{p}\right)\left(\bx^{k}-\mathbf{1}_{p}\otimes\underline{\bx}^{k}\right)\right\|\leq \frac{2\sqrt{M}R_{0}|\mathcal{D}|}{\rho}.
\end{equation}
Now note that the vector $\left(\bx^{k}-\mathbf{1}_{p}\otimes\underline{\bx}^{k}\right)$ is orthogonal to the consensus subspace, i.e., the subspace $\{\vx\in \R^{KMp}: \vx = \textbf{1}_{M}\otimes \va \mbox{ for some } \va\in \R^{Kp}\}$, and hence we have that
\begin{equation}
\label{lm:rhocons8}
\left\|\left(L\otimes I_{p}\right)\left(\bx^{k}-\mathbf{1}_{p}\otimes\underline{\bx}^{k}\right)\right\|\geq\lambda_{2}(L)\left\|\bx^{k}-\mathbf{1}_{p}\otimes\underline{\bx}^{k}\right\|,
\end{equation}
where $\lambda_{2}(L)>0$ by Assumption~\ref{ass-connected}. By~\eqref{lm:rhocons7}--\eqref{lm:rhocons8} we obtain
\begin{align}
\label{lm:rhocons9}
\left\|\bx_{m}^{k}-\bx_{l}^{k}\right\| & \leq \left\|\bx_{m}^{k}-\underline{\bx}^{k}\right\|+\left\|\bx_{l}^{k}-\underline{\bx}^{k}\right\|\\
& \leq 2\left\|\bx^{k}-\mathbf{1}_{p}\otimes\underline{\bx}^{k}\right\|\\
&\leq \frac{2}{\lambda_{2}(L)}\left\|\left(L\otimes I_{p}\right)\left(\bx^{k}-\mathbf{1}_{p}\otimes\underline{\bx}^{k}\right)\right\|\\
& \leq\frac{4\sqrt{M}R_{0}|\mathcal{D}|}{\rho\lambda_{2}(L)}.
\end{align}
\end{proof}

We now quantify as a function of $\rho$ the \emph{optimality gap} between the $K$-means formulation~\eqref{Kmeans} and its relaxation~\eqref{rel6}.
\begin{lemma}
\label{lm:gloptgap}
Let Assumptions \ref{ass:data_size}, \ref{ass:comm-graph}, and \ref{ass-connected} hold and let $\bx$ be a global minimizer of $\myQ^{\rho}(\cdot)$, i.e., $\bx\in\mathcal{M}^{\rho}_{g}$. For each $m$, denote by $\bx_{m}$ the $K$-tuple $\{\bx_{m}^{1},\cdots,\bx_{m}^{K}\}$. Then, we have that
\begin{equation}
\label{lm:gloptgap1}
\mathcal{F}(\bx_{m})\leq\mathcal{F}^{\ast} + \frac{16\sqrt{M}R^{2}_{0}|\mathcal{D}|^{2}}{\rho\lambda_{2}(L)},
\end{equation}
where $\mathcal{F}^{\ast}$ is the global minimum value of~\eqref{Kmeans} and $R_{0}$ is defined in~\eqref{lm:rhocons2}.
\end{lemma}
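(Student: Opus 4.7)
The plan is to compare $\myQ^\rho(\bx)$ at the global minimizer $\bx$ with $\myQ^\rho$ evaluated at a "consensus" configuration built from a classical $K$-means minimizer, then translate the resulting inequality into a bound on $\mathcal{F}(\bx_m)$ via the near-consensus estimate from Lemma~\ref{lm:rhocons}.

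First, let $\vx^{\ast}$ be a global minimizer of $\mathcal{F}$ and define $\hat{\vx}=\mathbf{1}_{M}\otimes \vx^{\ast}$, i.e., every agent holds $\vx^{\ast}$. Then $\hat{\vx}_m^k=\hat{\vx}_l^k$ for all $m,l,k$, so the penalty term in $\myQ^\rho$ vanishes and the clustering term collapses to $\mathcal{F}^{\ast}/\rho$:
\begin{equation*}
\myQ^{\rho}(\hat{\vx})=\frac{1}{\rho}\sum_{m=1}^{M}\sum_{\y\in\mathcal{D}_m}\min_{k}\|\y-\vx^{\ast k}\|^2=\frac{\mathcal{F}^{\ast}}{\rho}.
\end{equation*}
Since $\bx\in\mathcal{M}^{\rho}_g$, we have $\myQ^{\rho}(\bx)\le\myQ^{\rho}(\hat{\vx})=\mathcal{F}^{\ast}/\rho$. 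Dropping the nonnegative penalty term and multiplying by $\rho$ yields
\begin{equation*}
\sum_{l=1}^{M}\sum_{\y\in\mathcal{D}_l}\min_{k}\|\y-\bx_l^{k}\|^2\;\le\;\mathcal{F}^{\ast}.
\end{equation*}

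Next, by Corollary~\ref{corr:cohull} we have $\bx\in\oM^{\rho}$, so Lemma~\ref{lm:rhocons} gives $\|\bx_m^{k}-\bx_l^{k}\|\le\delta$ for all $m,l,k$, where $\delta=\frac{4\sqrt{M}R_0|\mathcal{D}|}{\rho\lambda_2(L)}$. In addition, Lemma~\ref{lm:cvx-hull} implies $\bx_m^{k}\in\cco(\mathcal{D})$, and since $\y\in\mathcal{D}\subset\cco(\mathcal{D})$ we have $\|\y-\bx_m^{k}\|\le 2R_0$ and $\|\y-\bx_l^{k}\|\le 2R_0$ for every $m,l,k$ and every $\y\in\mathcal{D}$.

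The key perturbation step uses the identity
\begin{equation*}
\|\y-\bx_m^{k}\|^2-\|\y-\bx_l^{k}\|^2=(2\y-\bx_m^{k}-\bx_l^{k})^{\top}(\bx_l^{k}-\bx_m^{k}),
\end{equation*}
which combined with Cauchy--Schwarz and the bounds above gives $\|\y-\bx_m^{k}\|^2\le\|\y-\bx_l^{k}\|^2+4R_0\delta$ for every $k$. Taking the minimum over $k$ on both sides (applied with the same index that achieves the right-hand minimum) yields $\min_k\|\y-\bx_m^{k}\|^2\le\min_k\|\y-\bx_l^{k}\|^2+4R_0\delta$. Summing over $\y\in\mathcal{D}_l$ and $l=1,\dots,M$ gives
\begin{equation*}
\mathcal{F}(\bx_m)=\sum_{l=1}^{M}\sum_{\y\in\mathcal{D}_l}\min_{k}\|\y-\bx_m^{k}\|^2\le\sum_{l=1}^{M}\sum_{\y\in\mathcal{D}_l}\min_{k}\|\y-\bx_l^{k}\|^2+4R_0|\mathcal{D}|\,\delta,
\end{equation*}
and substituting the first inequality together with the value of $\delta$ produces the claimed bound. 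The main technical content lies in Lemma~\ref{lm:rhocons} (already established); the present argument is really just a bookkeeping exercise that stitches together the near-consensus estimate with the optimality of $\bx$ versus the consensus configuration $\hat{\vx}$.
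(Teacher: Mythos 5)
Your proposal is correct and follows essentially the same route as the paper's proof: both compare $\myQ^{\rho}(\bx)$ against the consensus configuration $\mathbf{1}_{M}\otimes\vx^{\ast}$ to get $\rho\,\myQ^{\rho}(\bx)\leq\mathcal{F}^{\ast}$, then use Lemma~\ref{lm:rhocons} together with the convex-hull bound $\|\bx_{m}^{k}\|\leq R_{0}$ to control the per-point perturbation $\|\y-\bx_{m}^{k}\|^{2}-\|\y-\bx_{l}^{k}\|^{2}$ by $4R_{0}\delta$ and sum over $\calD$. The only cosmetic difference is that you expand the squared-norm difference as an inner product while the paper factors it as a product of a sum and a difference of norms; the constants and the logic are identical.
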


\begin{proof}
Let $\bz$ be a global minimizer of~\eqref{Kmeans}, i.e., $\mathcal{F}(\bz)=\mathcal{F}^{\ast}$. Note that by definition of the cost functions
\begin{equation}
\label{lm:gloptgap2}
\mathcal{F}(\bz)=\rho\myQ^{\rho}(\bz,\cdots,\bz).
\end{equation}
(The agreement part of the cost in~\eqref{rel600} vanishes when all agents employ common cluster center estimates.) Since $\bx\in\mathcal{M}^{\rho}_{g}$ it then follows that
\begin{equation}
\label{lm:gloptgap3}
\myQ^{\rho}(\bx)\leq (1/\rho)\mathcal{F}(\bz).
\end{equation}
%Let $\underline{\bx}=\vecc_{k}(\underline{\bx}^{k})$ be the agent-wise average of $\bx$, i.e., for all $k$, $\underline{\bx}^{k} = (1/M).(\bx_{1}^{k}+\cdots+\bx_{M}^{k})$. We have, by arguments similar to the derivation in~\eqref{lm:gloptgap2}
%\begin{equation}
%\label{lm:gloptgap4}
%\mathcal{F}(\underline{\bx})=\rho\myQ^{\rho}(\underline{\bx},\cdots,\underline{\bx}).
%\end{equation}
For each $m$ and data point $\y\in\mathcal{D}_{m}$, let $k_{m,n}\in\{1,\cdots,K\}$ be such that
\begin{equation}
\label{lm:gloptgap5}
\|\y-\bx_{m}^{k_{m,n}}\|\leq\|\y-\bx_{m}^{k}\|~~~\forall k\in \{1,\cdots,K\}.
\end{equation}
Now fix $m_{0}\in\{1,\cdots,M\}$ and consider the $K$-tuple $\bx_{m_{0}}=\{\bx_{m_{0}}^{1},\cdots,\bx_{m_{0}}^{K}\}$. Noting that for all $m$ and $k$ (see Lemma~\ref{lm:rhocons})
\begin{equation}
\label{lm:gloptgap6}
\|\bx_{m}^{k}-\bx_{m_{0}}^{k}\|\leq \frac{4\sqrt{M}R_{0}|\mathcal{D}|}{\rho\lambda_{2}(L)},
\end{equation}
we have that
\begin{align}
\label{lm:gloptgap7}
&\|\y-\bx_{m_{0}}^{k_{m,n}}\|^{2}-\|\y-\bx_{m}^{k_{m,n}}\|^{2}\\
&=\left(\|\y-\bx_{m_{0}}^{k_{m,n}}\|+\|\y-\bx_{m}^{k_{m,n}}\|\right)\left(\|\y-\bx_{m_{0}}^{k_{m,n}}\|-\|\y-\bx_{m}^{k_{m,n}}\|\right)\\
&\leq\left(2\|\y+\|\bx_{m}^{k_{m,n}}\|+\|\bx_{m_{0}}^{k_{m,n}}\|\right)\left\|\bx_{m}^{k_{m,n}}-\bx_{m_{0}}^{k_{m,n}}\right\|\\
&\leq (4R_{0})\frac{4\sqrt{M}R_{0}|\mathcal{D}|}{\rho\lambda_{2}(L)}=\frac{16\sqrt{M}R^{2}_{0}|\mathcal{D}|}{\rho\lambda_{2}(L)},
\end{align}
where we also use the fact that $\bx_{m}^{k_{m,n}},\bx_{m_{0}}^{k_{m,n}}\in\cco(\mathcal{D})$ (see Lemma~\ref{lm:hull}). Summing over all $\y\in\mathcal{D}$ both sides of~\eqref{lm:gloptgap7}, we obtain the estimate
\begin{align}
\label{lm:gloptgap8}
\sum_{\y\in\mathcal{D}}\|\y-\bx_{m_{0}}^{k_{m,n}}\|^{2}\leq \sum_{\y\in\mathcal{D}}\|\y-\bx_{m}^{k_{m,n}}\|^{2}+\left(|\mathcal{D}|\right)\frac{16\sqrt{M}R^{2}_{0}|\mathcal{D}|}{\rho\lambda_{2}(L)}\\
=\sum_{m=1}^{M}\sum_{\y\in\mathcal{D}_{m}}\min_{k=1,\cdots,K}\|\y-\bx_{m}^{k}\|^{2}+\frac{16\sqrt{M}R^{2}_{0}|\mathcal{D}|^{2}}{\rho\lambda_{2}(L)}.
\end{align}
Noting that
\begin{equation}
\label{lm:gloptgap9}
\mathcal{F}(\bx_{m_{0}})\leq \sum_{\y\in\mathcal{D}}\|\y-\bx_{m_{0}}^{k_{m,n}}\|^{2}
\end{equation}
and
\begin{equation}
\label{lm:gloptgap10}
\sum_{m=1}^{M}\sum_{\y\in\mathcal{D}_{m}}\min_{k=1,\cdots,K}\|\y-\bx_{m}^{k}\|^{2}\leq\rho\myQ^{\rho}(\bx),
\end{equation}
we have by~\eqref{lm:gloptgap8}
\begin{equation}
\label{lm:gloptgap11}
\mathcal{F}(\bx_{m_{0}})\leq \rho\myQ^{\rho}(\bx)+\frac{16\sqrt{M}R^{2}_{0}|\mathcal{D}|^{2}}{\rho\lambda_{2}(L)}.
\end{equation}
By~\eqref{lm:gloptgap3} it follows that
\begin{equation}
\label{lm:gloptgap12}
\mathcal{F}(\bx_{m_{0}})\leq\mathcal{F}^{\ast}+\frac{16\sqrt{M}R^{2}_{0}|\mathcal{D}|^{2}}{\rho\lambda_{2}(L)}.
\end{equation}
Since~\eqref{lm:gloptgap12} holds for all $m_{0}\in\{1,\cdots,M\}$, the desired assertion follows.
\end{proof}

We now prove Theorem \ref{lm:gloptconv}.
\begin{proof}[\textbf{Proof of Theorem \ref{lm:gloptconv}}]
Note that the set $\mathcal{Z}_{g}$ is compact and the sequence $\{\bx^{\rho}_{m}\}_{\rho\in\N_{+}}$, for each $m$, is bounded (see Lemma~\ref{lm:hull}). It then suffices to show that each limit point $\overline{\mathbf{x}}_{m}$ of $\{\bx^{\rho}_{m}\}_{\rho\in\N_{+}}$ belongs to $\mathcal{Z}_{g}$.

To this end, let $\{\bx^{\rho_{s}}_{m}\}_{s\geq 0}$ be a convergent subsequence of $\{\bx^{\rho}_{m}\}$ such that $\bx^{\rho_{s}}_{m}\rightarrow\overline{\mathbf{x}}_{m}$ as $s\rightarrow\infty$. Now since $\bx^{\rho_{s}}\in\mathcal{M}_{g}^{\rho_{s}}$, by Lemma~\ref{lm:gloptgap} we have
\begin{equation}
\label{lm:gloptconv2}
\mathcal{F}(\bx^{\rho_{s}}_{m})\leq\mathcal{F}^{\ast}+\frac{16\sqrt{M}R^{2}_{0}|\mathcal{D}|^{2}}{\rho_{s}\lambda_{2}(L)}
\end{equation}
for all $s$, where $\mathcal{F}^{\ast}$ is the global minimum value of~\eqref{Kmeans}. Now note that $\mathcal{F}(\cdot)$ is a continuous function and hence $\mathcal{F}(\bx^{\rho_{s}}_{m})\rightarrow\mathcal{F}(\overline{\mathbf{x}}_{m})$ as $s\rightarrow\infty$. Since $\rho_{s}\rightarrow\infty$ as $s\rightarrow\infty$ (by definition of a subsequence), by taking the limit as $s\rightarrow\infty$ on~\eqref{lm:gloptconv2} we obtain
\begin{equation}
\label{lm:gloptconv3}
\mathcal{F}(\overline{\mathbf{x}}_{m})\leq \mathcal{F}^{\ast}.
\end{equation}
Since the $K$-tuple $\overline{\mathbf{x}}_{m}=\{\overline{\mathbf{x}}_{m}^{1},\cdots,\overline{\mathbf{x}}_{m}^{K}\}$ is a feasible solution for the minimization formulation~\eqref{Kmeans}, we conclude that $\mathcal{F}(\overline{\mathbf{x}}_{m})=\mathcal{F}^{\ast}$ and $\overline{\mathbf{x}}_{m}$ is a global minimizer of~\eqref{Kmeans}, i.e., $\overline{\mathbf{x}}_{m}\in\mathcal{Z}_{g}$. We have thus shown that each limit point of the sequence $\{\bx^{\rho}_{m}\}_{\rho\in\N_{+}}$ belongs to $\mathcal{Z}_{g}$ which establishes the assertion.
\end{proof}

Finally, we note that using similar arguments Lemma~\ref{lm:gloptconv} can be strengthened to achieve the following uniform convergence property.
\begin{corollary}
\label{corr:gloptconv}
Let Assumptions \ref{ass:data_size}, \ref{ass:comm-graph}, and \ref{ass-connected} hold. Then, we have that
\begin{equation}
\label{corr:gloptconv1}
\lim_{\rho\rightarrow\infty}\sup_{\bx^{\rho}\in\mathcal{M}_{g}^{\rho}}\max_{m=1,\cdots,M}d\left(\bx_{m}^{\rho},\mathcal{Z}_{g}\right)=0,
\end{equation}
where for each $\rho$, $\bx^{\rho}\in\mathcal{M}_{g}^{\rho}$ and $m$, the quantity $\bx_{m}^{\rho}$ denotes the $K$-tuple $\{\bx_{m}^{\rho,1},\cdots,\bx_{m}^{\rho,K}\}$ of cluster center estimates at an agent $m$.

In particular, we have that, for $\Vap>0$, there exists $\rho_{\Vap}\doteq\rho_{\Vap}(\mathcal{D},\mathcal{G})$, a function of the data set $\mathcal{D}$ and the inter-agent communication topology $G$ only, such that
\begin{equation}
\label{corr:gloptconv2}
d\left(\bx_{m}^{\rho},\mathcal{Z}_{g}\right)\leq\Vap
\end{equation}
for all $m$, $\rho\geq\rho_{\Vap}$ and $\bx^{\rho}\in\mathcal{M}_{g}^{\rho}$.
\end{corollary}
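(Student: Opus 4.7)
The plan is to argue by contradiction, exploiting the fact that the quantitative bound in Lemma~\ref{lm:gloptgap} is already uniform over $\bx^\rho \in \calM^\rho_g$: it depends only on the dataset, the graph, and $\rho$, and not on the particular choice of global minimizer. This observation is what makes the strengthening to uniform convergence essentially free.

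Suppose, for contradiction, that~\eqref{corr:gloptconv1} fails. Then there exist $\Vap_0 > 0$, a sequence $\rho_s \to \infty$, elements $\bx^{\rho_s} \in \mathcal{M}_{g}^{\rho_s}$, and indices $m_s \in \{1,\ldots,M\}$ such that
\[
d\left(\bx_{m_s}^{\rho_s}, \mathcal{Z}_g\right) \geq \Vap_0 \quad \text{for all } s.
\]
Since the agent index ranges over the finite set $\{1,\ldots,M\}$, by passing to a subsequence I may assume $m_s = m_0$ for some fixed $m_0$ and all $s$. By Lemma~\ref{lm:hull}, each component $\bx_{m_0}^{\rho_s,k}$ lies in the compact set $\cco(\mathcal{D})$, so the $K$-tuple sequence $\{\bx_{m_0}^{\rho_s}\}_{s \geq 0}$ is contained in a compact subset of $\R^{Kp}$. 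Extracting a further subsequence, I may assume $\bx_{m_0}^{\rho_s} \to \overline{\mathbf{x}}_{m_0}$ as $s \to \infty$.

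The next step is to apply the uniform bound from Lemma~\ref{lm:gloptgap}, which, applied to the global minimizer $\bx^{\rho_s}$ and the agent $m_0$, gives
\[
\mathcal{F}(\bx_{m_0}^{\rho_s}) \leq \mathcal{F}^{\ast} + \frac{16\sqrt{M}R^{2}_{0}|\mathcal{D}|^{2}}{\rho_s \lambda_{2}(L)}.
\]
Since $\rho_s \to \infty$ and $\mathcal{F}$ is continuous on $\R^{Kp}$, passing to the limit yields $\mathcal{F}(\overline{\mathbf{x}}_{m_0}) \leq \mathcal{F}^{\ast}$. As $\overline{\mathbf{x}}_{m_0}$ is a feasible $K$-tuple for the classical formulation~\eqref{Kmeans}, this forces $\mathcal{F}(\overline{\mathbf{x}}_{m_0}) = \mathcal{F}^{\ast}$, i.e., $\overline{\mathbf{x}}_{m_0} \in \mathcal{Z}_g$. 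But then $d(\bx_{m_0}^{\rho_s}, \mathcal{Z}_g) \to 0$ by continuity of $\vx \mapsto d(\vx, \mathcal{Z}_g)$, which contradicts the standing lower bound $d(\bx_{m_0}^{\rho_s}, \mathcal{Z}_g) \geq \Vap_0$. This establishes~\eqref{corr:gloptconv1}.

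The second assertion~\eqref{corr:gloptconv2} is a direct reformulation of~\eqref{corr:gloptconv1}: for each $\Vap > 0$, the vanishing of the supremum means there exists $\rho_\Vap$, depending only on $\mathcal{D}$ and $G$ (which determine $R_0$, $|\mathcal{D}|$, $M$, and $\lambda_2(L)$ in the underlying bound), such that the supremum is at most $\Vap$ for all $\rho \geq \rho_\Vap$. I do not expect any real obstacle here; the whole argument is a routine compactness and continuity argument riding on the quantitative estimate already proved in Lemma~\ref{lm:gloptgap}, and the structure mirrors the proof of Theorem~\ref{lm:gloptconv} with the single refinement that the subsequence is chosen jointly over $(\rho, \bx^\rho, m)$ rather than only over $\rho$.
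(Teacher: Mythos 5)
Your proposal is correct and follows essentially the route the paper intends: the paper gives no explicit proof of this corollary, remarking only that it follows ``using similar arguments'' to Theorem~\ref{lm:gloptconv}, and your contradiction argument is precisely that proof uniformized --- the key observations being that the bound in Lemma~\ref{lm:gloptgap} is independent of the particular minimizer $\bx^\rho\in\calM^\rho_g$ and that Lemma~\ref{lm:hull} confines all such minimizers to the fixed compact set $\cco(\calD)$, so the subsequence extraction can be done jointly over $(\rho,\bx^\rho,m)$. The only cosmetic point is that the negation of~\eqref{corr:gloptconv1} strictly yields points within $\Vap_0/2$ of the supremum rather than attaining it, which your argument absorbs trivially.
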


\section{Conclusions}
The paper considered the problem of $K$-means clustering in networked IoT-type settings where data is distributed among nodes of the network. The networked $K$-means ($NK$-means) algorithm was proposed as a decentralized method for computing Lloyd’s minima in such settings. Formal convergence guarantees for the $NK$-means algorithm were presented in Theorems \ref{lm:conv_to_Z}--\ref{th:conv} and Corollary \ref{cor:finite-rho}.
%The set of limit points of the $NK$-means algorithm is the set of \emph{generalized} Lloyd’s minima which can be made as close as desired to the set of classical Lloyd’s minima through appropriate choice of the design parameter $\rho$.

The present work focused on $K$-means clustering, i.e., clustering with an $\ell_2$ distortion. Future work may consider extensions of the techniques considered here to clustering with $\ell_1$ distortion (i.e., $K$-medians clustering) which may be more stable to outliers, or, more generally, clustering with $\ell_p$ distortions or arbitrary convex distortions.

The present work did not address the issue of initialization or its effects on the obtained clustering. In general, the efficiency of the clustering obtained by Lloyd’s algorithm (and related procedures) can be sensitive to the initial choice of cluster heads \citep{milligan1980examination}.
%This issue can be mitigated by judicious choice of the initial cluster heads.
Extensions of the current work may consider techniques for initialization to ensure efficient clusterings are obtained (e.g., variants of \citep{arthur2007k,ostrovsky2006effectiveness,bahmani2012scalable} adapted to the network-based framework). Other extensions include specializing to settings where the data and/or statistics are structured in order to achieve stronger guarantees (e.g.,~\citep{pollard1981strong,serinko1992weak}), as well as extensions to obtain other types of minima such as solutions in the sense of Hartigan \citep{telgarsky2010hartigan}.

\section*{Appendix A}
\noindent \textbf{Glossary of Frequently used Notation.}
For convenience, this appendix collects symbols used frequently throughout the paper.
\begin{itemize}
\item $M$ is the number of agents
\item $\calD_m$ is the data set of agent $m$
\item $\calD$ is the joint dataset $\calD = \calD_1\cup\cdots\cup \calD_M$
\item $N_m = |\calD_m|$
\item $N = |\calD|$
\item $\C_m$ is the set of size-$K$ partitions of $\calD_m$
\item $\Omega_m$ is the (communication) neighborhood of agent $m$ relative to the graph $G$
\item $\calF(\vz)$, $\calH(\vz,\calP)$ are cost functions associated with the (centralized) $K$-means formulation
\item $\myQ^\rho(\vx)$, $J^\rho(\vx,\calC)$ are cost functions associated with the generalized (distributed) $NK$-means formulation
\item $\calL$ is the set of partitions and cluster heads corresponding to Lloyd's minima
\item $\calZ$ is the set of cluster heads corresponding to Lloyd's minima
\item $\calL_g$ is the set of global minima of $\calH(\vz,\calP)$
\item $\calZ_g$ is the set of global minima of $\calF(\vz)$

\item $\calJ^\rho$ is set of generalized Lloyd's minima of $J^\rho(\vx,\calC)$
\item $\calM^\rho = \{\vx = \{\vx^1,\ldots,\vx^K\}:~ (\vx,\calC)\in\calJ^\rho \mbox{ for some } \calC\}$
%of analogous to $\calJ^\rho$, sans partitions (generalized minima of $\myQ^\rho$)
\item $\calJ^\rho_g$ is the set of global minima of $J^\rho(\vx,\calC)$
\item $\calM^\rho_g$ is the set of global minima of $\myQ^\rho(\vx)$
\item $\oM^{\rho}=\left\{\bx\in\mathcal{M}^{\rho}~:~\bx_{m}^{k}\in\cco(\mathcal{D})~~\forall m,k\right\}$
\item $\OZ=\left\{\vx\in\mathcal{Z}~:~\vx^{k}\in\cco(\mathcal{D})~~\forall k\right\}$
\end{itemize}

\vskip 0.2in
%\bibliography{sample}
\bibliographystyle{abbrvnat}
\bibliography{CentralBib,Refs-NSF_Data_Science,opt_refs,glrt,dsprt}

\begin{thebibliography}{77}
\providecommand{\natexlab}[1]{#1}
\providecommand{\url}[1]{\texttt{#1}}
\expandafter\ifx\csname urlstyle\endcsname\relax
  \providecommand{\doi}[1]{doi: #1}\else
  \providecommand{\doi}{doi: \begingroup \urlstyle{rm}\Url}\fi

\bibitem[Aloise et~al.(2009)Aloise, Deshpande, Hansen, and Popat]{aloise2009np}
D.~Aloise, A.~Deshpande, P.~Hansen, and P.~Popat.
\newblock {NP}-hardness of {E}uclidean sum-of-squares clustering.
\newblock \emph{Machine Learning}, 75\penalty0 (2):\penalty0 245--248, 2009.

\bibitem[Arthur and Vassilvitskii(2007)]{arthur2007k}
D.~Arthur and S.~Vassilvitskii.
\newblock {K}-means++: The advantages of careful seeding.
\newblock In \emph{Proceedings of the eighteenth annual ACM-SIAM Symposium on
  Discrete Algorithms}, pages 1027--1035. Society for Industrial and Applied
  Mathematics, 2007.

\bibitem[Aubin and Frankowska(2009)]{aubin2009set}
J.-P. Aubin and H.~Frankowska.
\newblock \emph{Set-valued analysis}.
\newblock Springer Science \& Business Media, 2009.

\bibitem[Awasthi et~al.(2017)Awasthi, Balcan, and White]{awasthi2017general}
P.~Awasthi, M.-F. Balcan, and C.~White.
\newblock General and robust communication-efficient algorithms for distributed
  clustering.
\newblock \emph{arXiv preprint arXiv:1703.00830}, 2017.

\bibitem[Bahmani et~al.(2012)Bahmani, Moseley, Vattani, Kumar, and
  Vassilvitskii]{bahmani2012scalable}
B.~Bahmani, B.~Moseley, A.~Vattani, R.~Kumar, and S.~Vassilvitskii.
\newblock Scalable k-means++.
\newblock \emph{Proceedings of the VLDB Endowment}, 5\penalty0 (7):\penalty0
  622--633, 2012.

\bibitem[Balcan et~al.(2013)Balcan, Ehrlich, and Liang]{balcan2013distributed}
M.-F. Balcan, S.~Ehrlich, and Y.~Liang.
\newblock Distributed $ k $-means and $ k $-median clustering on general
  topologies.
\newblock In \emph{Advances in Neural Information Processing Systems}, pages
  1995--2003, 2013.

\bibitem[Bandyopadhyay et~al.(2006)Bandyopadhyay, Giannella, Maulik, Kargupta,
  Liu, and Datta]{bandyopadhyay2006clustering}
S.~Bandyopadhyay, C.~Giannella, U.~Maulik, H.~Kargupta, K.~Liu, and S.~Datta.
\newblock Clustering distributed data streams in peer-to-peer environments.
\newblock \emph{Information Sciences}, 176\penalty0 (14):\penalty0 1952--1985,
  2006.

\bibitem[Bateni et~al.(2014)Bateni, Bhaskara, Lattanzi, and
  Mirrokni]{bateni2014distributed}
M.~H. Bateni, A.~Bhaskara, S.~Lattanzi, and V.~Mirrokni.
\newblock Distributed balanced clustering via mapping coresets.
\newblock In \emph{Advances in Neural Information Processing Systems}, pages
  2591--2599, 2014.

\bibitem[Bertsekas et~al.(1984)Bertsekas, Tsitsiklis, and
  Athans]{Bertsekas-survey}
D.~Bertsekas, J.N. Tsitsiklis, and M.~Athans.
\newblock Convergence theories of distributed iterative processes: A survey.
\newblock \emph{Technical Report for Information and Decision Systems,
  Massachusetts Inst. of Technology, Cambridge, MA}, 1984.

\bibitem[Boyd et~al.(2006)Boyd, Ghosh, Prabhakar, and Shah]{BoydGossip}
S.~Boyd, A.~Ghosh, B.~Prabhakar, and D.~Shah.
\newblock Randomized gossip algorithms.
\newblock \emph{IEEE Transactions on Information Theory}, 52\penalty0
  (6):\penalty0 2508--2530, June 2006.

\bibitem[Boyd et~al.(2011)Boyd, Parikh, Chu, Peleato, Eckstein,
  et~al.]{boyd2011distributed}
S.~Boyd, N.~Parikh, E.~Chu, B.~Peleato, J.~Eckstein, et~al.
\newblock Distributed optimization and statistical learning via the alternating
  direction method of multipliers.
\newblock \emph{Foundations and Trends in Machine learning}, 3\penalty0
  (1):\penalty0 1--122, 2011.

\bibitem[Chung and Graham(1997)]{chung1997spectral}
F.R.K. Chung and F.~C. Graham.
\newblock \emph{Spectral graph theory}.
\newblock Number~92. American Mathematical Society, 1997.

\bibitem[Cybenko(1989)]{cybenko89}
G.~V. Cybenko.
\newblock Dynamic load balancing for distributed memory multiprocessors.
\newblock \emph{Journal on Parallel and Distributed Computing}, 7:\penalty0
  279--301, 1989.

\bibitem[Datta et~al.(2006{\natexlab{a}})Datta, Bhaduri, Giannella, Wolff, and
  Kargupta]{datta2006distributed}
S.~Datta, K.~Bhaduri, C.~Giannella, R.~Wolff, and H.~Kargupta.
\newblock Distributed data mining in peer-to-peer networks.
\newblock \emph{IEEE Internet Computing}, 10\penalty0 (4):\penalty0 18--26,
  2006{\natexlab{a}}.

\bibitem[Datta et~al.(2006{\natexlab{b}})Datta, Giannella, and
  Kargupta]{datta2006k}
S.~Datta, C.~Giannella, and H.~Kargupta.
\newblock $k$-means clustering over a large, dynamic network.
\newblock In \emph{Proceedings of the 2006 SIAM International Conference on
  Data Mining}, pages 153--164. SIAM, 2006{\natexlab{b}}.

\bibitem[Datta et~al.(2009)Datta, Giannella, and
  Kargupta]{datta2009approximate}
S.~Datta, C.~Giannella, and H.~Kargupta.
\newblock Approximate distributed $k$-means clustering over a peer-to-peer
  network.
\newblock \emph{IEEE Transactions on Knowledge and Data Engineering},
  21\penalty0 (10):\penalty0 1372--1388, 2009.

\bibitem[DeGroot(1974)]{degroot1974reaching}
M.~H DeGroot.
\newblock Reaching a consensus.
\newblock \emph{Journal of the American Statistical Association}, 69\penalty0
  (345):\penalty0 118--121, 1974.

\bibitem[Di~Fatta et~al.(2013)Di~Fatta, Blasa, Cafiero, and
  Fortino]{di2013fault}
G.~Di~Fatta, F.~Blasa, S.~Cafiero, and G.~Fortino.
\newblock Fault tolerant decentralised $k$-means clustering for asynchronous
  large-scale networks.
\newblock \emph{Journal of Parallel and Distributed Computing}, 73\penalty0
  (3):\penalty0 317--329, 2013.

\bibitem[Dimakis et~al.(2010)Dimakis, Kar, Moura, Rabbat, and
  Scaglione]{dimakis2010gossip}
A.~G. Dimakis, S.~Kar, J.M.F. Moura, M.~G. Rabbat, and A.~Scaglione.
\newblock Gossip algorithms for distributed signal processing.
\newblock \emph{Proceedings of the IEEE}, 98\penalty0 (11):\penalty0
  1847--1864, 2010.

\bibitem[Evans(2011)]{evans2011internet}
D.~Evans.
\newblock The internet of things: How the next evolution of the internet is
  changing everything.
\newblock \emph{CISCO white paper}, 1\penalty0 (2011):\penalty0 1--11, 2011.

\bibitem[Friedman et~al.(2001)Friedman, Hastie, and
  Tibshirani]{friedman2001elements}
J.~Friedman, T.~Hastie, and R.~Tibshirani.
\newblock \emph{The elements of statistical learning}, volume~1.
\newblock Springer series in statistics New York, NY, USA:, 2001.

\bibitem[Har-Peled and Mazumdar(2004)]{har2004coresets}
S.~Har-Peled and S.~Mazumdar.
\newblock On coresets for $k$-means and $k$-median clustering.
\newblock In \emph{Proceedings of the thirty-sixth annual ACM symposium on
  Theory of Computing}, pages 291--300. ACM, 2004.

\bibitem[Hartenstein and Laberteaux(2008)]{hartenstein2008tutorial}
H.~Hartenstein and L.P. Laberteaux.
\newblock A tutorial survey on vehicular ad hoc networks.
\newblock \emph{IEEE Communications Magazine}, 46\penalty0 (6):\penalty0
  164--171, 2008.

\bibitem[Heinze et~al.(2016)Heinze, McWilliams, and
  Meinshausen]{heinze2016dual}
C.~Heinze, B.~McWilliams, and N.~Meinshausen.
\newblock Dual-loco: Distributing statistical estimation using random
  projections.
\newblock In \emph{Artificial Intelligence and Statistics}, pages 875--883,
  2016.

\bibitem[Hu et~al.(2015)Hu, Patel, Sabella, Sprecher, and Young]{hu2015mobile}
Y.~C. Hu, M.~Patel, D.~Sabella, N.~Sprecher, and V.~Young.
\newblock Mobile edge computing—a key technology towards 5g.
\newblock \emph{ETSI white paper}, 11\penalty0 (11):\penalty0 1--16, 2015.

\bibitem[Jagannathan and Wright(2005)]{jagannathan2005privacy}
G.~Jagannathan and R.~N. Wright.
\newblock Privacy-preserving distributed $k$-means clustering over arbitrarily
  partitioned data.
\newblock In \emph{Proceedings of the eleventh ACM SIGKDD international
  conference on Knowledge Discovery in Data Mining}, pages 593--599. ACM, 2005.

\bibitem[Jain(2010)]{jain2010data}
A.~K. Jain.
\newblock Data clustering: 50 years beyond $k$-means.
\newblock \emph{Pattern Recognition Letters}, 31\penalty0 (8):\penalty0
  651--666, 2010.

\bibitem[Jakovetic et~al.(2014)Jakovetic, Xavier, and Moura]{arxivVersion}
D.~Jakovetic, J.~Xavier, and J.~M.~F. Moura.
\newblock Fast distributed gradient methods.
\newblock \emph{IEEE Trans. Autom. Contr.}, 59\penalty0 (5):\penalty0
  1131--1146, May 2014.

\bibitem[Jakovetic et~al.(2018)Jakovetic, Bajovic, Sahu, and
  Kar]{jakovetic2018convergence}
D.~Jakovetic, D.~Bajovic, A.~K. Sahu, and S.~Kar.
\newblock Convergence rates for distributed stochastic optimization over random
  networks.
\newblock \emph{arXiv preprint arXiv:1803.07836}, 2018.

\bibitem[Jiang et~al.(2017)Jiang, Balu, Hegde, and
  Sarkar]{jiang2017collaborative}
Z.~Jiang, A.~Balu, C.~Hegde, and S.~Sarkar.
\newblock Collaborative deep learning in fixed topology networks.
\newblock In \emph{Advances in Neural Information Processing Systems}, pages
  5904--5914, 2017.

\bibitem[Kar and Moura(2013)]{kar2013consensus+}
S.~Kar and J.M.F. Moura.
\newblock Consensus+ innovations distributed inference over networks:
  Cooperation and sensing in networked systems.
\newblock \emph{IEEE Signal Processing Magazine}, 30\penalty0 (3):\penalty0
  99--109, 2013.

\bibitem[Kar et~al.(2012)Kar, Moura, and Ramanan]{KarMouraRamanan-Est-2008}
S.~Kar, J.~M.~F. Moura, and K.~Ramanan.
\newblock Distributed parameter estimation in sensor networks: Nonlinear
  observation models and imperfect communication.
\newblock \emph{IEEE Transactions on Information Theory}, 58\penalty0
  (6):\penalty0 3575--3605, June 2012.

\bibitem[Kar et~al.(2013)Kar, Moura, and Poor]{Kar-QD-learning-TSP-2012}
S.~Kar, J.~M.~F. Moura, and H.V. Poor.
\newblock {QD}-learning: A collaborative distributed strategy for multi-agent
  reinforcement learning through consensus+innovations.
\newblock \emph{IEEE Transactions on Signal Processing}, 61\penalty0
  (7):\penalty0 1848--1862, April 2013.

\bibitem[Kushner and Yin(1987)]{Kushner-dist}
H.J. Kushner and G.~Yin.
\newblock Asymptotic properties of distributed and communicating stochastic
  approximation algorithms.
\newblock \emph{{SIAM} J. Control Optim.}, 25\penalty0 (5):\penalty0
  1266--1290, Sept. 1987.

\bibitem[Lian et~al.(2017)Lian, Zhang, Zhang, Hsieh, Zhang, and
  Liu]{lian2017can}
X.~Lian, C.~Zhang, H.~Zhang, C.-J. Hsieh, W.~Zhang, and J.~Liu.
\newblock Can decentralized algorithms outperform centralized algorithms? a
  case study for decentralized parallel stochastic gradient descent.
\newblock In \emph{Advances in Neural Information Processing Systems}, pages
  5330--5340, 2017.

\bibitem[Lloyd(1982)]{lloyd1982least}
S.~Lloyd.
\newblock Least squares quantization in {PCM}.
\newblock \emph{IEEE Transactions on Information Theory}, 28\penalty0
  (2):\penalty0 129--137, 1982.

\bibitem[Lopes and Sayed(2008)]{Sayed-LMS}
C.~G. Lopes and A.~H. Sayed.
\newblock Diffusion least-mean squares over adaptive networks: Formulation and
  performance analysis.
\newblock \emph{IEEE Transactions on Signal Processing}, 56\penalty0
  (7):\penalty0 3122--3136, July 2008.

\bibitem[Ma et~al.(2015)Ma, Smith, Jaggi, Jordan, Richtarik, and
  Takac]{ma2015adding}
C.~Ma, V.~Smith, M.~Jaggi, M.~Jordan, P.~Richtarik, and M.~Takac.
\newblock Adding vs. averaging in distributed primal-dual optimization.
\newblock In \emph{International Conference on Machine Learning}, pages
  1973--1982, 2015.

\bibitem[Ma and Tak{\'a}{\v{c}}(2015)]{ma2015partitioning}
Chenxin Ma and Martin Tak{\'a}{\v{c}}.
\newblock Partitioning data on features or samples in communication-efficient
  distributed optimization?
\newblock \emph{arXiv preprint arXiv:1510.06688}, 2015.

\bibitem[Malkomes et~al.(2015)Malkomes, Kusner, Chen, Weinberger, and
  Moseley]{malkomes2015fast}
G.~Malkomes, M.~J. Kusner, W.~Chen, K.~Q. Weinberger, and B.~Moseley.
\newblock Fast distributed $k$-center clustering with outliers on massive data.
\newblock In \emph{Advances in Neural Information Processing Systems}, pages
  1063--1071, 2015.

\bibitem[McMahan et~al.(2016)McMahan, Moore, Ramage, Hampson,
  et~al.]{mcmahan2016communication}
H.~B. McMahan, E.~Moore, D.~Ramage, S.~Hampson, et~al.
\newblock Communication-efficient learning of deep networks from decentralized
  data.
\newblock \emph{arXiv preprint arXiv:1602.05629}, 2016.

\bibitem[Milligan(1980)]{milligan1980examination}
G.~W Milligan.
\newblock An examination of the effect of six types of error perturbation on
  fifteen clustering algorithms.
\newblock \emph{Psychometrika}, 45\penalty0 (3):\penalty0 325--342, 1980.

\bibitem[Mota et~al.(2013)Mota, Xavier, Aguiar, and Puschel]{mota2013d}
J.F.C. Mota, J.M.F. Xavier, P.M.Q. Aguiar, and M.~Puschel.
\newblock D-admm: A communication-efficient distributed algorithm for separable
  optimization.
\newblock \emph{IEEE Transactions on Signal Processing}, 61\penalty0
  (10):\penalty0 2718--2723, 2013.

\bibitem[Nedic and Olshevsky(2016)]{NedicStochasticPush}
A.~Nedic and A.~Olshevsky.
\newblock Stochastic gradient-push for strongly convex functions on
  time-varying directed graphs.
\newblock \emph{IEEE Transactions on Automatic Control}, 61\penalty0
  (12):\penalty0 3936--3947, Dec. 2016.

\bibitem[Nedic and Ozdaglar(2009)]{nedic2009distributed}
A.~Nedic and A.~Ozdaglar.
\newblock Distributed subgradient methods for multi-agent optimization.
\newblock \emph{IEEE Transactions on Automatic Control}, 54\penalty0
  (1):\penalty0 48--61, 2009.

\bibitem[Oliva et~al.(2013)Oliva, Setola, and
  Hadjicostis]{oliva2013distributed}
G.~Oliva, R.~Setola, and C.~N. Hadjicostis.
\newblock Distributed $k$-means algorithm.
\newblock \emph{arXiv preprint arXiv:1312.4176}, 2013.

\bibitem[Ostrovsky et~al.(2006)Ostrovsky, Rabani, Schulman, and
  Swamy]{ostrovsky2006effectiveness}
R.~Ostrovsky, Y.~Rabani, L.~J. Schulman, and C.~Swamy.
\newblock The effectiveness of {L}loyd-type methods for the $k$-means problem.
\newblock In \emph{Foundations of Computer Science, 2006. FOCS'06. 47th Annual
  IEEE Symposium on}, pages 165--176. IEEE, 2006.

\bibitem[Pollard(1981)]{pollard1981strong}
D.~Pollard.
\newblock Strong consistency of $k$-means clustering.
\newblock \emph{The Annals of Statistics}, pages 135--140, 1981.

\bibitem[Qin et~al.(2017)Qin, Fu, Gao, and Zheng]{qin2017distributed}
J.~Qin, W.~Fu, H.~Gao, and W.~X. Zheng.
\newblock Distributed $ k $-means algorithm and fuzzy $ c $-means algorithm for
  sensor networks based on multiagent consensus theory.
\newblock \emph{IEEE transactions on cybernetics}, 47\penalty0 (3):\penalty0
  772--783, 2017.

\bibitem[Rabbat and Nowak(2004)]{rabbat2004distributed}
M.~Rabbat and R.~Nowak.
\newblock Distributed optimization in sensor networks.
\newblock In \emph{Proceedings of the 3rd international symposium on
  Information processing in sensor networks}, pages 20--27. ACM, 2004.

\bibitem[Ram et~al.(2009)Ram, Nedic, and Veeravalli]{Ram-Nedich-Siam}
S.~S. Ram, A.~Nedic, and V.~V. Veeravalli.
\newblock Incremental stochastic subgradient algorithms for convex
  optimization.
\newblock \emph{SIAM Journal on Optimization}, 20\penalty0 (2):\penalty0
  691--717, June 2009.

\bibitem[Report(2012)]{house2012consumer}
White~House Report.
\newblock Consumer data privacy in a networked world: A framework for
  protecting privacy and promoting innovation in the global digital economy.
\newblock \emph{White House, Washington, DC}, pages 1--62, 2012.

\bibitem[Sahu and Kar(2016)]{sahu2015distributed}
A.~K. Sahu and S.~Kar.
\newblock Distributed sequential detection for {G}aussian shift-in-mean
  hypothesis testing.
\newblock \emph{IEEE Transactions on Signal Processing}, 64\penalty0
  (1):\penalty0 89--103, 2016.

\bibitem[Sahu et~al.(2018)Sahu, Jakovetic, Bajovic, and
  Kar]{sahu2018distributed}
A.~K. Sahu, D.~Jakovetic, D.~Bajovic, and S.~Kar.
\newblock Distributed zeroth order optimization over random networks: A
  {Kiefer-Wolfowitz} stochastic approximation approach.
\newblock In \emph{IEEE Conference on Decision and Control}, Miami, FL,
  December 2018.

\bibitem[Satyanarayanan(2017)]{satyanarayanan2017emergence}
M.~Satyanarayanan.
\newblock The emergence of edge computing.
\newblock \emph{Computer}, 50\penalty0 (1):\penalty0 30--39, 2017.

\bibitem[Schizas et~al.(2008)Schizas, Mateos, and Giannakis]{Giannakis-LMS}
I.D. Schizas, G.~Mateos, and G.B. Giannakis.
\newblock Stability analysis of the consensus-based distributed {LMS}
  algorithm.
\newblock In \emph{Proceedings of the 33rd International Conference on
  Acoustics, Speech, and Signal Processing}, pages 3289--3292, Las Vegas,
  Nevada, USA, April 1-4 2008.

\bibitem[Scutari et~al.(2017)Scutari, Facchinei, and
  Lampariello]{scutari2017parallel}
G.~Scutari, F.~Facchinei, and L.~Lampariello.
\newblock Parallel and distributed methods for constrained nonconvex
  optimization---{Part I}: Theory.
\newblock \emph{IEEE Transactions on Signal Processing}, 65\penalty0
  (8):\penalty0 1929--1944, 2017.

\bibitem[Selim and Ismail(1984)]{selim1984k}
S.~Z. Selim and M.~A. Ismail.
\newblock $k$-means-type algorithms: A generalized convergence theorem and
  characterization of local optimality.
\newblock \emph{IEEE Transactions on Pattern Analysis and Machine
  Intelligence}, \penalty0 (1):\penalty0 81--87, 1984.

\bibitem[Serinko and Babu(1992)]{serinko1992weak}
R.~J. Serinko and G.~J. Babu.
\newblock Weak limit theorems for univariate $k$-mean clustering under a
  nonregular condition.
\newblock \emph{Journal of Multivariate Analysis}, 41\penalty0 (2):\penalty0
  273--296, 1992.

\bibitem[Shi et~al.(2016)Shi, Cao, Zhang, Li, and Xu]{shi2016edge}
W.~Shi, J.~Cao, Q.~Zhang, Y.~Li, and L.~Xu.
\newblock Edge computing: Vision and challenges.
\newblock \emph{IEEE Internet of Things Journal}, 3\penalty0 (5):\penalty0
  637--646, 2016.

\bibitem[Shnayder et~al.(2004)Shnayder, Hempstead, Chen, Allen, and
  Welsh]{shnayder2004simulating}
V.~Shnayder, M.~Hempstead, B.-r. Chen, G.~W. Allen, and M.~Welsh.
\newblock Simulating the power consumption of large-scale sensor network
  applications.
\newblock In \emph{Proceedings of the 2nd international conference on Embedded
  networked sensor systems}, pages 188--200. ACM, 2004.

\bibitem[Sun et~al.(2016)Sun, Scutari, and Palomar]{sun2016distributed}
Y.~Sun, G.~Scutari, and D.~Palomar.
\newblock Distributed nonconvex multiagent optimization over time-varying
  networks.
\newblock In \emph{Signals, Systems and Computers, 2016 50th Asilomar
  Conference on}, pages 788--794. IEEE, 2016.

\bibitem[Tang et~al.(2018)Tang, Lian, Yan, Zhang, and Liu]{tang2018d}
H.~Tang, X.~Lian, M.~Yan, C.~Zhang, and J.~Liu.
\newblock $d^2$: Decentralized training over decentralized data.
\newblock \emph{arXiv preprint arXiv:1803.07068}, 2018.

\bibitem[Tatarenko and Touri(2017)]{tatarenko2017non}
T.~Tatarenko and B.~Touri.
\newblock Non-convex distributed optimization.
\newblock \emph{IEEE Transactions on Automatic Control}, 62\penalty0
  (8):\penalty0 3744--3757, 2017.

\bibitem[Tehrani et~al.(2014)Tehrani, Uysal, and
  Yanikomeroglu]{tehrani2014device}
M.~N. Tehrani, M.~Uysal, and H.~Yanikomeroglu.
\newblock Device-to-device communication in {5G} cellular networks: Challenges,
  solutions, and future directions.
\newblock \emph{IEEE Communications Magazine}, 52\penalty0 (5):\penalty0
  86--92, 2014.

\bibitem[Telgarsky and Vattani(2010)]{telgarsky2010hartigan}
M.~Telgarsky and A.~Vattani.
\newblock {Hartigan’s Method}: $k$-means clustering without {V}oronoi.
\newblock In \emph{Proceedings of the Thirteenth International Conference on
  Artificial Intelligence and Statistics}, pages 820--827, 2010.

\bibitem[Towfic et~al.(2016)Towfic, Chen, and Sayed]{SayedStochasticOpt}
Z.~J. Towfic, J.~Chen, and A.~H. Sayed.
\newblock Excess-risk of distributed stochastic learners.
\newblock \emph{IEEE Transactions on Information Theory}, 62\penalty0 (10),
  Oct. 2016.

\bibitem[Tsianos and Rabbat(2012)]{RabbatDistributedStronglyCVX}
K.~Tsianos and M.~Rabbat.
\newblock Distributed strongly convex optimization.
\newblock \emph{50th Annual Allerton Conference on Communication, Control, and
  Computing}, Oct. 2012.

\bibitem[Tsitsiklis et~al.(1986)Tsitsiklis, Bertsekas, and
  Athans]{tsitsiklisbertsekasathans86}
J.~N. Tsitsiklis, D.~P. Bertsekas, and M.~Athans.
\newblock Distributed asynchronous deterministic and stochastic gradient
  optimization algorithms.
\newblock \emph{IEEE Transactions on Automatic Control}, 31\penalty0
  (9):\penalty0 803--812, September 1986.

\bibitem[Tsitsiklis(1984)]{tsitsiklisphd84}
J.N. Tsitsiklis.
\newblock \emph{Problems in decentralized decision making and computation}.
\newblock {PhD Thesis}, Massachusetts Institute of Technology, Cambridge, MA,
  1984.

\bibitem[Vanli et~al.(2017)Vanli, Sayin, and Kozat]{Kozat}
N.~D. Vanli, M.~O. Sayin, and S.~S. Kozat.
\newblock Stochastic subgradient algorithms for strongly convex optimization
  over distributed networks.
\newblock \emph{IEEE Transactions on network science and engineering},
  4\penalty0 (4):\penalty0 248--260, Oct.-Dec. 2017.

\bibitem[Wu et~al.(2008)Wu, Kumar, Quinlan, Ghosh, Yang, Motoda, McLachlan, Ng,
  Liu, Philip, et~al.]{wu2008top}
X.~Wu, V.~Kumar, J.~R. Quinlan, J.~Ghosh, Q.~Yang, H.~Motoda, G.~J. McLachlan,
  A.~Ng, B.~Liu, S.~Y. Philip, et~al.
\newblock Top 10 algorithms in data mining.
\newblock \emph{Knowledge and Information Systems}, 14\penalty0 (1):\penalty0
  1--37, 2008.

\bibitem[Xia et~al.(2012)Xia, Yang, Wang, and Vinel]{xia2012internet}
F.~Xia, L.~T. Yang, L.~Wang, and A.~Vinel.
\newblock Internet of things.
\newblock \emph{International Journal of Communication Systems}, 25\penalty0
  (9):\penalty0 1101--1102, 2012.

\bibitem[Yick et~al.(2008)Yick, Mukherjee, and Ghosal]{yick2008wireless}
J.~Yick, B.~Mukherjee, and D.~Ghosal.
\newblock Wireless sensor network survey.
\newblock \emph{Computer Networks}, 52\penalty0 (12):\penalty0 2292--2330,
  2008.

\bibitem[Yu et~al.(2004)Yu, Krishnamachari, and Prasanna]{yu2004energy}
Y.~Yu, B.~Krishnamachari, and V.~K. Prasanna.
\newblock Energy-latency tradeoffs for data gathering in wireless sensor
  networks.
\newblock In \emph{IEEE INFOCOM 2004}, volume~1. IEEE, 2004.

\bibitem[Yuan et~al.(2018)Yuan, Hong, Ho, and Jiang]{DistributedMirrorDescent}
D.~Yuan, Y.~Hong, D.W.C. Ho, and G.~Jiang.
\newblock Optimal distributed stochastic mirror descent for strongly convex
  optimization.
\newblock \emph{Automatica}, 90:\penalty0 196--203, April 2018.

\bibitem[Zhang et~al.(2013)Zhang, Duchi, and Wainwright]{zhang2013divide}
Y.~Zhang, J.~Duchi, and M.~Wainwright.
\newblock Divide and conquer kernel ridge regression.
\newblock In \emph{Conference on Learning Theory}, pages 592--617, 2013.

\end{thebibliography}

\end{document}